\def\eqref#1{equation~\ref{#1}}
\def\1{\bm{1}}
\DeclareMathAlphabet{\mathsfit}{\encodingdefault}{\sfdefault}{m}{sl}
\SetMathAlphabet{\mathsfit}{bold}{\encodingdefault}{\sfdefault}{bx}{n}
\newcommand{\KL}{D_{\mathrm{KL}}}
\definecolor{mycolor}{RGB}{0,128,255}
\newcommand{\BE}{\mathbb{E}}
\newcommand{\mc}{\mathcal}
\newcommand{\norm}[1]{\left\lVert#1\right\rVert}
\newtheorem{proposition}{Proposition}[section]
\theoremstyle{definition}
\newcommand{\para}[1]{\textbf{#1}}
\newcommand{\ourmethod}{VIP }
\newcommand{\ourmethodnospace}{VIP}
\definecolor{light-gray}{gray}{0.9}
\newcommand{\jasonedit}[1]{\textcolor{black}{#1}}
\newcommand{\rebuttal}[1]{\textcolor{black}{#1}}
\newcommand{\cameraready}[1]{\textcolor{black}{#1}}
\title{Towards Universal Visual Reward and Representation via Value-Implicit Pre-Training}
\author{%
  \hspace{1cm} Yecheng Jason Ma\thanks{Corresponding author: \texttt{jasonyma@seas.upenn.edu}. $\dagger$ equal advising, randomized order.}~~$^{2}$, Shagun Sodhani$^1$, Dinesh Jayaraman$^2$,
Osbert Bastani$^2$,\\ 
\vspace{0.2cm}
\hspace{4cm} \{\textbf{Vikash Kumar}$\dagger$$^1$, \textbf{Amy Zhang}$\dagger$$^1$\}\\
\vspace{0.2cm}
 \hspace{3.8cm} FAIR, Meta AI$^1$, University of Pennsylvania$^2$\\
 \hspace{3cm} \texttt{\url{https://sites.google.com/view/vip-rl}}
}
\begin{document}

\maketitle

\begin{abstract}
Reward and representation learning are two long-standing challenges for learning an expanding set of robot manipulation skills from sensory observations. Given the inherent cost and scarcity of in-domain, task-specific robot data, learning from large, diverse, offline human videos has emerged as a promising path towards acquiring a generally useful visual representation for control; however, how these human videos can be used for general-purpose reward learning remains an open question. We introduce $\textbf{V}$alue-$\textbf{I}$mplicit $\textbf{P}$re-training (VIP), a self-supervised pre-trained visual representation capable of generating dense and smooth reward functions for unseen robotic tasks. VIP casts representation learning from human videos as an $\textit{offline goal-conditioned reinforcement learning}$ problem and derives a self-supervised 
goal-conditioned value-function objective that does not depend on actions, enabling pre-training on unlabeled human videos. Theoretically, VIP can be understood as a novel $\textit{implicit}$ time contrastive objective that generates a temporally smooth embedding, enabling the value function to be implicitly defined via the embedding distance, which can then be used to construct the reward function for any goal-image specified downstream task. Trained on large-scale Ego4D human videos and without any fine-tuning on in-domain, task-specific data, VIP can provide dense visual reward for an extensive set of simulated and $\textbf{real-robot}$ tasks, enabling diverse reward-based visual control methods and outperforming all prior pre-trained representations. Notably, VIP can enable simple, $\textit{few-shot}$ offline RL on a suite of real-world robot tasks with as few as 20 trajectories.
\end{abstract}

\doparttoc %
\faketableofcontents %

\section{Introduction} 

A long-standing challenge in robot learning is to develop robots that can learn a diverse and expanding set of manipulation skills from sensory observations (e.g., vision). This hope of developing general-purpose robots demands \textit{scalable} and \textit{generalizable} representation learning and reward learning to provide effective task representation and specification for downstream policy learning. %
Inspired by pre-training successes in computer vision (CV)~\citep{he2020momentum, he2022masked} and natural language processing (NLP)~\citep{devlin2018bert, radford2019language, radford2021learning}, pre-training visual representations on out-of-domain natural and human data~\citep{deng2009imagenet, grauman2022ego4d} has emerged as an effective solution for acquiring a general visual representation for robotic manipulation~\citep{shah2021rrl, parisi2022unsurprising, nair2022r3m, xiao2022masked} 
This paradigm is favorable to the traditional approach of in-domain representation learning because it does not require any intensive task-specific data collection or representation fine-tuning, and a single fixed representation can be used for a variety of unseen robotic domains and tasks~\citep{parisi2022unsurprising}. 

A key unsolved problem to pre-training for robotic control is the challenge of reward specification. Unlike simulated environments, real-world robotics tasks do not come with privileged environment state information or a well-shaped reward function defined over this state space. Prior pre-trained representations for control demonstrate results in only visual reinforcement learning (RL) in simulation, assuming access to a well-shaped dense reward function~\citep{shah2021rrl, xiao2022masked}, or visual imitation learning (IL) from demonstrations~\citep{parisi2022unsurprising, nair2022r3m}. In either case, substantial engineering effort is required for learning each new task. Instead, a simple and general way of specifying real-world manipulation tasks is by providing a goal image~\citep{andrychowicz2017hindsight, pathak2018zero} that captures the desired visual changes to the environment. However, as we demonstrate in our experiments, existing pre-trained visual representations do not produce effective reward functions in the form of embedding distance to the goal image, despite their effectiveness as pure visual encoders. 
Given that these models are already some of the most powerful models derived from computer vision, it begs the pertinent question of whether a \textit{universal} visual reward function learned entirely from out-of-domain data is even possible.

In this paper, we show that such a general reward model can indeed be derived from a pre-trained visual representation, and we acquire this representation by treating representation learning from diverse human-video data as a big \textit{offline goal-conditioned reinforcement learning} problem. Our idea philosophically diverges from all prior works: instead of taking what worked the best for CV tasks and ``hope for the best'' in visual control, we propose a more principled approach of using reinforcement learning itself as a pre-training mechanism for reinforcement learning. Now, this formulation certainly seems impractical at first because human videos do not contain any action information for policy learning. Our key insight is that instead of solving the impossible \textit{primal} problem of direct policy learning from out-of-domain, action-free videos, we can instead solve the \textit{Fenchel dual} problem of goal-conditioned value function learning. This dual value function, as we will show, can be trained without actions in an entirely self-supervised manner, making it suitable for pre-training on (out-of-domain) videos without robot action labels.

Theoretically, we show that this dual objective amounts to a novel form of \textit{implicit} time contrastive learning, which attracts the representations of the initial and goal frame in the same trajectory, while implicitly repelling the representations of intermediate frames via recursive one-step temporal-difference minimization. These properties enable the representation to capture long-range temporal dependencies over distant task frames and inject local temporal smoothness over neighboring frames, making for smooth embedding distances that we show are the key ingredient of an effective reward function. This contrastive lens, importantly, enables the value function to be implicitly defined as a similarity metric in the embedding space, resulting in our simple final algorithm,  \textbf{V}alue-\textbf{I}mplicit \textbf{P}re-training (\ourmethodnospace); see Fig.~\ref{figure:vip-concept-figure} for an overview.

\begin{figure*}[t!]
\centering 
\includegraphics[width=\textwidth]{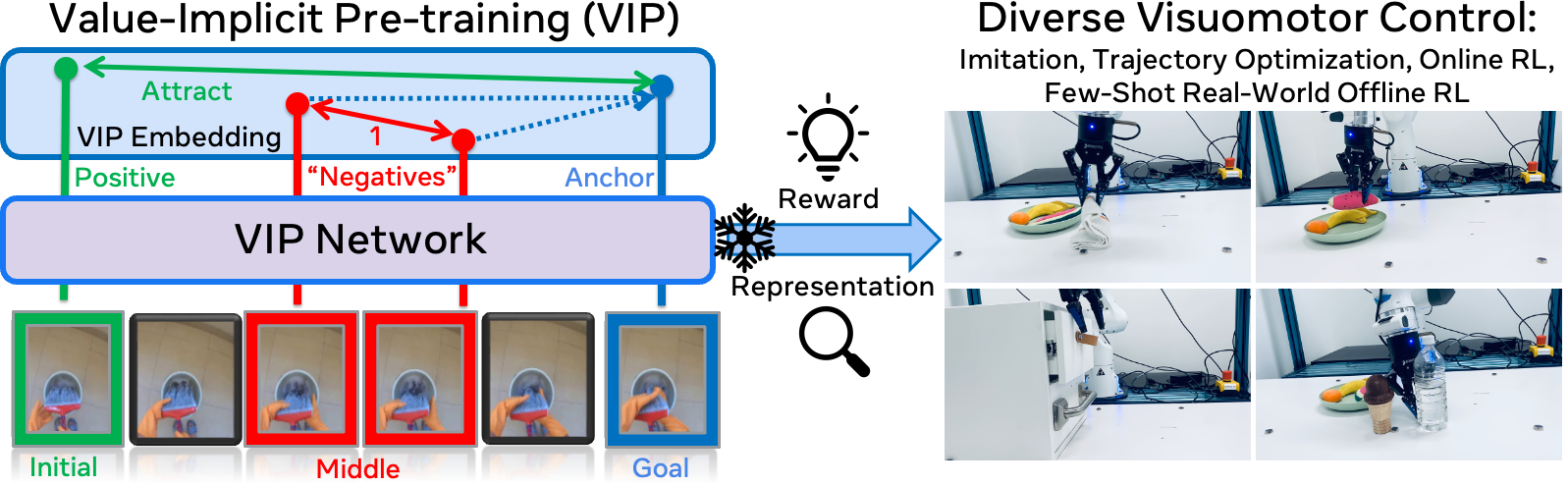}
\caption{\textbf{Value-Implicit Pre-training (VIP)}. Pre-trained on large-scale, in-the-wild human videos, frozen VIP network can provide visual reward and representation for downstream unseen robotics tasks and enable diverse visuomotor control strategies without any task-specific fine-tuning.}
\vspace{-0.5cm}
\label{figure:vip-concept-figure}
\end{figure*}

Trained on the large-scale, in-the-wild Ego4D human video dataset~\citep{grauman2022ego4d} using a simple sparse reward, VIP is able to capture a general notion of goal-directed task progress that makes for effective reward-specification for unseen robot tasks specified via goal images. On an extensive set of simulated and real-robot tasks, VIP's visual reward significantly outperforms those of prior pre-trained representations on a diverse set of reward-based policy learning paradigms.
Coupled with a standard trajectory optimizer~\citep{williams2017model}, VIP can solve $\approx30\%$ of the tasks without any task-specific \jasonedit{hyperparameter or representation} fine-tuning and is the only representation that enables non-trivial progress on the set of more difficult tasks. \jasonedit{Given more optimization budget, VIP's performance can further improve up to $\approx45\%$, whereas other representations do worse due to reward hacking.} When serving as \textit{both} the visual representation and reward function for visual online RL, \ourmethod again significantly outperforms prior methods by a wide-margin and achieves 40\% aggregate success rate. To the best of our knowledge, these are the first demonstrations of a successful perceptual reward function learned using entirely out-of-domain data. Finally, we demonstrate that \ourmethod can enable real-world \textit{few-shot} offline RL with as few as 20 trajectories on a diverse suite of real-robot manipulation tasks, demonstrating for the first time that offline RL is possible in this low-data regime and paving the way towards truly scalable and autonomous robot learning.

\section{Related Work} 
We review relevant literature on (1) Out-of-Domain Representation Pre-Training for Control, (2) Perceptual Reward Learning from Human Videos, and (3) Goal-Conditioned RL as Representation Learning. Due to space constraint, the latter two are included in App.~\ref{appendix:extended-related-work}. 

\para{Out-of-Domain Representation Pre-Training for Control.}
Bootstrapping visual control using frozen representations pre-trained on out-of-domain non-robot data is a nascent field that has seen fast progress over the past year. \citet{shah2021rrl} demonstrates that pre-trained ResNet~\citep{he2016deep} representation on ImageNet~\citep{deng2009imagenet} serves as effective visual backbone for simulated dexterous manipulation RL tasks. \citet{parisi2022unsurprising} finds ResNet models trained with unsupervised objectives, such as momentum contrastive learning (MOCO)~\citep{he2020momentum}, to surpass supervised objectives (e.g, image classification) for both visual navigation and control tasks. \citet{xiao2022masked} demonstrates that masked-autoencoder~\citep{he2022masked} trained on diverse video data~\citep{goyal2017something, Shan20} can be an effective visual embedding for online RL. The closest work to ours is R3M~\citet{nair2022r3m}, which is also pre-trained on the Ego4D dataset and attempts to capture temporal information in the videos by using time-contrastive learning~\citep{sermanet2018time}; \jasonedit{whereas VIP is fully self-supervised, R3M additionally requires video textual descriptions to align its representation.} \jasonedit{These prior works primarily re-purpose existing objectives and models for visual control and do not address the reward specification challenge.} In contrast, \ourmethod is the first to propose a novel RL-based objective for out-of-domain pre-training and is capable of producing generalizable dense reward signals \jasonedit{that enable several new visuomotor control strategies that have not been demonstrated in this setting before.}

\section{Problem Setting and Background} 
In this section, we describe our problem setting of out-of-domain pre-training and provide formalism for downstream representation evaluation. Additional background on goal-conditioned reinforcement learning and contrastive learning is included in App.~\ref{appendix:additional-background}.

\para{Out-of-Domain Pre-Training Visual Representation.} We assume access to a training set of video data $D = \{v_i := (o_1^i,...,o_{h_i}^i)\}_{i=1}^N$, where each $o \in \jasonedit{O:=}\mathbb{R}^{H \times W \times 3}$ is a raw RGB image; note that this formalism also captures standard image datasets (e.g., ImageNet), if we take $h_i=1$ for all $v_i$. Like prior works, we assume $D$ to be out-of-domain and does not include any robot task or domain-specific data. A learning algorithm $\mathcal{A}$ ingests this training data and outputs a visual encoder $\phi := \mathcal{A}(D) : \mathbb{R}^{H \times W \times 3} \rightarrow K$, where $K$ is the embedding dimension.

\para{Representation Evaluation.} Given a choice of representation $\phi$, every evaluation task can be instantiated as a Markov decision process $\mathcal{M}(\phi) := (\phi(O),A, R(o_t,o_{t+1};\phi, g), T, \gamma, g)$, in which the state space is the induced space of observation embeddings, and the task is specified via a (set of) goal image(s) $g$. Specifically, for a given transition tuple $(o_t,o_{t+1})$, we define the reward to be the goal-embedding distance difference~\citep{lee2021generalizable, li2022phasic}: 
\begin{equation} 
\label{eq:embedding-reward}
\resizebox{\textwidth}{!}{$
R(o_t,o_{t+1};\phi, \{g\}) := \mathcal{S}_{\phi}(o_{t+1};g)-\mathcal{S}_{\phi}(o_t;g) := (1-\gamma)\mathcal{S}_{\phi}(o_{t+1};g) + \left(\gamma \mathcal{S}_{\phi}(o_{t+1};g) - \mathcal{S}_{\phi}(o_{t};g)\right),
$}
\end{equation}
where $\mathcal{S}_\phi$ is a distance function in the $\phi$-representation space; in this work, we set $\mathcal{S}_\phi(o_{t};g) := -\norm{\phi(o_t)-\phi(g)}_2$. This reward function can be interpreted as a raw embedding distance reward with a reward shaping~\citep{ng1999policy} term that encourages making progress towards the goal. This preserves the optimal policy but enables more efficient and robust policy learning.

Under this formalism, parameters of $\phi$ are frozen during policy learning (it is considered a part of the MDP), and we want to learn a policy $\pi: \mathbb{R}^K \rightarrow A$ that outputs an action based on the embedded observation $a \sim \pi(\phi(o))$. \rebuttal{Note that while our evaluation tasks are defined via visual goals, the policies do not explicit condition on a goal as in multi-goal RL~\citep{andrychowicz2017hindsight} setting because there is only one goal to be attempted per task.}

\section{Value-Implicit Pre-Training} 

In this section, we demonstrate how a self-supervised value-function objective can be derived from computing the dual of an offline RL objective on passive human videos (Section~\ref{section:algorithm-self-supervised}). Then, we show how this objective amounts to a novel implicit formulation of temporal contrastive learning (Section~\ref{section:theory-implicit-time-contrastive-learning}), which \cameraready{helps inducing a temporally smooth embedding favorable for downstream visual reward specification.} Finally, we leverage this contrastive interpretation to instantiate a simple implementation (<10 lines of PyTorch code) of our dual value objective that does not explicitly learn a value network (Section~\ref{section:practical-implementation}), culminating in our final algorithm, Value-Implicit Pre-training (VIP). 

\subsection{Foundation: Self-Supervised Value Learning from Human Videos}
\label{section:algorithm-self-supervised}
While human videos are out-of-domain data for robots, they are \textit{in-domain} for learning a goal-conditioned policy $\pi_H$ over human actions, $a^H \sim \pi^H(\phi(o) \mid \phi(g))$, for some human action space $A^H$. Therefore, given that human videos naturally contain goal-directed behavior, one reasonable idea of utilizing offline human videos for representation learning is to solve an offline goal-conditioned RL problem over the space of human policies and then extract the learned visual representation. To this end, we consider the following KL-regularized offline RL objective~\citep{nachum2019algaedice} for some to-be-specified reward $r(o,g)$: 
\begin{equation}
\label{eq:human-primal-problem}
    \max_{\pi_H, \phi} \mathbb{E}_{\pi^H}\left[\sum_t \gamma^t r(o;g)\right] - \KL(d^{\pi_H}(o,a^H;g) \| d^{D}(o,\tilde{a}^H;g)),
\end{equation}
where $d^{\pi_H}(o,a^H;g)$ is the distribution over observations and actions $\pi_H$ visits conditioned on $g$. Observe that a ``dummy'' action $\tilde{a}$ is added to every transition $(o^i_h, \tilde{a}^i_h, o^i_{h+1})$ in the dataset $D$ so that the KL regularization is well-defined, and $\tilde{a}^h_i$ can be thought of as the unobserved \textit{true} human action taken to transition from observation $o^i_h$ to $o^i_{h+1}$. While this objective is mathematically sound and encourages learning a conservative $\pi^H$, 
it is seemingly implausible because the offline dataset $D^H$ does not come with any action labels nor can $A^H$ be concretely defined in practice. However, what this objective does provide is an elegant \textit{dual} objective over a value function that does not depend on any action label in the offline dataset. In particular, leveraging the %
idea of Fenchel duality~\citep{rockafellar-1970a} from convex optimization, we have the following result:

\begin{proposition}
\label{proposition:dual-value-problem}
Under assumption of deterministic transition dynamics, the dual optimization problem of \eqref{eq:human-primal-problem} is 

\colorbox{light-gray}{\parbox{\textwidth}{
\begin{equation}
\label{eq:dual-value-problem}
\resizebox{\textwidth}{!}{$
    \max_\phi \min_{V}  \mathbb{E}_{p(g)}\left[(1-\gamma)\BE_{\mu_0(o;g)}[V(\phi(o);\phi(g))] + \log \BE_{(o,o';g)\sim D}\left[\mathrm{exp} \left(r(o,g) + \gamma V(\phi(o');\phi(g)) - V(\phi(o),\phi(g))\right)\right]\right],$}
\end{equation}}}

where $\mu_0(o;g)$ is the goal-conditioned initial observation distribution, and $D(o,o';g)$ is the goal-conditioned distribution of two consecutive observations in dataset $D$. 
\end{proposition}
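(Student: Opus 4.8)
The plan is to follow the occupancy-measure / Fenchel-duality route used in the DICE family of results (e.g. AlgaeDICE, which \eqref{eq:human-primal-problem} cites), specialized to the deterministic, action-free setting. I would carry out the entire derivation for a fixed goal $g$ and a fixed encoder $\phi$, deferring the outer $\mathbb{E}_{p(g)}$ and $\max_\phi$ to the very end, since $\phi$ is inert inside the inner RL problem and the goal-conditioned value function lets a single $V$ serve all goals, so $\min_V$ commutes out of $\mathbb{E}_{p(g)}$. The first step is to rewrite the primal entirely in terms of the normalized discounted state-action occupancy $d^{\pi_H}(o,a^H;g)$: the return becomes a linear functional $\langle d^{\pi_H}, r\rangle$ (up to the usual $1/(1-\gamma)$ normalization constant), and the regularizer is already $\KL(d^{\pi_H}\|d^D)$. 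The key move is that ranging over policies $\pi_H$ is equivalent to ranging over occupancies $d$ obeying the linear Bellman-flow constraints $\sum_{a} d(o,a) = (1-\gamma)\mu_0(o;g) + \gamma\sum_{\tilde o,\tilde a} P(o\mid \tilde o,\tilde a)\,d(\tilde o,\tilde a)$ for every $o$; this is what eliminates the explicit, and here unobservable, policy.

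Next I would attach a Lagrange multiplier $V(o)$ to each Bellman-flow constraint and form the Lagrangian. Because the objective is concave in $d$ (linear return minus convex KL) and the constraints are linear, strong duality holds and I may swap $\max_d$ with $\min_V$. Grouping the $d$-dependent terms yields the Bellman-residual form $\sum_{o,a} d(o,a)\,[\,r(o;g) + \gamma\,\mathbb{E}_{o'\sim P(\cdot\mid o,a)}V(o') - V(o)\,]$ together with the constraint-boundary term $(1-\gamma)\mathbb{E}_{\mu_0}[V]$. Here I invoke the deterministic-dynamics assumption: $\mathbb{E}_{o'\sim P(\cdot\mid o,a)}V(o') = V(o')$ for the unique successor $o'$, which is exactly what allows the next-state value to be read off from an observed consecutive pair rather than requiring a learned transition model.

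The remaining inner problem is $\max_d \big[\langle d, A_V\rangle - \KL(d\|d^D)\big]$ with $A_V(o,a) = r(o;g)+\gamma V(o')-V(o)$, which I would solve in closed form via the Gibbs variational principle (the Donsker--Varadhan conjugate of the KL), giving the log-partition function $\log \mathbb{E}_{(o,a)\sim d^D}[\exp(A_V(o,a))]$. The final cosmetic step is to observe that $A_V$ depends on the action only through the successor $o'$, so the expectation over $(o,a)\sim d^D$ collapses to one over consecutive observation pairs $(o,o';g)\sim D$ — precisely where the ``dummy action'' device pays off, since no action label survives. Reinstating $V$ as a function of the embeddings $\phi(o),\phi(g)$ and restoring $\mathbb{E}_{p(g)}$ and $\max_\phi$ produces \eqref{eq:dual-value-problem}.

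I expect the main obstacle to be the bookkeeping at the two points where structural facts are used silently. First, justifying the $\max$--$\min$ swap rigorously requires not just convexity but a Slater-type interior condition on the occupancy polytope. Second, the clean log-sum-exp is the conjugate of the KL taken over the probability simplex, so I must verify that the flow constraints together with $\sum_o\mu_0(\cdot;g)=1$ force $\sum_{o,a} d = 1$, making the restriction to normalized $d$ in the inner maximization legitimate; this is also where the $1/(1-\gamma)$ return constant must be reconciled with the unscaled reward appearing in the exponent. The deterministic-dynamics step itself is brief but load-bearing: without it the exponent would retain $\mathbb{E}_{o'}V(o')$ inside the log-expectation, an object that cannot be estimated from action-free video.
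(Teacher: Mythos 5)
Your route is the same one the paper takes: rewrite the primal as an optimization over discounted occupancies subject to the Bellman-flow constraints, dualize with a multiplier $V$, solve the inner KL-regularized problem in closed form, and use determinism to replace $\mathbb{E}_{o'\sim T(o,a)}[V(\phi(o');\phi(g))]$ with the observed successor. The only structural difference is that the paper black-boxes the duality core by citing Proposition 4.2 of \citet{ma2022far}, whereas you unroll it explicitly (Lagrangian over the flow constraints plus the Gibbs/Donsker--Varadhan conjugate of the KL); that is a legitimate, more self-contained rendering of the same argument. Your two flagged bookkeeping points are also well placed: the log-sum-exp form does require the inner maximization to be over normalized $d$ (which the flow constraints imply, so the redundant simplex constraint can be kept before dualizing), and the $1/(1-\gamma)$ scaling must indeed be reconciled --- the paper itself silently normalizes the return when passing from \eqref{eq:human-primal-problem} to its occupancy form in the appendix.

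The one genuine gap is your last step, ``reinstating $V$ as a function of the embeddings $\phi(o),\phi(g)$.'' Because you carry out the duality in observation space, your dual variable ranges over \emph{arbitrary} functions $V: O \times G \to \mathbb{R}$, and the dual value is the minimum over that class. Restricting afterwards to functions of the form $V(\phi(o);\phi(g))$ shrinks the feasible set of the minimization, so without further argument you obtain only an upper bound on the true dual, not equality with \eqref{eq:dual-value-problem}. Equality requires that every relevant function of $o$ can be written as a function of $\phi(o)$, i.e., that $\phi$ has no embedding collisions ($\phi(o) \neq \phi(\tilde{o})$ for $o \neq \tilde{o}$). The paper confronts exactly this issue --- in its ordering it appears when equating samples from $d^D(\phi(o),\phi(o');\phi(g))$ with samples from $D(o,o';g)$ --- and resolves it by noting that any $\phi$ can be made injective by concatenating the raw input to its output. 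Your proof needs the same assumption (or the same augmentation trick) stated explicitly for the final step to be an identity rather than a relaxation.
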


As shown, actions do not appear in the objective. Furthermore, since all expectations in \eqref{eq:dual-value-problem} can be sampled using the offline dataset, this dual value-function objective can be self-supervised with an appropriate choice of reward function. In particular, since our goal is to acquire a value function that extracts a general notion of goal-directed task progress from passive offline human videos, we set $r(o,g) = \mathbb{I}(o==g) - 1$, which we refer to as $\tilde{\delta}_g(o)$ in shorthand. This reward provides a constant negative reward when $o$ is not the provided goal $g$, and does not require any task-specific engineering. The resulting value function $V(\phi(o);\phi(g))$ captures the discounted total number of steps required to reach goal $g$ from observation $o$. Consequently, the overall objective will encourage learning visual features $\phi$ that are amenable to predicting the discounted temporal distance between two frames in a human video sequence. With enough size and diversity in the training dataset, we hypothesize that this value function can generalize to completely unseen (robot) domains and tasks.

\subsection{Analysis: Implicit Time Contrastive Learning} 
\label{section:theory-implicit-time-contrastive-learning}

While \eqref{eq:dual-value-problem} will learn some useful visual representation via temporal value function optimization, in this section, we show that it can be understood as a novel \textit{implicit} temporal contrastive learning objective that acquires temporally smooth embedding distance over video sequences, underpinning \ourmethodnospace's efficacy jointly as a visual representation and reward for downstream control. 

We begin by simplifying the expression in~\eqref{eq:dual-value-problem} by first assuming that the optimal $V^*$ is found:
\begin{equation}
\label{eq:min_phi_v1}
\resizebox{\textwidth}{!}{$
    \min_\phi   \BE_{p(g)}\left[(1-\gamma)\BE_{\mu_0(o;g)}[-V^*(\phi(o);\phi(g))] + \log \BE_{D(o,o';g)}\left[\mathrm{exp} \left(\tilde{\delta}_g(o) + \gamma V(\phi(o');\phi(g)) - V(\phi(o),\phi(g))\right)\right]^{-1}\right],$}
\end{equation}

where we have also re-written the maximization problem as a minimization problem. Now, after few algebraic manipulation steps (see App.~\ref{appendix:proofs} for a derivation), if we think of $V^*(\phi(o);\phi(g))$ as a \jasonedit{\textit{similarity metric}} in the embedding space, then we can massage~\eqref{eq:min_phi_v1} into an expression that resembles the InfoNCE~\citep{oord2018representation} time contrastive learning~\citep{sermanet2018time} (see App.~\ref{appendix:contrastive-learning} for a definition and additional background) objective:

\colorbox{light-gray}{\parbox{\textwidth}{

\begin{equation}
\label{eq:implicit-contrastive-learning}
\resizebox{\textwidth}{!}{$
    \min_{\phi}  (1-\gamma) \BE_{p(g), \mu_0(o;g)}\left[-\log \frac{ e^{V^*(\phi(o);\phi(g))}}{\BE_{D(o,o';g)}\left[\mathrm{exp}\left(\tilde{\delta}_g(o) + \gamma V^*(\phi(o');\phi(g)) - V^*(\phi(o),\phi(g))\right)\right]^{\frac{-1}{(1-\gamma)}}}\right]$}
\end{equation}
}}

In particular, $p(g)$ can be thought of the distribution of ``anchor'' observations, $\mu_0(s;g)$ the distribution of ``positive'' samples, and $D(o,o';g)$ the distribution of ``negative'' samples. Counter-intuitively and in contrast to standard single-view time contrastive learning (TCN), in which the positive observations are temporally closer to the anchor observation than the negatives, \eqref{eq:implicit-contrastive-learning} has the positives to be as temporally far away as possible, namely the initial frame in the the same video sequence, and the negatives to be middle frames sampled in between. This departure is accompanied by the equally intriguing deviation of the lack of explicit repulsion of the negatives from the anchor; instead, they are simply encouraged to minimize the (exponentiated) one-step temporal-difference error in the representation space (the denominator in~\eqref{eq:implicit-contrastive-learning}); see Fig.~\ref{figure:vip-concept-figure}. Now, since the value function encodes negative discounted temporal distance, due to the recursive nature of value temporal-difference (TD), in order for the one-step TD error to be globally minimized along a video sequence, observations that are temporally farther away from the goal will naturally be repelled farther away in the representation space compared to observations that are nearby in time; in App.~\ref{appendix:vip-behavior-proposition}, we formalize this intuition and show that this repulsion always holds for optimal paths. Therefore, the repulsion of the negative observations is an \textit{implicit}, emergent property from the optimization of~\eqref{eq:implicit-contrastive-learning}, instead of an explicit constraint as in standard (time) contrastive learning.

\begin{wrapfigure}{r}{0.50\linewidth}
\vspace{-5pt}
\includegraphics[width=0.5\columnwidth]{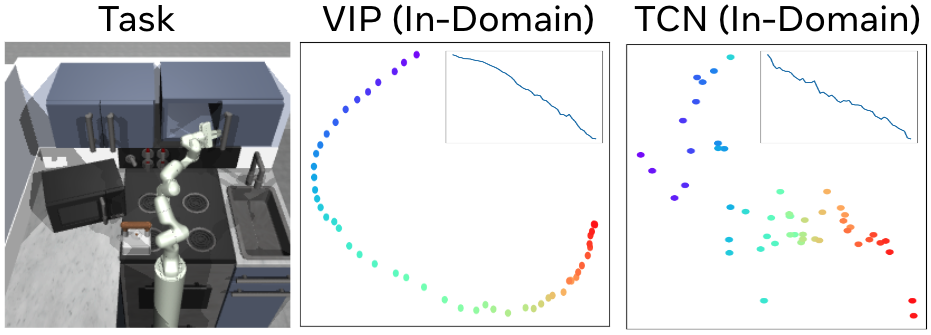}
\caption{Learned 2D representation of a held-out task demonstration by VIP and TCN trained on task-specific in-domain data. The color gradient indicates trajectory time progression (purple for beginning, red for end). The inset plots are embedding distances to the last frame.} 
\vspace{-5pt}
\label{figure:toy-experiment}
\end{wrapfigure}

Now, we dive into why this \textit{implicit} time contrastive learning is desirable. First, the explicit attraction of the initial and goal frames enables capturing \textit{long-range} semantic temporal dependency as two frames that meaningfully indicate the beginning and end of a task are made close in the embedding space. This closeness is also well-defined due to the one-step TD backup that makes every embedding distance recursively defined to be the discounted number of timesteps to the goal frame. \jasonedit{Combined with the implicit yet structured repulsion of intermediate frames}, this push-and-pull mechanism helps inducing a \textit{temporally smooth} and consistent representation. In particular, as we pass a video sequence in the training set through the trained representation, the embedding should be structured such that two trends emerge: (1) neighboring frames are close-by in the embedding space, (2) their distances to the last (goal) frame smoothly decrease due to the recursively defined embedding distances. To validate this intuition, in Fig.~\ref{figure:toy-experiment}, we provide a simple toy example comparing implicit vs. standard time contrastive learning when trained on \textit{in-domain, task-specific} demonstrations; details are included in App.~\ref{appendix:toy-experiment}. As shown, standard time contrastive learning only enforces a coarse notion of temporal consistency and learns a non-locally smooth representation that exhibits many local minima. In contrast, VIP learns a much better structured embedding that is indeed temporally consistent and locally smooth.

\subsection{Algorithm: Value-Implicit Pre-Training (VIP)} 

\label{section:practical-implementation}

Recall that $V^*$ is assumed to be known for the derivation in Section~\ref{section:theory-implicit-time-contrastive-learning}, but in practice, its analytical form is rarely known. Now, given that $V^*$ plays the role of a distance measure in our implicit time contrastive learning framework, a simple and practical way to approximate $V^*$ is to set it to be a choice of \jasonedit{similarity metric}, bypassing having to explicitly parameterize it as a neural network. In this work, we choose the common choice of the negative $L_2$ distance used in prior work~\citet{sermanet2018time, nair2022r3m}: $V^*(\phi(o), \phi(g)) := -\norm{\phi(o)-\phi(g)}_2$. Given this choice, our final representation learning objective is as follows:

\colorbox{light-gray}{\parbox{\textwidth}{
\begin{equation}
    \label{eq:vip}
\resizebox{\textwidth}{!}{$
    \mathcal{L}(\phi) = \BE_{p(g)}\left[(1-\gamma) \BE_{\mu_0(o;g)}\left[\norm{\phi(o) - \phi(g)}_2\right] + \log \BE_{(o,o';g) \sim D}\left[\mathrm{exp}\left(\norm{\phi(o)- \phi(g)}_2-\tilde{\delta}_g(o) - \gamma \norm{\phi(o') - \phi(g)}_2  \right)\right]\right],$}
\end{equation}
}}

in which we also absorb the exponent of the log-sum-exp term in~\ref{eq:min_phi_v1} into the inner $\exp(\cdot)$ term via Jensen's inequality; we found this upper bound to be numerically more stable. To sample video trajectories from $D$, because any sub-trajectory of a video is also a valid video sequence, VIP samples these sub-trajectories and treats their initial and last frames as samples from the goal and initial-state distributions (Step 3 in Alg.~\ref{algo:vip}). 
Altogether, VIP training is illustrated in Alg.~\ref{algo:vip}; it is simple and its core training loop can be implemented in fewer than 10 lines of PyTorch code (Alg.~\ref{algo:vip-code} in App.~\ref{appendix:vip-pseudocode}).

\begin{algorithm}
\caption{Value-Implicit Pre-Training (VIP)}
\label{algo:vip}
\begin{algorithmic}[1]
\small
\STATE \textbf{Require}: Offline (human) videos $D = \{ (o^i_1,...,o^i_{h_i})\}_{i=1}^N$, visual architecture $\phi$ 
\FOR{\text{number of training iterations}}
\STATE Sample sub-trajectories $\{o^i_{t}, ..., o^i_{k}, o^i_{k+1},..., o^i_{T}\}_{i=1}^B\sim D, t \in [1, h_i-1], t\leq k < T, T \in (t, h_i],
\forall i$

\STATE
\hspace{-0.15cm} \resizebox{0.95\textwidth}{!}{
$\mathcal{L}(\phi) :=\frac{1-\gamma}{B} \sum_{i=1}^B \left[\norm{\phi(o^i_t) - \phi(o^i_T)}_2\right] + \log \frac{1}{B} \sum_{i=1}^B \left[\mathrm{exp}\left(\norm{\phi(o^i_k)- \phi(o^i_T)}_2 - \tilde{\delta}_{o^i_T}(o^i_k) - \gamma \norm{\phi(o^i_{k+1}) - \phi(o^i_T)}_2 \right)\right]$}
\STATE Update $\phi$ using SGD: $\phi \leftarrow \phi - \alpha_\phi \nabla \mathcal{L}(\phi)$
\ENDFOR
\end{algorithmic}
\end{algorithm}

\vspace{-0.5cm}
\section{Experiments} 
\vspace{-0.3cm}
In this section, we demonstrate VIP's effectiveness as both a pre-trained visual reward and representation on three distinct reward-based policy learning settings. We begin by detailing VIP's training and introducing baselines. Then, we present the full evaluation results for each setting, and we conclude with qualititave analysis, delving into VIP's unique effectiveness.

\para{VIP Training.} We use a standard ResNet50~\citep{he2016deep} architecture as VIP's visual backbone and train on a subset of the Ego4D dataset~\citep{grauman2022ego4d}, a large-scale egocentric video dataset consisting of humans accomplishing diverse tasks around the world. These choices are identical to a prior work~\citep{nair2022r3m}; additionally, we use the exact same hyperparameters (e.g., batch size, optimizer, learning rate) as in~\citet{nair2022r3m}. See App.~\ref{appendix:vip-details} for details.

\begin{wrapfigure}{r}{0.5\linewidth}
\vspace{-0.5cm}
\centering
\includegraphics[width=0.075\columnwidth]{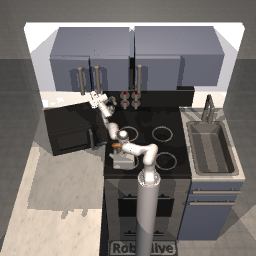}
\includegraphics[width=0.075\columnwidth]{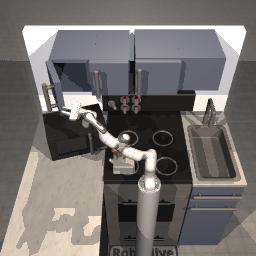}
\includegraphics[width=0.075\columnwidth]{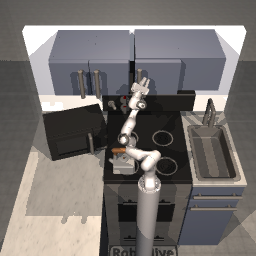}
\includegraphics[width=0.075\columnwidth]{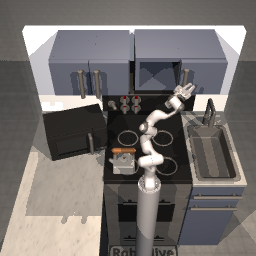}
\includegraphics[width=0.075\columnwidth]{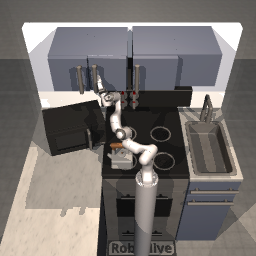}
\includegraphics[width=0.075\columnwidth]{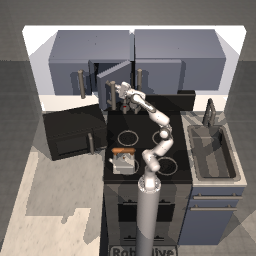}
\centering 
\includegraphics[width=0.075\columnwidth]{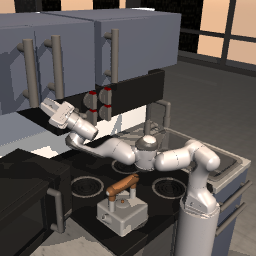}
\includegraphics[width=0.075\columnwidth]{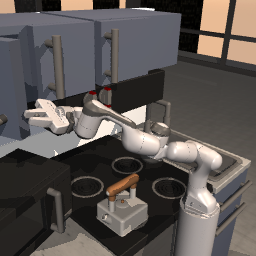}
\includegraphics[width=0.075\columnwidth]{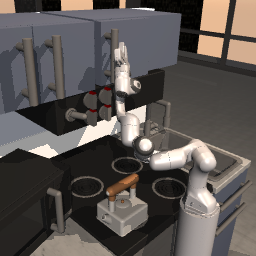}
\includegraphics[width=0.075\columnwidth]{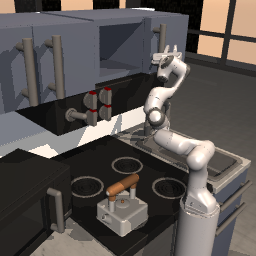}
\includegraphics[width=0.075\columnwidth]{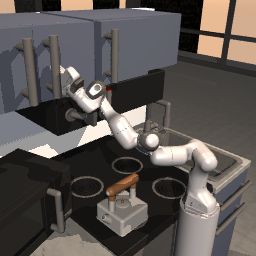}
\includegraphics[width=0.075\columnwidth]{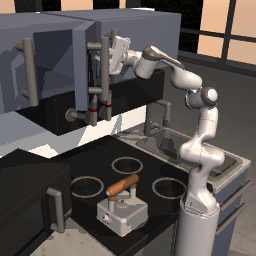}
\centering 
\includegraphics[width=0.075\columnwidth]{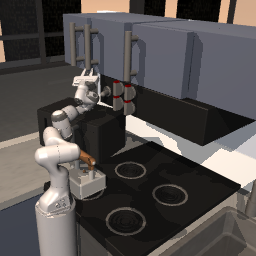}
\includegraphics[width=0.075\columnwidth]{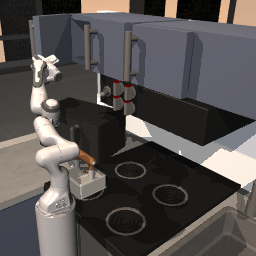}
\includegraphics[width=0.075\columnwidth]{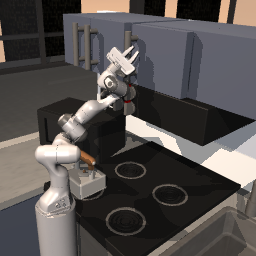}
\includegraphics[width=0.075\columnwidth]{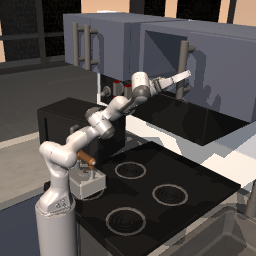}
\includegraphics[width=0.075\columnwidth]{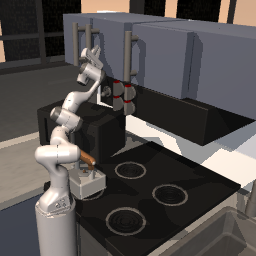}
\includegraphics[width=0.075\columnwidth]{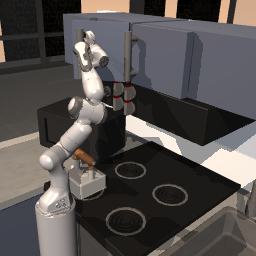}
\caption{Frankakitchen example goal images.} 
\vspace{-0.2cm}
\label{figure:frankakitchen-default}
\end{wrapfigure}

\para{Baselines.} 
The closest comparison is \textbf{R3M}~\citep{nair2022r3m}, which pre-trains on the same Ego4D dataset using a combination of time contrastive learning, $L_1$ weight regularization, and language embedding consistency losses. We also consider a self-supervised ResNet50 network trained on ImageNet using Momentum Contrastive (\textbf{MoCo}), a supervised \textbf{ResNet}50 network trained on ImageNet, and \textbf{CLIP}~\citet{radford2021learning}, covering a wide range of pre-existing visual representations that have been used for robotics control~\citep{shah2021rrl, parisi2022unsurprising, cui2022can}, though none has been tested in our three reward-based settings in which the reward also has to be produced by the representation. Note that the visual backbone for all methods is ResNet50, enabling a fair comparison of the pre-training objectives. Besides VIP, all models are taken from their publicly released checkpoints. \cameraready{In Appendix~\ref{appendix:mppi-additional-comparison}, we additionally compare to MoCo and Masked Auto-Encoder (MAE) models trained on Ego4D.}

\para{Evaluation Environments} 
For our simulation experiments, we consider the FrankaKitchen~\citep{gupta2019relay} environment, in which a 7-DoF Franka robot is tasked with manipulating common household kitchen objects to pre-specified configurations. We use all 12 subtasks supported in the environment and 3 camera views (left, center, right) for each task, in total of 36 visual manipulation tasks. Furthermore, we consider two initial robot state for every task, one Easy setting in which the end-effector is initialized close to the object of interest, and one Hard setting in which the end-effector is uniformly initialized above the stove edge regardless of the task. The task horizon is 50 (resp. 100) for the Easy (resp. Hard) setting. Each task is specified via a goal image; see Fig.~\ref{figure:frankakitchen-default} for example goal images for all three views, and see Fig.~\ref{figure:frankakitchen-all-1}-\ref{figure:frankakitchen-all-2} in App.~\ref{appendix:frankakitchen} for initial and goal frames for all tasks).

\begin{figure}
\vspace{-20pt}
\centering
\includegraphics[width=0.8\columnwidth]{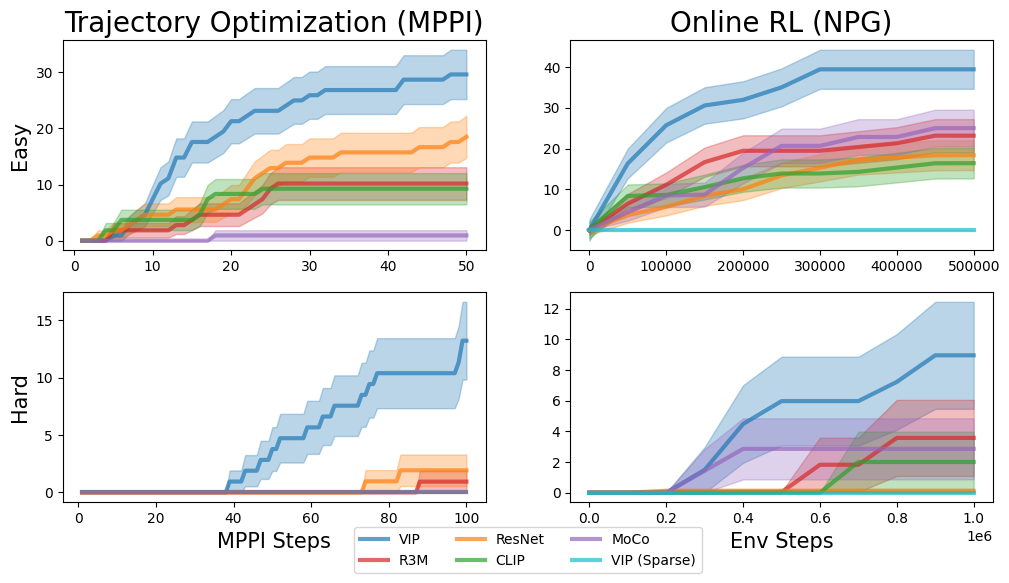}
\vspace{-0.3cm}
\caption{Visual trajectory optimization and online RL aggregate results (cumulative success rate \%).}
\vspace{-0.5cm}
\label{figure:combined-results}
\end{figure}

\vspace{-0.3cm}
\subsection{Trajectory Optimization \& Online Reinforcement Learning} 
\label{sec:traj-opt-online-rl} 
We evaluate pre-trained representations' capability as pure visual reward functions by using them to directly synthesize a sequence of actions using trajectory optimization In particular, we use \textbf{model-predictive path-integral (MPPI)}~\citep{williams2017model}. To evaluate each proposed sequence of actions, we directly roll it out in the simulator for simplicity. Vanilla trajectory optimization, though sample efficient, is prone to local minima in the reward landscape due to a lack of trial-and-error exploration. We hypothesize that online RL may be able to overcome bad local minima, but it comes with the added challenge of demanding the pre-trained representation to provide both the visual reward and representation for learning a closed-loop policy. To further study the importance of a learned dense reward in online RL, we compare to using ground-truth task sparse reward coupled with VIP's visual representation, \textbf{VIP (Sparse)}. The RL algorithm we use is \textbf{natural policy gradient (NPG)}~\citep{kakade2001natural}. We leave all experiment details are in App.~\ref{appendix:traj-opt}-\ref{appendix:online-rl}. In Fig.~\ref{figure:combined-results}, we report each representation's cumulative success rate averaged over all configurations (3 seeds * 3 cameras * 12 tasks = 108 runs); the success rate for online RL is computed over a separate set of test rollouts. 

\begin{wrapfigure}{r}{0.4\linewidth}
\vspace{-10pt}
\includegraphics[width=0.4\columnwidth]{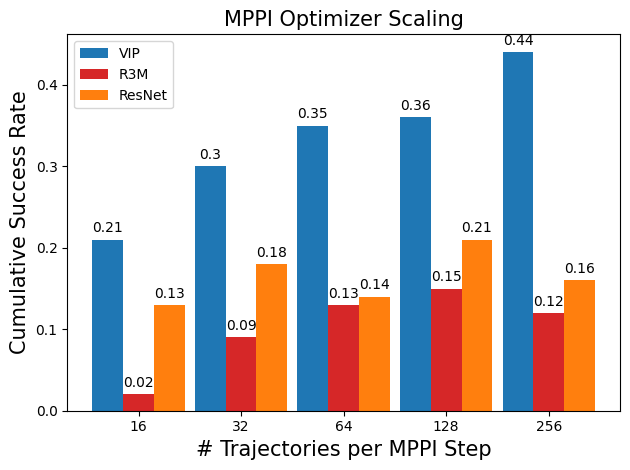}
\vspace{-0.6cm}
\caption{VIP benefits from compute scaling in downstream control.}
\vspace{-0.4cm}
\label{figure:traj-opt-sweep}
\end{wrapfigure}

Examining the MPPI results, we see that \ourmethod is substantially better than all baselines in both Easy and Hard settings, and is the only representation that makes non-trivial progress on the Hard setting. In Fig.~\ref{figure:traj-opt-sweep}, we couple VIP and the strongest baselines (R3M, Resnet) with increasingly more powerful MPPI optimizers (i.e., more trajectories per optimization step; default $32$ are used in Fig.~\ref{figure:combined-results}). As shown, while VIP steadily benefits from stronger optimizers and can reach an average success rate of \textbf{44\%}, baselines often do \textit{worse} when MPPI is given more compute budget, suggesting that their reward landscapes are filled with local minima that do not correlate with task progress and are easily exploited by stronger optimizers. To further validate these observations, in App.~\ref{appendix:traj-opt}, we report the $L_2$ error to the ground-truth goal-image robot and object poses after each environment step taken by MPPI. We find that VIP is able to minimize both the robot and object pose errors quite robustly over diverse tasks and views, whereas several baselines in fact \textit{increase} the robot pose error on average. Given these findings, we hypothesize that VIP's reward functions are able capture task-salient information in the visual observations. In App.~\ref{appendix:reward-correlation}, we validate this hypothesis and find that on at least one camera view for 8 out of the 12 tasks, VIP's rewards are highly correlated with the human-engineered state-based dense rewards, with correlation coefficients as high as $R^2=\textbf{0.95}$, highlighting its potential of replacing manual reward engineering without any prior knowledge about the robot domain or tasks.

Switching gears to online RL, VIP again achieves consistently superior performance. VIP (Sparse)'s inability to solve any task, despite a strong visual representation provided by VIP itself, indicates the necessity of dense reward in solving these challenging visual manipulation tasks and further accentuates VIP's versatility doubling as both visual reward and representation; \rebuttal{furthermore, we find that sparse reward coupled with even the true state representation is unable to make any progress.} Finally, we comment that whereas sparse reward still requires human engineering via installing additional sensors~\citep{rajeswar2021hapticsbased, singh2019end} and faces exploration challenges~\citep{nair2018overcoming}, with VIP, the end-user has to provide only a goal image.

\vspace{-0.3cm}
\subsection{Real-World Few-Shot Offline Reinforcement Learning} 
\label{sec:real-world-offline-rl} 

In this section, we demonstrate how \ourmethodnospace's reward and representation can power a simple and practical system for real-world robot learning in the form of \textit{few-shot} offline reinforcement learning, making offline RL simple, sample-efficient, and more effective than BC with almost no added complexity.

To this end, we consider a simple reward-weighted regression (RWR)~\citep{peters2007reinforcement, peng2019advantage} approach, in which the reward and the encoder are provided by the pre-trained model $\phi$:
\colorbox{light-gray}{\parbox{\textwidth}{
\begin{equation}
\label{eq:rwr}
\mathcal{L}(\pi) = - \mathbb{E}_{D_{\mathrm{task}}(o,a,o',g)} \left[\mathrm{exp}(\tau \cdot R(o,o';\phi, g)) \log \pi(a \mid \phi(o))\right],
\end{equation}}}
where $R$ is defined via~\eqref{eq:embedding-reward} and $\tau$ is the temperature scale. Compared to BC, which would be~\eqref{eq:rwr} with uniform weights to all transitions, RWR can focus policy learning on transitions that have high rewards (i.e., high task progress) under the deployed representation. Consequently, if the chosen representation is predictive of task progress (i.e., assigning high rewards to key transitions for the task), then RWR should be able to outperform BC, especially on tasks that naturally admit key transitions (e.g., picking up the towel edge in our \texttt{FoldTowel} task). This intuition is theoretically proven~\citep{kumar2022should} and holds even if the offline data consists of solely expert demonstrations, though not validated on real-world tasks.

\begin{wrapfigure}{r}{0.7\linewidth}
\vspace{-10pt}
\includegraphics[width=0.35\columnwidth]{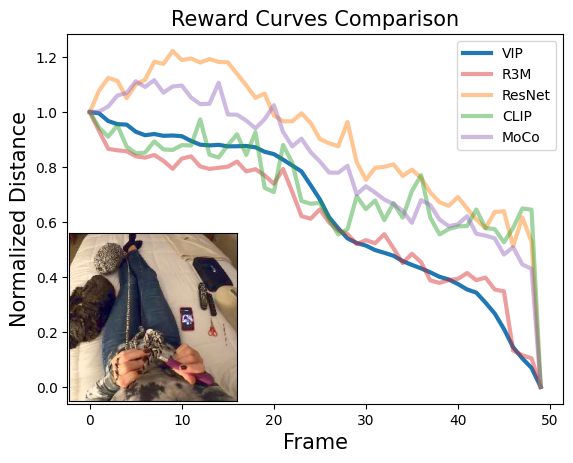}
\includegraphics[width=0.35\columnwidth]{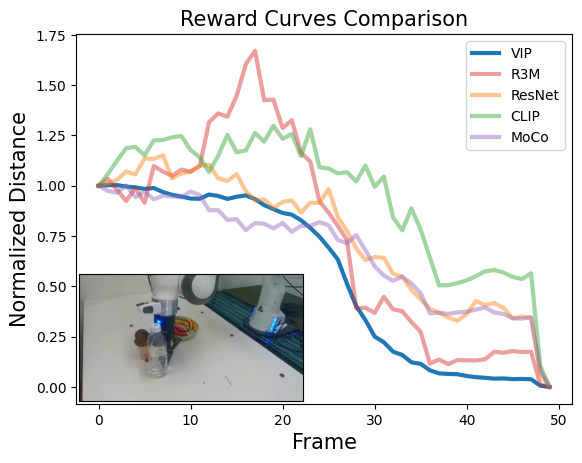}
\vspace{-0.4cm}
\caption{Embedding distance curves on Ego4D (L) and real-robot (R) videos.}
\vspace{-0.4cm}
\label{figure:reward-curve-example}
\end{wrapfigure}

We introduce 4 tabletop manipulation tasks (see Fig.~\ref{figure:vip-concept-figure} and Fig.~\ref{figure:real-robot-demonstrations}) requiring a real 7-DOF Franka robot to manipulate objects drawn from distinct categories of objects. For each task, we collect in-domain, task-specific offline data $D_{\mathrm{task}}$ of $\sim 20$ demonstrations with randomized object initial placements for policy learning; we provide detailed task and experiment descriptions in App.~\ref{appendix:real-world-robot-experiments}. We compare \ourmethod to R3M, the only other representation that has been deployed on real robots in the visual imitation setting, and instantiate \textbf{\{\ourmethodnospace,R3M\}-\{RWR,BC\}}. To assess whether pre-training is necessary for low-data regime offline RL, we also train \textbf{in-domain} \ourmethodnospace-\{RWR,BC\} from scratch using only $D_{\mathrm{task}}$, where the VIP representation is learned using $D_{\mathrm{task}}$ first and then frozen during the policy training via RWR/BC. In addition, we include \textbf{Scratch-BC}, for which the regression loss is used to learn the policy and the representation in a completely end-to-end manner. 

\begin{table}
\vspace{-0.5cm}
  \caption{\small Real-robot offline RL results (success rate \% averaged over 10 rollouts with standard deviation reported).}
  \vspace{-0.3cm}
\centering
\resizebox{\textwidth}{!}{
  \begin{tabular}{l|cccc|ccc}
  \toprule
  {}&  &   \emph{Pre-Trained}   &  &   &   & \emph{In-Domain} &    \\
  {Environment} &   \ourmethodnospace-RWR &  \ourmethodnospace-BC  & R3M-RWR   &   R3M-BC &  Scratch-BC  & \ourmethodnospace-RWR  & \ourmethodnospace-BC  \\
  \midrule  %
  \texttt{CloseDrawer} & \textbf{100} $\pm$ {\scriptsize0} & 50 $\pm$ {\scriptsize50}& 80 $\pm$ {\scriptsize40}& 10 $\pm$ {\scriptsize30}& 30 $\pm$ {\scriptsize46}& 0 $\pm$ {\scriptsize0}& $0^*$ $\pm$ {\scriptsize0} \\ 
  \texttt{PushBottle} & \textbf{90} $\pm$ {\scriptsize 30} & 50 $\pm$ {\scriptsize50}& 70 $\pm$ {\scriptsize46}& 50 $\pm$ {\scriptsize50}& 40$\pm$ {\scriptsize48} & $0^*$ $\pm$ {\scriptsize0}& $0^*$ $\pm$ {\scriptsize0}\\ 
  \texttt{PlaceMelon} & \textbf{60} $\pm$ {\scriptsize48}& 10 $\pm$ {\scriptsize30}& 0 $\pm$ {\scriptsize0}& 0 $\pm$ {\scriptsize0}& 0 $\pm$ {\scriptsize0}& $0^*$ $\pm$ {\scriptsize0}& $0^*$ $\pm$ {\scriptsize0}\\ 
  \texttt{FoldTowel} & \textbf{90} $\pm$ {\scriptsize 30} & 20 $\pm$ {\scriptsize40}& 0 $\pm$ {\scriptsize0}& 0 $\pm$ {\scriptsize0}& 0 $\pm$ {\scriptsize0}& $0^*$ $\pm$ {\scriptsize0}& $0^*$ $\pm$ {\scriptsize0}\\
  \bottomrule
 \end{tabular}}
 \vspace{-0.5cm}
  \label{table:real-world-offline-rl}
\end{table}

We train all policies using the same set of hyperparameters \rebuttal{used for real-world BC training in~\citet{nair2022r3m}}, and evaluate each method on 10 rollouts, covering the distribution of object's initial placement in the dataset. The average success rate (\%) and standard deviation across rollouts are reported in Table~\ref{table:real-world-offline-rl}. As shown, \ourmethodnospace-RWR improves upon \ourmethodnospace-BC on all tasks and provides substantial benefit \cameraready{on} the harder tasks that are multi-stage in nature. In contrast, R3M-RWR, while able to improve R3M-BC on the simpler two tasks involving pushing an object, fails to make any progress on the harder tasks. \textit{In-domain} VIP-based methods fail completely and their actions are almost always pre-empted by the hardware safety check to prevent robot damage (indicated by $*$), suggesting significant overfitting in training the VIP representation using just the scarce task-specific data $D_{\mathrm{task}}$. The low performance of BC-based methods on the harder \texttt{PickPlaceMelon} and \texttt{FoldTowel} tasks indicates that in this low-data regime, regardless of what visual representation the policy uses, good reward information is necessary for task success.  Altogether, these results corroborate the necessity of pre-training in achieving few-shot offline RL in the real-world and highlight the unique effectiveness of VIP in realizing this goal. Qualitatively, we find VIP-RWR policies acquire intelligent behavior such as robust key action execution and task re-attempt; see App.~\ref{appendix:real-robot-qualitative} and our supplementary video for analysis.

\vspace{-0.4cm}
\subsection{Qualitative Analysis}
\label{sec:qualitative-analysis}
\vspace{-0.2cm}

We hypothesize that VIP learns the most temporally smooth embedding that enables effective zero-shot reward-specification, and present several qualitative experiments investigating this claim. First, in Fig.~\ref{figure:reward-curve-example}, we visually overlay and compare the embedding distance-to-goal curves for each representation on representative videos from both Ego4D and our real-robot dataset; the curve for every representation is normalized to have initial distance $1$ to enable comparison. As shown, VIP has the most visually smooth curves, whereas all other methods exhibit ``bumps'' \jasonedit{(i.e., positive slope at a step)} that signal more prevalent presence of local minima in their reward landscapes; in App.~\ref{appendix:additional-reward-curves}, we provide additional embedding curves. Finally, we quantify the total number of ``bumps'' each representation encounters over both datasets in App.~\ref{appendix:reward-bumps}, and find VIP indeed has much fewer bumps.

\begin{wrapfigure}{r}{0.7\linewidth}
\vspace{-0.3cm}
\includegraphics[width=0.7\columnwidth]{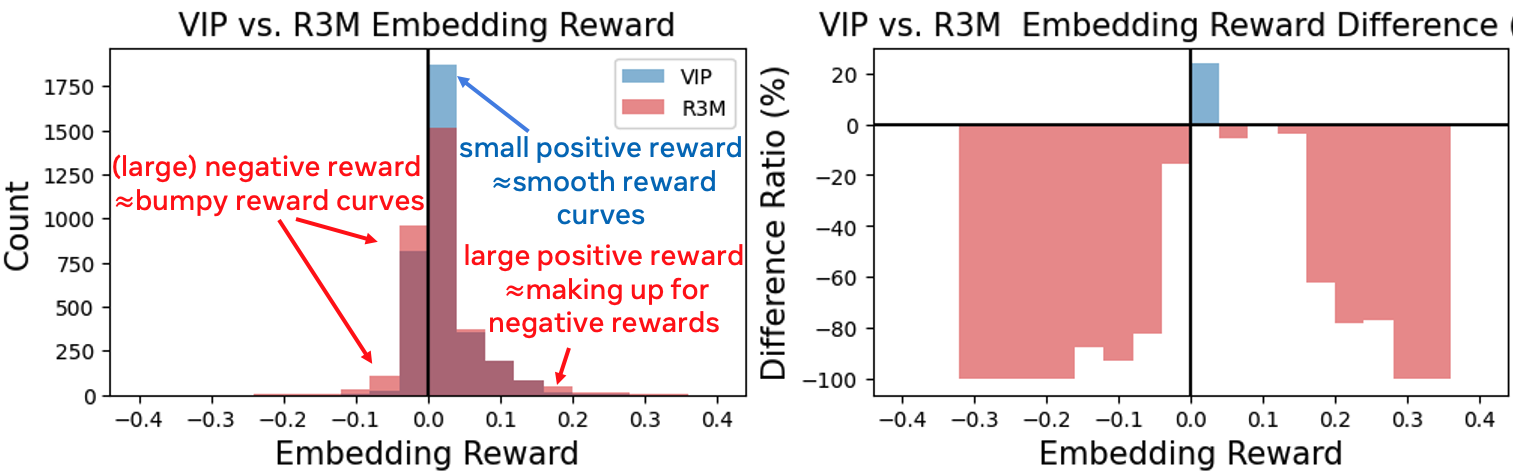}
\vspace{-0.3cm}
\caption{Embedding reward histogram on our real-robot dataset.} 
\vspace{-0.3cm}
\label{figure:vip-r3m-reward-comparison}
\end{wrapfigure}

\jasonedit{In addition to the number of bumps, we posit that the \textit{magnitude} of bumps is also a key distinguishing factor among representations.} Because the reward (\eqref{eq:embedding-reward}) is the negative embedding distance-to-goal difference, a positive reward at a step is equivalent to a \textit{negative} slope at that step in the corresponding embedding distance curve. The ideal representation should have a reward histogram that has a tall peak in the first positive bin, indicating that its embedding distance curves consist of mostly \textit{small}, negative slopes that make for smooth curves. In Fig.~\ref{figure:vip-r3m-reward-comparison}, we compare VIP and R3M representations (comparisons to other baselines are in App.~\ref{appendix:histogram-robot}) by overlaying their respective normalized embedding reward histogram computed over the entirety of the real-robot dataset (we include the histograms computed over Ego4D in App.~\ref{appendix:histogram-ego4d}). In addition, we create a bar-plot over the count-difference ratio for each bin, $\frac{\lvert\mathrm{VIP}\rvert- \lvert \mathrm{R3M}\rvert}{\lvert \mathrm{R3M} \rvert}$. We see that VIP has much higher count in the first positive-reward bin ($\approx$+20\% more than R3M), fewer negative rewards overall, and much fewer extreme rewards ($\approx$-100\% to R3M) in either direction, indicating that on aggregate VIP's reward landscape is much smoother than that of R3M, and this trend holds against all other baselines. These findings confirm our hypothesis that VIP learns the most temporally smooth representation. Surprisingly, despite trained on Ego4D, VIP's smoothness property transfers to the robot domain, suggesting that VIP representation indeed has learned generalizable features that are predictive of goal-directed task progress, and it is this generalization that lies at the heart of VIP's ability to perform zero-shot reward-specification.

\vspace{-0.4cm}
\section{Conclusion}
\vspace{-0.3cm}
We have proposed Value-Implicit Pre-training (VIP), a self-supervised value-based pre-training objective that is highly effective in providing both the visual reward and representation for downstream unseen robotics tasks. VIP is derived from first principles of dual reinforcement learning and admits an appealing connection to an implicit and more powerful formulation of time contrastive learning, which captures long-range temporal dependency and injects local temporal smoothness in the representation to make for effective zero-shot reward specification. Trained entirely on diverse, in-the-wild human videos, VIP demonstrates significant gains over prior state-of-art pre-trained visual representations on an extensive set of policy learning settings. Notably, VIP can enable sample-efficient real-world offline RL with just handful of trajectories. Altogether, we believe that VIP makes an important contribution in both the algorithmic frontier of visual pre-training for RL and practical real-world robot learning. 

\newpage 

\section*{Acknowledgment}
This work was done while YJM was an intern at Meta AI. We thank Aravind Rajeswaran and members of Meta AI for helpful discussions, Jay Vakil for assistance on the real-robot experiments, and Vincent Moens for assistance on releasing the model.

\section*{Author Contributions}
JYM developed VIP, designed and implemented models and experiments, led the real-robot experiments, and wrote the paper. SS oversaw the project logistics, assisted on the software infrastructure and simulation experiments, and edited the writing. DJ and OB provided feedback on the project. VK oversaw and advised on the project, assisted on the software infrastructure, designed experiments, and supervised the real-robot experiments. AZ oversaw and advised on the project, designed experiments, and guided and edited the paper writing.

\section*{Reproducibility Statement}
We have open-sourced code for using our pre-trained VIP model and training a new VIP model using any custom video dataset at \href{https://github.com/facebookresearch/vip}{\texttt{https://github.com/facebookresearch/vip}}; the instruction for model training and inference is included in the \texttt{README.md} file in the supplementary file, and the hyperparameters are already configured. A PyTorch-based pseudocode for VIP is also included in Alg.~\ref{algo:vip-code}.

\newpage 

\bibliography{references}

\begin{thebibliography}{64}
\providecommand{\natexlab}[1]{#1}
\providecommand{\url}[1]{\texttt{#1}}
\expandafter\ifx\csname urlstyle\endcsname\relax
  \providecommand{\doi}[1]{doi: #1}\else
  \providecommand{\doi}{doi: \begingroup \urlstyle{rm}\Url}\fi

\bibitem[Agarwal et~al.(2019)Agarwal, Jiang, and
  Kakade]{agarwal2019reinforcement}
Alekh Agarwal, Nan Jiang, and Sham~M Kakade.
\newblock Reinforcement learning: Theory and algorithms.
\newblock 2019.

\bibitem[Agarwal et~al.(2021)Agarwal, Alomar, Alumootil, Shah, Shen, Xu, and
  Yang]{agarwal2021persim}
Anish Agarwal, Abdullah Alomar, Varkey Alumootil, Devavrat Shah, Dennis Shen,
  Zhi Xu, and Cindy Yang.
\newblock Persim: Data-efficient offline reinforcement learning with
  heterogeneous agents via personalized simulators.
\newblock \emph{Advances in Neural Information Processing Systems},
  34:\penalty0 18564--18576, 2021.

\bibitem[Andrychowicz et~al.(2017)Andrychowicz, Wolski, Ray, Schneider, Fong,
  Welinder, McGrew, Tobin, Pieter~Abbeel, and
  Zaremba]{andrychowicz2017hindsight}
Marcin Andrychowicz, Filip Wolski, Alex Ray, Jonas Schneider, Rachel Fong,
  Peter Welinder, Bob McGrew, Josh Tobin, OpenAI Pieter~Abbeel, and Wojciech
  Zaremba.
\newblock Hindsight experience replay.
\newblock \emph{Advances in neural information processing systems}, 30, 2017.

\bibitem[Bahl et~al.(2022)Bahl, Gupta, and Pathak]{bahl2022human}
Shikhar Bahl, Abhinav Gupta, and Deepak Pathak.
\newblock Human-to-robot imitation in the wild.
\newblock \emph{arXiv preprint arXiv:2207.09450}, 2022.

\bibitem[Bradtke \& Barto(1996)Bradtke and Barto]{bradtke1996linear}
Steven~J Bradtke and Andrew~G Barto.
\newblock Linear least-squares algorithms for temporal difference learning.
\newblock \emph{Machine learning}, 22\penalty0 (1):\penalty0 33--57, 1996.

\bibitem[Chebotar et~al.(2021)Chebotar, Hausman, Lu, Xiao, Kalashnikov, Varley,
  Irpan, Eysenbach, Julian, Finn, et~al.]{chebotar2021actionable}
Yevgen Chebotar, Karol Hausman, Yao Lu, Ted Xiao, Dmitry Kalashnikov, Jake
  Varley, Alex Irpan, Benjamin Eysenbach, Ryan Julian, Chelsea Finn, et~al.
\newblock Actionable models: Unsupervised offline reinforcement learning of
  robotic skills.
\newblock \emph{arXiv preprint arXiv:2104.07749}, 2021.

\bibitem[Chen et~al.(2021)Chen, Nair, and Finn]{chen2021learning}
Annie~S Chen, Suraj Nair, and Chelsea Finn.
\newblock Learning generalizable robotic reward functions from" in-the-wild"
  human videos.
\newblock \emph{arXiv preprint arXiv:2103.16817}, 2021.

\bibitem[Cui et~al.(2022)Cui, Niekum, Gupta, Kumar, and Rajeswaran]{cui2022can}
Yuchen Cui, Scott Niekum, Abhinav Gupta, Vikash Kumar, and Aravind Rajeswaran.
\newblock Can foundation models perform zero-shot task specification for robot
  manipulation?
\newblock In \emph{Learning for Dynamics and Control Conference}, pp.\
  893--905. PMLR, 2022.

\bibitem[Deng et~al.(2009)Deng, Dong, Socher, Li, Li, and
  Fei-Fei]{deng2009imagenet}
Jia Deng, Wei Dong, Richard Socher, Li-Jia Li, Kai Li, and Li~Fei-Fei.
\newblock Imagenet: A large-scale hierarchical image database.
\newblock In \emph{2009 IEEE conference on computer vision and pattern
  recognition}, pp.\  248--255. Ieee, 2009.

\bibitem[Devlin et~al.(2018)Devlin, Chang, Lee, and Toutanova]{devlin2018bert}
Jacob Devlin, Ming-Wei Chang, Kenton Lee, and Kristina Toutanova.
\newblock Bert: Pre-training of deep bidirectional transformers for language
  understanding.
\newblock \emph{arXiv preprint arXiv:1810.04805}, 2018.

\bibitem[Dosovitskiy et~al.(2020)Dosovitskiy, Beyer, Kolesnikov, Weissenborn,
  Zhai, Unterthiner, Dehghani, Minderer, Heigold, Gelly,
  et~al.]{dosovitskiy2020image}
Alexey Dosovitskiy, Lucas Beyer, Alexander Kolesnikov, Dirk Weissenborn,
  Xiaohua Zhai, Thomas Unterthiner, Mostafa Dehghani, Matthias Minderer, Georg
  Heigold, Sylvain Gelly, et~al.
\newblock An image is worth 16x16 words: Transformers for image recognition at
  scale.
\newblock \emph{arXiv preprint arXiv:2010.11929}, 2020.

\bibitem[Ebert et~al.(2021)Ebert, Yang, Schmeckpeper, Bucher, Georgakis,
  Daniilidis, Finn, and Levine]{ebert2021bridge}
Frederik Ebert, Yanlai Yang, Karl Schmeckpeper, Bernadette Bucher, Georgios
  Georgakis, Kostas Daniilidis, Chelsea Finn, and Sergey Levine.
\newblock Bridge data: Boosting generalization of robotic skills with
  cross-domain datasets.
\newblock \emph{arXiv preprint arXiv:2109.13396}, 2021.

\bibitem[Eysenbach et~al.(2022)Eysenbach, Zhang, Salakhutdinov, and
  Levine]{eysenbach2022contrastive}
Benjamin Eysenbach, Tianjun Zhang, Ruslan Salakhutdinov, and Sergey Levine.
\newblock Contrastive learning as goal-conditioned reinforcement learning.
\newblock \emph{arXiv preprint arXiv:2206.07568}, 2022.

\bibitem[Goyal et~al.(2017)Goyal, Kahou, Michalski, Materzyńska, Westphal,
  Kim, Haenel, Fruend, Yianilos, Mueller-Freitag, Hoppe, Thurau, Bax, and
  Memisevic]{goyal2017something}
Raghav Goyal, Samira~Ebrahimi Kahou, Vincent Michalski, Joanna Materzyńska,
  Susanne Westphal, Heuna Kim, Valentin Haenel, Ingo Fruend, Peter Yianilos,
  Moritz Mueller-Freitag, Florian Hoppe, Christian Thurau, Ingo Bax, and Roland
  Memisevic.
\newblock The "something something" video database for learning and evaluating
  visual common sense, 2017.

\bibitem[Grauman et~al.(2022)Grauman, Westbury, Byrne, Chavis, Furnari,
  Girdhar, Hamburger, Jiang, Liu, Liu, et~al.]{grauman2022ego4d}
Kristen Grauman, Andrew Westbury, Eugene Byrne, Zachary Chavis, Antonino
  Furnari, Rohit Girdhar, Jackson Hamburger, Hao Jiang, Miao Liu, Xingyu Liu,
  et~al.
\newblock Ego4d: Around the world in 3,000 hours of egocentric video.
\newblock In \emph{Proceedings of the IEEE/CVF Conference on Computer Vision
  and Pattern Recognition}, pp.\  18995--19012, 2022.

\bibitem[Gupta et~al.(2019)Gupta, Kumar, Lynch, Levine, and
  Hausman]{gupta2019relay}
Abhishek Gupta, Vikash Kumar, Corey Lynch, Sergey Levine, and Karol Hausman.
\newblock Relay policy learning: Solving long-horizon tasks via imitation and
  reinforcement learning.
\newblock \emph{arXiv preprint arXiv:1910.11956}, 2019.

\bibitem[Gutmann \& Hyv{\"a}rinen(2010)Gutmann and
  Hyv{\"a}rinen]{gutmann2010noise}
Michael Gutmann and Aapo Hyv{\"a}rinen.
\newblock Noise-contrastive estimation: A new estimation principle for
  unnormalized statistical models.
\newblock In \emph{Proceedings of the thirteenth international conference on
  artificial intelligence and statistics}, pp.\  297--304. JMLR Workshop and
  Conference Proceedings, 2010.

\bibitem[He et~al.(2016)He, Zhang, Ren, and Sun]{he2016deep}
Kaiming He, Xiangyu Zhang, Shaoqing Ren, and Jian Sun.
\newblock Deep residual learning for image recognition.
\newblock In \emph{Proceedings of the IEEE conference on computer vision and
  pattern recognition}, pp.\  770--778, 2016.

\bibitem[He et~al.(2020)He, Fan, Wu, Xie, and Girshick]{he2020momentum}
Kaiming He, Haoqi Fan, Yuxin Wu, Saining Xie, and Ross Girshick.
\newblock Momentum contrast for unsupervised visual representation learning.
\newblock In \emph{Proceedings of the IEEE/CVF conference on computer vision
  and pattern recognition}, pp.\  9729--9738, 2020.

\bibitem[He et~al.(2022)He, Chen, Xie, Li, Doll{\'a}r, and
  Girshick]{he2022masked}
Kaiming He, Xinlei Chen, Saining Xie, Yanghao Li, Piotr Doll{\'a}r, and Ross
  Girshick.
\newblock Masked autoencoders are scalable vision learners.
\newblock In \emph{Proceedings of the IEEE/CVF Conference on Computer Vision
  and Pattern Recognition}, pp.\  16000--16009, 2022.

\bibitem[Jang et~al.(2022)Jang, Irpan, Khansari, Kappler, Ebert, Lynch, Levine,
  and Finn]{jang2022bc}
Eric Jang, Alex Irpan, Mohi Khansari, Daniel Kappler, Frederik Ebert, Corey
  Lynch, Sergey Levine, and Chelsea Finn.
\newblock Bc-z: Zero-shot task generalization with robotic imitation learning.
\newblock In \emph{Conference on Robot Learning}, pp.\  991--1002. PMLR, 2022.

\bibitem[Kakade(2001)]{kakade2001natural}
Sham~M Kakade.
\newblock A natural policy gradient.
\newblock \emph{Advances in neural information processing systems}, 14, 2001.

\bibitem[Kingma \& Ba(2014)Kingma and Ba]{kingma2014adam}
Diederik~P Kingma and Jimmy Ba.
\newblock Adam: A method for stochastic optimization.
\newblock \emph{arXiv preprint arXiv:1412.6980}, 2014.

\bibitem[Kumar et~al.(2021)Kumar, Singh, Tian, Finn, and
  Levine]{kumar2021workflow}
Aviral Kumar, Anikait Singh, Stephen Tian, Chelsea Finn, and Sergey Levine.
\newblock A workflow for offline model-free robotic reinforcement learning.
\newblock \emph{arXiv preprint arXiv:2109.10813}, 2021.

\bibitem[Kumar et~al.(2022)Kumar, Hong, Singh, and Levine]{kumar2022should}
Aviral Kumar, Joey Hong, Anikait Singh, and Sergey Levine.
\newblock When should we prefer offline reinforcement learning over behavioral
  cloning?
\newblock \emph{arXiv preprint arXiv:2204.05618}, 2022.

\bibitem[Lee et~al.(2021)Lee, Szot, Sun, and Lim]{lee2021generalizable}
Youngwoon Lee, Andrew Szot, Shao-Hua Sun, and Joseph~J Lim.
\newblock Generalizable imitation learning from observation via inferring goal
  proximity.
\newblock In A.~Beygelzimer, Y.~Dauphin, P.~Liang, and J.~Wortman Vaughan
  (eds.), \emph{Advances in Neural Information Processing Systems}, 2021.
\newblock URL \url{https://openreview.net/forum?id=lp9foO8AFoD}.

\bibitem[Levine et~al.(2020)Levine, Kumar, Tucker, and Fu]{levine2020offline}
Sergey Levine, Aviral Kumar, George Tucker, and Justin Fu.
\newblock Offline reinforcement learning: Tutorial, review, and perspectives on
  open problems.
\newblock \emph{arXiv preprint arXiv:2005.01643}, 2020.

\bibitem[Li et~al.(2022)Li, Gao, Yang, Xu, and Wu]{li2022phasic}
Yunfei Li, Tian Gao, Jiaqi Yang, Huazhe Xu, and Yi~Wu.
\newblock Phasic self-imitative reduction for sparse-reward goal-conditioned
  reinforcement learning.
\newblock In \emph{International Conference on Machine Learning}, pp.\
  12765--12781. PMLR, 2022.

\bibitem[Ma et~al.(2022{\natexlab{a}})Ma, Shen, Jayaraman, and
  Bastani]{ma2022smodice}
Yecheng~Jason Ma, Andrew Shen, Dinesh Jayaraman, and Osbert Bastani.
\newblock Smodice: Versatile offline imitation learning via state occupancy
  matching.
\newblock \emph{arXiv preprint arXiv:2202.02433}, 2022{\natexlab{a}}.

\bibitem[Ma et~al.(2022{\natexlab{b}})Ma, Yan, Jayaraman, and
  Bastani]{ma2022far}
Yecheng~Jason Ma, Jason Yan, Dinesh Jayaraman, and Osbert Bastani.
\newblock How far i'll go: Offline goal-conditioned reinforcement learning via
  $ f $-advantage regression.
\newblock \emph{arXiv preprint arXiv:2206.03023}, 2022{\natexlab{b}}.

\bibitem[Mandlekar et~al.(2018)Mandlekar, Zhu, Garg, Booher, Spero, Tung, Gao,
  Emmons, Gupta, Orbay, et~al.]{mandlekar2018roboturk}
Ajay Mandlekar, Yuke Zhu, Animesh Garg, Jonathan Booher, Max Spero, Albert
  Tung, Julian Gao, John Emmons, Anchit Gupta, Emre Orbay, et~al.
\newblock Roboturk: A crowdsourcing platform for robotic skill learning through
  imitation.
\newblock In \emph{Conference on Robot Learning}, pp.\  879--893. PMLR, 2018.

\bibitem[Mandlekar et~al.(2021)Mandlekar, Xu, Wong, Nasiriany, Wang, Kulkarni,
  Fei-Fei, Savarese, Zhu, and Mart{\'\i}n-Mart{\'\i}n]{mandlekar2021matters}
Ajay Mandlekar, Danfei Xu, Josiah Wong, Soroush Nasiriany, Chen Wang, Rohun
  Kulkarni, Li~Fei-Fei, Silvio Savarese, Yuke Zhu, and Roberto
  Mart{\'\i}n-Mart{\'\i}n.
\newblock What matters in learning from offline human demonstrations for robot
  manipulation.
\newblock \emph{arXiv preprint arXiv:2108.03298}, 2021.

\bibitem[Nachum \& Dai(2020)Nachum and Dai]{nachum2020reinforcement}
Ofir Nachum and Bo~Dai.
\newblock Reinforcement learning via fenchel-rockafellar duality, 2020.

\bibitem[Nachum et~al.(2019)Nachum, Dai, Kostrikov, Chow, Li, and
  Schuurmans]{nachum2019algaedice}
Ofir Nachum, Bo~Dai, Ilya Kostrikov, Yinlam Chow, Lihong Li, and Dale
  Schuurmans.
\newblock Algaedice: Policy gradient from arbitrary experience.
\newblock \emph{arXiv preprint arXiv:1912.02074}, 2019.

\bibitem[Nair et~al.(2018)Nair, McGrew, Andrychowicz, Zaremba, and
  Abbeel]{nair2018overcoming}
Ashvin Nair, Bob McGrew, Marcin Andrychowicz, Wojciech Zaremba, and Pieter
  Abbeel.
\newblock Overcoming exploration in reinforcement learning with demonstrations.
\newblock In \emph{2018 IEEE international conference on robotics and
  automation (ICRA)}, pp.\  6292--6299. IEEE, 2018.

\bibitem[Nair et~al.(2022)Nair, Rajeswaran, Kumar, Finn, and
  Gupta]{nair2022r3m}
Suraj Nair, Aravind Rajeswaran, Vikash Kumar, Chelsea Finn, and Abhinav Gupta.
\newblock R3m: A universal visual representation for robot manipulation.
\newblock \emph{arXiv preprint arXiv:2203.12601}, 2022.

\bibitem[Ng et~al.(1999)Ng, Harada, and Russell]{ng1999policy}
Andrew~Y Ng, Daishi Harada, and Stuart Russell.
\newblock Policy invariance under reward transformations: Theory and
  application to reward shaping.
\newblock In \emph{Icml}, volume~99, pp.\  278--287, 1999.

\bibitem[Oord et~al.(2018)Oord, Li, and Vinyals]{oord2018representation}
Aaron van~den Oord, Yazhe Li, and Oriol Vinyals.
\newblock Representation learning with contrastive predictive coding.
\newblock \emph{arXiv preprint arXiv:1807.03748}, 2018.

\bibitem[Parisi et~al.(2022)Parisi, Rajeswaran, Purushwalkam, and
  Gupta]{parisi2022unsurprising}
Simone Parisi, Aravind Rajeswaran, Senthil Purushwalkam, and Abhinav Gupta.
\newblock The unsurprising effectiveness of pre-trained vision models for
  control.
\newblock \emph{arXiv preprint arXiv:2203.03580}, 2022.

\bibitem[Paszke et~al.(2019)Paszke, Gross, Massa, Lerer, Bradbury, Chanan,
  Killeen, Lin, Gimelshein, Antiga, et~al.]{paszke2019pytorch}
Adam Paszke, Sam Gross, Francisco Massa, Adam Lerer, James Bradbury, Gregory
  Chanan, Trevor Killeen, Zeming Lin, Natalia Gimelshein, Luca Antiga, et~al.
\newblock Pytorch: An imperative style, high-performance deep learning library.
\newblock \emph{Advances in neural information processing systems}, 32, 2019.

\bibitem[Pathak et~al.(2018)Pathak, Mahmoudieh, Luo, Agrawal, Chen, Shentu,
  Shelhamer, Malik, Efros, and Darrell]{pathak2018zero}
Deepak Pathak, Parsa Mahmoudieh, Guanghao Luo, Pulkit Agrawal, Dian Chen, Yide
  Shentu, Evan Shelhamer, Jitendra Malik, Alexei~A Efros, and Trevor Darrell.
\newblock Zero-shot visual imitation.
\newblock In \emph{Proceedings of the IEEE conference on computer vision and
  pattern recognition workshops}, pp.\  2050--2053, 2018.

\bibitem[Peng et~al.(2019)Peng, Kumar, Zhang, and Levine]{peng2019advantage}
Xue~Bin Peng, Aviral Kumar, Grace Zhang, and Sergey Levine.
\newblock Advantage-weighted regression: Simple and scalable off-policy
  reinforcement learning.
\newblock \emph{arXiv preprint arXiv:1910.00177}, 2019.

\bibitem[Peters \& Schaal(2007)Peters and Schaal]{peters2007reinforcement}
Jan Peters and Stefan Schaal.
\newblock Reinforcement learning by reward-weighted regression for operational
  space control.
\newblock In \emph{Proceedings of the 24th international conference on Machine
  learning}, pp.\  745--750, 2007.

\bibitem[Radford et~al.(2019)Radford, Wu, Child, Luan, Amodei, and
  Sutskever]{radford2019language}
Alec Radford, Jeff Wu, Rewon Child, David Luan, Dario Amodei, and Ilya
  Sutskever.
\newblock Language models are unsupervised multitask learners.
\newblock 2019.

\bibitem[Radford et~al.(2021)Radford, Kim, Hallacy, Ramesh, Goh, Agarwal,
  Sastry, Askell, Mishkin, Clark, et~al.]{radford2021learning}
Alec Radford, Jong~Wook Kim, Chris Hallacy, Aditya Ramesh, Gabriel Goh,
  Sandhini Agarwal, Girish Sastry, Amanda Askell, Pamela Mishkin, Jack Clark,
  et~al.
\newblock Learning transferable visual models from natural language
  supervision.
\newblock In \emph{International Conference on Machine Learning}, pp.\
  8748--8763. PMLR, 2021.

\bibitem[Rajeswar et~al.(2021)Rajeswar, Ibrahim, Surya, Golemo, Vazquez,
  Courville, and Pinheiro]{rajeswar2021hapticsbased}
Sai Rajeswar, Cyril Ibrahim, Nitin Surya, Florian Golemo, David Vazquez, Aaron
  Courville, and Pedro~O. Pinheiro.
\newblock Haptics-based curiosity for sparse-reward tasks.
\newblock In \emph{5th Annual Conference on Robot Learning}, 2021.
\newblock URL \url{https://openreview.net/forum?id=VfGk0ELQ4LC}.

\bibitem[Rajeswaran et~al.(2017)Rajeswaran, Kumar, Gupta, Vezzani, Schulman,
  Todorov, and Levine]{rajeswaran2017learning}
Aravind Rajeswaran, Vikash Kumar, Abhishek Gupta, Giulia Vezzani, John
  Schulman, Emanuel Todorov, and Sergey Levine.
\newblock Learning complex dexterous manipulation with deep reinforcement
  learning and demonstrations.
\newblock \emph{arXiv preprint arXiv:1709.10087}, 2017.

\bibitem[Rockafellar(1970)]{rockafellar-1970a}
R.~Tyrrell Rockafellar.
\newblock \emph{Convex analysis}.
\newblock Princeton Mathematical Series. Princeton University Press, Princeton,
  N. J., 1970.

\bibitem[Savva et~al.(2019)Savva, Kadian, Maksymets, Zhao, Wijmans, Jain,
  Straub, Liu, Koltun, Malik, Parikh, and Batra]{Savva_2019_ICCV}
Manolis Savva, Abhishek Kadian, Oleksandr Maksymets, Yili Zhao, Erik Wijmans,
  Bhavana Jain, Julian Straub, Jia Liu, Vladlen Koltun, Jitendra Malik, Devi
  Parikh, and Dhruv Batra.
\newblock Habitat: A platform for embodied ai research.
\newblock In \emph{Proceedings of the IEEE/CVF International Conference on
  Computer Vision (ICCV)}, October 2019.

\bibitem[Schmeckpeper et~al.(2020)Schmeckpeper, Rybkin, Daniilidis, Levine, and
  Finn]{schmeckpeper2020reinforcement}
Karl Schmeckpeper, Oleh Rybkin, Kostas Daniilidis, Sergey Levine, and Chelsea
  Finn.
\newblock Reinforcement learning with videos: Combining offline observations
  with interaction.
\newblock \emph{arXiv preprint arXiv:2011.06507}, 2020.

\bibitem[Sermanet et~al.(2016)Sermanet, Xu, and
  Levine]{sermanet2016unsupervised}
Pierre Sermanet, Kelvin Xu, and Sergey Levine.
\newblock Unsupervised perceptual rewards for imitation learning.
\newblock \emph{arXiv preprint arXiv:1612.06699}, 2016.

\bibitem[Sermanet et~al.(2018)Sermanet, Lynch, Chebotar, Hsu, Jang, Schaal,
  Levine, and Brain]{sermanet2018time}
Pierre Sermanet, Corey Lynch, Yevgen Chebotar, Jasmine Hsu, Eric Jang, Stefan
  Schaal, Sergey Levine, and Google Brain.
\newblock Time-contrastive networks: Self-supervised learning from video.
\newblock In \emph{2018 IEEE international conference on robotics and
  automation (ICRA)}, pp.\  1134--1141. IEEE, 2018.

\bibitem[Shah \& Kumar(2021)Shah and Kumar]{shah2021rrl}
Rutav Shah and Vikash Kumar.
\newblock Rrl: Resnet as representation for reinforcement learning.
\newblock \emph{arXiv preprint arXiv:2107.03380}, 2021.

\bibitem[Shan et~al.(2020)Shan, Geng, Shu, and Fouhey]{Shan20}
Dandan Shan, Jiaqi Geng, Michelle Shu, and David Fouhey.
\newblock Understanding human hands in contact at internet scale.
\newblock In \emph{Proceedings of the IEEE/CVF Conference on Computer Vision
  and Pattern Recognition}, 2020.

\bibitem[Singh et~al.(2019)Singh, Yang, Hartikainen, Finn, and
  Levine]{singh2019end}
Avi Singh, Larry Yang, Kristian Hartikainen, Chelsea Finn, and Sergey Levine.
\newblock End-to-end robotic reinforcement learning without reward engineering.
\newblock \emph{arXiv preprint arXiv:1904.07854}, 2019.

\bibitem[Sutton \& Barto(2018)Sutton and Barto]{sutton2018reinforcement}
Richard~S Sutton and Andrew~G Barto.
\newblock \emph{Reinforcement learning: An introduction}.
\newblock 2018.

\bibitem[Wang \& Isola(2022)Wang and Isola]{wang2022learning}
Tongzhou Wang and Phillip Isola.
\newblock On the learning and learnablity of quasimetrics.
\newblock \emph{arXiv preprint arXiv:2206.15478}, 2022.

\bibitem[Williams et~al.(2017)Williams, Aldrich, and
  Theodorou]{williams2017model}
Grady Williams, Andrew Aldrich, and Evangelos~A Theodorou.
\newblock Model predictive path integral control: From theory to parallel
  computation.
\newblock \emph{Journal of Guidance, Control, and Dynamics}, 40\penalty0
  (2):\penalty0 344--357, 2017.

\bibitem[Xiao et~al.(2022)Xiao, Radosavovic, Darrell, and
  Malik]{xiao2022masked}
Tete Xiao, Ilija Radosavovic, Trevor Darrell, and Jitendra Malik.
\newblock Masked visual pre-training for motor control.
\newblock \emph{arXiv preprint arXiv:2203.06173}, 2022.

\bibitem[Xiong et~al.(2021)Xiong, Li, Chen, Bharadhwaj, Sinha, and
  Garg]{xiong2021learning}
Haoyu Xiong, Quanzhou Li, Yun-Chun Chen, Homanga Bharadhwaj, Samarth Sinha, and
  Animesh Garg.
\newblock Learning by watching: Physical imitation of manipulation skills from
  human videos.
\newblock In \emph{2021 IEEE/RSJ International Conference on Intelligent Robots
  and Systems (IROS)}, pp.\  7827--7834. IEEE, 2021.

\bibitem[Yu et~al.(2020)Yu, Quillen, He, Julian, Hausman, Finn, and
  Levine]{yu2020meta}
Tianhe Yu, Deirdre Quillen, Zhanpeng He, Ryan Julian, Karol Hausman, Chelsea
  Finn, and Sergey Levine.
\newblock Meta-world: A benchmark and evaluation for multi-task and meta
  reinforcement learning.
\newblock In \emph{Conference on robot learning}, pp.\  1094--1100. PMLR, 2020.

\bibitem[Yu et~al.(2022)Yu, Kumar, Chebotar, Hausman, Finn, and
  Levine]{yu2022leverage}
Tianhe Yu, Aviral Kumar, Yevgen Chebotar, Karol Hausman, Chelsea Finn, and
  Sergey Levine.
\newblock How to leverage unlabeled data in offline reinforcement learning.
\newblock \emph{arXiv preprint arXiv:2202.01741}, 2022.

\bibitem[Zakka et~al.(2022)Zakka, Zeng, Florence, Tompson, Bohg, and
  Dwibedi]{zakka2022xirl}
Kevin Zakka, Andy Zeng, Pete Florence, Jonathan Tompson, Jeannette Bohg, and
  Debidatta Dwibedi.
\newblock Xirl: Cross-embodiment inverse reinforcement learning.
\newblock In \emph{Conference on Robot Learning}, pp.\  537--546. PMLR, 2022.

\bibitem[Zhang \& Jiang(2021)Zhang and Jiang]{zhang2021towards}
Siyuan Zhang and Nan Jiang.
\newblock Towards hyperparameter-free policy selection for offline
  reinforcement learning.
\newblock \emph{Advances in Neural Information Processing Systems},
  34:\penalty0 12864--12875, 2021.

\end{thebibliography}
\bibliographystyle{iclr2023_conference}
\newpage 
\appendix
\addcontentsline{toc}{section}{Appendix} %
\part{Appendix} %
\parttoc 

\section{Additional Background}
\label{appendix:additional-background}
\subsection{Goal-Conditioned Reinforcement Learning}
\label{appendix:gcrl}
This section is adapted from~\citet{ma2022far}. 
We consider a goal-conditioned Markov decision process from visual state space: $\mc{M}=(O, A, G, r, T, \mu_0, \gamma)$ with state space $O$, action space $A$, reward $r(o,g)$, transition function $o' \sim T(o,a)$, the goal distribution $p(g)$, and the goal-conditioned initial state distribution $\mu_0(o;g)$, and discount factor $\gamma \in (0, 1]$. We assume the state space $O$ and the goal space $G$ to be defined over RGB images. The objective of goal-conditioned RL is to find a goal-conditioned policy $\pi: O \times G \rightarrow \Delta(A)$ that maximizes the discounted cumulative return:
\vspace{-0.2cm}
\begin{equation}
\label{eq:gcrl-objective}
    J(\pi) := \mathbb{E}_{p(g), \mu_0(o;g),\pi(a_t\mid s_t, g),T(o_{t+1}, \mid o_t, a_t)}\left[\sum_{t=0}^{\infty} \gamma^t r(o_t; g) \right]
\end{equation}
\vspace{-0.2cm}
The \textit{goal-conditioned} state-action occupancy distribution $d^\pi(o,a;g): O \times A \times G \rightarrow [0,1]$ of $\pi$ is
\begin{equation}
\label{eq:pi-occupancies}
\begin{split}
d^\pi(o,a;g) := (1-\gamma) \sum_{t=0}^{\infty} \gamma^t \mathrm{Pr}(o_t=o, a_t=a \mid o_0 \sim \mu_0(o;g), a_t \sim \pi(o_t;g), o_{t+1} \sim T(o_t,a_t))
\end{split}
\end{equation}
which captures the goal-conditioned visitation frequency of state-action pairs for policy $\pi$. The state-occupancy distribution then marginalizes over actions: $d^\pi(o;g) = \sum_a d^\pi(o,a;g)$. Then, it follows that $\pi(a\mid o,g) = \frac{d^\pi(o,a;g)}{d^\pi(o;g)}$. A state-action occupancy distribution must satisfy the \textit{Bellman flow constraint} in order for it to be an occupancy distribution for some stationary policy $\pi$: 
\begin{equation}
\label{eq:bellman-flow-constraint}
\sum_a d(o,a;g) = (1-\gamma) \mu_0(o;g)+ \gamma \sum_{\tilde{o}, \tilde{a}}T(s\mid \tilde{o}, \tilde{a}) d(\tilde{o}, \tilde{a}; g), \qquad\forall o \in O, g\in G 
\end{equation}
We write $d^\pi(o,g) = p(g)d^\pi(o;g)$ as the joint goal-state density induced by $p(g)$ and the policy $\pi$.
Finally, given $d^\pi$, we can express the objective function~\eqref{eq:gcrl-objective} as $J(\pi) = \frac{1}{1-\gamma} \mathbb{E}_{(o,g) \sim d^\pi(o,g)}[r(o,g)]$. 

\subsection{InfoNCE \& Time Contrastive Learning.} 
\label{appendix:contrastive-learning} 
As VIP can be understood as a implicit and smooth time contrastive learning objective, we provide additional background on the InfoNCE~\citet{oord2018representation} and time contrastive learning (TCN)~\citep{sermanet2018time} objective to aid comparison in Section~\ref{section:theory-implicit-time-contrastive-learning}.

InfoNCE is an unsupervised contrastive learning objective built on the noise contrastive estimation~\citep{gutmann2010noise} principle. 
In particular, given an ``anchor'' datum $x$ (otherwise known as context), and distribution of positives $x_{\mathrm{pos}}$ and negatives $x_{\mathrm{neg}}$, the InfoNCE objective optimizes
\begin{equation}
    \label{eq:infonce-formal}
    \min_\phi \mathbb{E}_{x_{\mathrm{pos}}}\left[ -\log \frac{\mathcal{S}_\phi(x,x_{\mathrm{pos}})}{\mathbb{E}_{x_{\mathrm{neg}}} \mathcal{S}_\phi(x,x_{\mathrm{neg}})}\right],
\end{equation}
where $\mathbb{E}_{x_{\mathrm{neg}}}$ is often approximated with a fixed number of negatives in practice.

It is shown in~\citet{oord2018representation} that optimizing~\eqref{eq:infonce-formal} is maximizing a lower bound on the mutual information $\mathcal{I}(x, x_{\mathrm{pos}})$, where, with slight abuse of notation, $x$ and $x_{\mathrm{pos}}$ are interpreted as random variables. 

TCN is a contrastive learning objective that learns a representation that in timeseries data (e.g., video trajectories). The original work~\citep{sermanet2018time} considers multi-view videos and perform contrastive learning over frames in separate videos; in this work, we consider the single-view variant. At a high level, TCN attracts representations of frames that are temporally close, while pushing apart those of frames that are farther apart in time. More precisely, given three frames sampled from a video sequence $(o_{t_1}, o_{t_2}, o_{t_3})$, where $t_1<t_2<t_3$, TCN would attract the representations of $o_{t_1}$ and $o_{t_2}$ and repel the representation of $o_{t_3}$ from $o_{t_1}$. This idea can be formally expressed via the following objective:
\begin{equation}
    \label{eq:tcn-formal}
    \min_\phi \mathbb{E}_{(o_{t_1}, o_{t_2>t_1})\sim D} \left[- \log \frac{e^{-\norm{\phi(o_{t_1})-\phi(o_{t_2})}_2}}{\mathbb{E}_{o_{t_3}\mid t_3>t_2 \sim D} \left[\mathrm{exp}\left(-\norm{\phi(o_{t_1})-\phi(o_{t_3})}_2\right)\right]}\right]
\end{equation}
Given a ``positive'' window of $K$ steps and a uniform distribution among valid positive samples, we can write~\eqref{eq:tcn-formal} as
\begin{equation}
    \label{eq:tcn-formal-expanded} 
        \min_\phi \frac{1}{K}\sum_{k=1}^{K}\mathbb{E}_{(o_{t_1}, o_{t_1+k})\sim D} \left[- \log \frac{\mathcal{S}_\phi(o_{t_1};o_{t_1+k})}{\mathbb{E}_{o_{t_3}\mid t_3>t_1+k\sim D} \left[\mathcal{S}_\phi(o_{t_1};o_{t_3})\right]}\right],
\end{equation}
in which each term inside the expectation is a standalone InfoNCE objective tailored to observation sequence data.

\section{Extended Related Work}
\label{appendix:extended-related-work} 

\para{Perceptual Reward Learning from Human Videos.} 
Human videos provide a rich natural source of reward and representation learning for robotic learning. Most prior works exploit the idea of learning an invariant representation between human and robot domains to transfer the demonstrated skills~\citep{sermanet2016unsupervised, sermanet2018time, schmeckpeper2020reinforcement, chen2021learning, xiong2021learning, zakka2022xirl, bahl2022human}. However, training these representations require task-specific human \textit{demonstration} videos paired with robot videos solving the same task, and cannot leverage the large amount of ``in-the-wild'' human videos readily available. As such, these methods require robot data for training, and learn rewards that are task-specific and do not generalize beyond the tasks they are trained on. In contrast, VIP do not make any assumption on the quality or the task-specificity of human videos and instead pre-trains an (implicit) value function that aims to capture task-agnostic goal-oriented progress, which can generalize to completely unseen robot domains and tasks.

\para{Goal-Conditioned RL as Representation Learning.} 
Our pre-training method is also related to the idea of treating goal-conditioned RL as representation learning. \citet{chebotar2021actionable} shows that a goal-conditioned Q-function trained with offline in-domain multi-task robot data learns an useful visual representation that can accelerate learning for a new downstream task in the same domain. \citet{eysenbach2022contrastive} shows that goal-conditioned Q-learning with a particular choice of reward function can be understood as performing contrastive learning. In contrast, our theory introduces a new implicit time contrastive learning, and states that for \textit{any} choice of reward function, the dual formulation of a regularized offline GCRL objective can be cast as implicit time contrast. This conceptual bridge also explains why VIP's learned embedding distance is temporally smooth and can be used as an universal reward mechanism. Finally, whereas these two works are limited to training on in-domain data with robot action labels, \ourmethod is able to leverage diverse out-of-domain human data for visual representation pre-training, overcoming the inherent limitation of robot data scarcity for in-domain training.

Our work is also closely related to \citet{ma2022far}, which first introduced the dual offline GCRL objective based on Fenchel duality~\citep{rockafellar-1970a, nachum2020reinforcement, ma2022smodice}. Whereas \citet{ma2022far} assumes access to the true state information and focuses on the offline GCRL setting using in-domain offline data with robot action labels, we extend the dual objective to enable out-of-domain, action-free pre-training from human videos. Our particular dual objective also admits a novel implicit time contrastive learning interpretation, which simplifies VIP's practical implementation by letting the value function be implicitly defined instead of a deep neural network as in \citet{ma2022far}.

\section{Technical Derivations and Proofs}
\label{appendix:proofs} 

\subsection{Proof of Proposition~\ref{proposition:dual-value-problem}}
We first reproduce Proposition~\ref{proposition:dual-value-problem} for ease of reference:
\begin{proposition}
Under assumption of deterministic transition dynamics, the dual optimization problem of 
\begin{equation}
\label{eq:human-primal-problem-appendix}
    \max_{\pi_H, \phi} \mathbb{E}_{\pi^H}\left[\sum_t \gamma^t r(o;g)\right] - \KL(d^{\pi_H}(o,a^H;g) \| d^{D}(o,\tilde{a}^H;g)),
\end{equation}
is
\begin{equation}
\label{eq:dual-value-problem-appendix}
\resizebox{\textwidth}{!}{$
    \max_\phi \min_{V} \mathbb{E}_{p(g)}\left[(1-\gamma)\BE_{\mu_0(o;g)}[V(\phi(o);\phi(g))] + \log \BE_{D(o,o';g)}\left[\mathrm{exp} \left(r(o,g) + \gamma V(\phi(o');\phi(g)) - V(\phi(o),\phi(g))\right)\right]\right],$}
\end{equation}
where $\mu_0(o;g)$ is the goal-conditioned initial observation distribution, and $D(o,o';g)$ is the goal-conditioned distribution of two consecutive observations in dataset $D$. 
\end{proposition}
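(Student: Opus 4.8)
The plan is to recast the primal problem~\eqref{eq:human-primal-problem-appendix} entirely in the space of goal-conditioned occupancy distributions $d(o,a^H;g)$ and then apply the Fenchel/Lagrangian duality recipe of \citet{nachum2019algaedice, ma2022far}. First I would invoke the identity $J(\pi)=\frac{1}{1-\gamma}\mathbb{E}_{d^\pi}[r]$ from App.~\ref{appendix:gcrl} to express the return as a linear functional of $d:=d^{\pi_H}$, so that (up to the conventional $(1-\gamma)$ normalization) the primal becomes the constrained concave program $\max_{d\geq 0}\ \mathbb{E}_{d}[r(o,g)]-\KL(d\|d^{D})$ where the maximization ranges over occupancies obeying the Bellman flow constraint~\eqref{eq:bellman-flow-constraint}. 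The equivalence with the original $\max_{\pi_H}$ holds because every stationary policy induces a unique flow-feasible occupancy and vice versa, so optimizing over $\pi_H$ and over feasible $d$ are interchangeable.

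Next I would attach a Lagrange multiplier $V(\phi(o);\phi(g))$ to each Bellman flow constraint and regroup the Lagrangian $\mathcal{L}(d,V)$ into three multiplier contributions: an initial-state term $(1-\gamma)\mathbb{E}_{\mu_0}[V]$, an incoming-flow term $\gamma\sum V(o;g)\sum_{\tilde o,\tilde a}T(o\mid\tilde o,\tilde a)\,d(\tilde o,\tilde a;g)$, and an outgoing term $-\mathbb{E}_{d}[V]$. This is where the deterministic-dynamics hypothesis does the essential work: since each $(\tilde o,\tilde a)$ has a single successor $o'=T(\tilde o,\tilde a)$, the incoming-flow term collapses to $\gamma\mathbb{E}_{d}[V(\phi(o');\phi(g))]$, and the Lagrangian rearranges into
\begin{equation}
(1-\gamma)\mathbb{E}_{\mu_0}[V] + \mathbb{E}_{d}\!\left[r + \gamma V(\phi(o');\phi(g)) - V(\phi(o);\phi(g))\right] - \KL(d\|d^{D}).
\end{equation}

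The final step is to eliminate $d$ in closed form. The only remaining $d$-dependence is $\max_{d\geq 0}\mathbb{E}_{d}[y]-\KL(d\|d^{D})$ with $y=r+\gamma V(\phi(o'))-V(\phi(o))$, so I would apply the Gibbs variational principle (the convex conjugate of the KL divergence), whose maximizer is $d^{*}\propto d^{D}\exp(y)$ and whose optimal value is $\log\mathbb{E}_{d^{D}}[\exp(y)]$. Substituting this back and swapping the inner $\max_d$ with the outer $\min_V$ by strong duality (the program is concave in $d$, convex in $V$, and satisfies a Slater-type constraint qualification, following the template of \citet{ma2022far}) produces exactly~\eqref{eq:dual-value-problem-appendix}. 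The objective is action-free because, under deterministic dynamics, the reference occupancy $d^{D}(o,\tilde a;g)$ marginalizes to the distribution $D(o,o';g)$ over consecutive observation pairs, so the dummy action $\tilde a$ drops out entirely.

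The main obstacle I anticipate is the careful bookkeeping around the dummy action $\tilde a$ and the reduction of $d^{D}(o,\tilde a;g)$ to $D(o,o';g)$: one must verify that determinism makes the action label redundant simultaneously in the flow constraint (so that $V$ may depend on states alone) and in the reference measure, which is precisely what renders the dual samplable from passive video. A secondary technical point is justifying the $\max$–$\min$ interchange and the validity of the KL conjugate formula on the occupancy measures at hand; rather than reprove strong duality from scratch, I would defer these to the derivation in \citet{ma2022far} and emphasize only the determinism-driven steps that are specific to the action-free setting here.
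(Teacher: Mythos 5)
Your proposal is correct and takes essentially the same route as the paper's proof: both recast the primal as an occupancy-measure program subject to the Bellman flow constraint and then apply the Fenchel/Lagrangian duality machinery of \citet{ma2022far}, using determinism to collapse the next-state expectation inside the exponential and to reduce the action-labeled reference occupancy $d^{D}(o,\tilde a;g)$ to the consecutive-observation distribution $D(o,o';g)$. The only difference is one of packaging — you inline the Lagrangian regrouping and KL-conjugate (Gibbs variational) computation that the paper black-boxes by directly citing Proposition 4.2 of \citet{ma2022far} — and the paper additionally records one small technical caveat you omit, namely that identifying expectations over embedded pairs $(\phi(o),\phi(o'))$ with expectations over raw pairs $(o,o')$ requires no embedding collisions, which can always be ensured by augmenting $\phi$ with its input.
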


\begin{proof}
We begin by rewriting~\eqref{eq:human-primal-problem-appendix} as an optimization problem over valid state-occupancy distributions. To this end, we have\footnote{We omit the human action superscript $H$ in this derivation.} 
\begin{equation}
\label{eq:human-primal-problem-d}
    \resizebox{\textwidth}{!}{$
\begin{split}
    &\max_{\phi} \max_{d(\phi(o),a;\phi(g))\geq 0}  \mathbb{E}_{d(\phi(o),\phi(g))}\left[r(o;g)\right] - \KL(d(\phi(o),a; \phi(g)) \| d^{D}(\phi(o),\tilde{a};\phi(g))) \\ 
        (\mathrm{P}) & \quad \mathrm{s.t.}  \quad \sum_a d(\phi(o),a; \phi(g)) = (1-\gamma) \mu_0(o;g) + \gamma \sum_{\tilde{o}, \tilde{a}}T(o\mid \tilde{o}, \tilde{a}) d(\phi(\tilde{o}), \tilde{a}; \phi(g)), \forall o \in O, g\in G 
    \end{split}$}
\end{equation}
Fixing a choice of $\phi$, the inner optimization problem operates over a $\phi$-induced state and goal space, giving us~\eqref{eq:human-primal-problem-d}. Then, applying Proposition 4.2 of~\citet{ma2022far} to the inner optimization problem, we immediately obtain
\begin{equation}
    \label{eq:dual-value-problem-stochastic}
    \begin{aligned}
        \max_\phi \min_{V} & \mathbb{E}_{p(g)}\big[(1-\gamma)\BE_{\mu_0(o;g)}[V(\phi(o);\phi(g))] \\ 
        (\mathrm{D}) \quad \quad + &\log \BE_{d^D(\phi(o),a;\phi(g))}\big[\mathrm{exp} \big(r(o,g) + \gamma \mathbb{E}_{T(o'\mid o, a)} [V(\phi(o');\phi(g))] - V(\phi(o),\phi(g))\big)\big]\big]
    \end{aligned}
\end{equation}

Now, given our assumption that the transition dynamics is deterministic, we can replace the inner expectation $\mathbb{E}_{T(o'\mid o, a)}$ with just the observed sample in the offline dataset and obtain:
\begin{equation}
    \label{eq:dual-value-problem-deterministic}
    \begin{aligned}
        \max_\phi \min_{V} & \mathbb{E}_{p(g)}\big[(1-\gamma)\BE_{\mu_0(o;g)}[V(\phi(o);\phi(g))]\\ 
        + &\log \BE_{d^D(\phi(o),\phi(o');\phi(g))}\big[\mathrm{exp} \big(r(o,g) + \gamma V(\phi(o');\phi(g)) - V(\phi(o),\phi(g))\big)\big]\big]
    \end{aligned}
\end{equation}
Finally, sampling embedded states from $d^D(\phi(o),\phi(o');\phi(g))$ is equivalent to sampling from $D(o,o';g)$, assuming there is no embedding collision (i.e., $\phi(o) \neq \phi(o'), \forall o\neq o'$), which can be satisfied by simply augmenting any $\phi$ by concatenating the input to the end. Then, we have our desired expression: 
\begin{equation}
\resizebox{\textwidth}{!}{$
        \max_\phi \min_{V}  \mathbb{E}_{p(g)}\big[(1-\gamma)\BE_{\mu_0(o;g)}[V(\phi(o);\phi(g))] + \log \BE_{D(o,o';g)}\big[\mathrm{exp} \big(r(o,g) + \gamma V(\phi(o');\phi(g)) - V(\phi(o),\phi(g))\big)\big]\big]$
        }
\end{equation}
\end{proof} 
\subsection{VIP Implicit Time Contrast Learning Derivation} 
This section provides all intermediate steps to go from~\eqref{eq:min_phi_v1} to~\eqref{eq:implicit-contrastive-learning}. First, we have
\begin{equation}
\resizebox{\textwidth}{!}{$
    \min_\phi  \mathbb{E}_{p(g)}\left[(1-\gamma)\BE_{\mu_0(o;g)}[-V^*(\phi(o);\phi(g))] + \log \BE_{D(o,o';g)}\left[\mathrm{exp} \left(\tilde{\delta}_g(o) + \gamma V(\phi(o');\phi(g)) - V(\phi(o),\phi(g))\right)\right]^{-1}\right].$
    }
\end{equation}
We can equivalently write this objective as
\begin{equation}
\resizebox{\textwidth}{!}{$
    \min_\phi   \mathbb{E}_{p(g)}\left[(1-\gamma)\BE_{\mu_0(o;g)}[-\log e^{V^*(\phi(o);\phi(g))}] + \log \BE_{D(o,o';g)}\left[\mathrm{exp} \left(\tilde{\delta}_g(o) + \gamma V(\phi(o');\phi(g)) - V(\phi(o),\phi(g))\right)\right]^{-1}\right]$.
    }
\end{equation}
Then,
\begin{equation}
\resizebox{\textwidth}{!}{$
\begin{aligned}
        &\min_\phi  \mathbb{E}_{p(g)}\left[(1-\gamma)\BE_{\mu_0(o;g)}\left[-\log e^{V^*(\phi(o);\phi(g))} - \log \BE_{D(o,o';g)}\left[\mathrm{exp} \left(\tilde{\delta}_g(o) + \gamma V(\phi(o');\phi(g)) - V(\phi(o),\phi(g))\right)\right]^{\frac{-1}{1-\gamma}}\right]\right] \\ 
        = &\min_\phi   (1-\gamma)\BE_{p(g), \mu_0(o;g)}\left[\log \frac{e^{-V^*(\phi(o);\phi(g))}}{\BE_{D(o,o';g)}\left[\mathrm{exp} \left(\tilde{\delta}_g(o) + \gamma V(\phi(o');\phi(g)) - V(\phi(o),\phi(g))\right)\right]^{\frac{-1}{1-\gamma}}}\right]
        \end{aligned}$.}
\end{equation}
This is~\eqref{eq:implicit-contrastive-learning} in the main text. 

\subsection{VIP Implicit Repulsion}
\label{appendix:vip-behavior-proposition}
In this section, we formalize the implicit repulsion property of VIP objective (\eqref{eq:implicit-contrastive-learning}); in particular, we prove that under certain assumptions, it always holds for optimal paths.
\begin{proposition}
\label{proposition:temporal-distance-appendix}
Suppose $V^*(s;g) := -\norm{\phi(s)-\phi(g)}_2$ for some $\phi$, under the assumption of deterministic dynamics (as in Proposition~\ref{proposition:dual-value-problem}), for any pair of consecutive states reached by the optimal policy, $(s_t,s_{t+1}) \sim \pi^*$, we have that
\begin{equation}
    \norm{\phi(s_t)-\phi(g)}_2 > \norm{\phi(s_{t+1})-\phi(g)}_2,
\end{equation}
\end{proposition}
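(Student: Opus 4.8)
The plan is to read off the behaviour of $V^*$ from the sparse reward $\tilde{\delta}_g(s) = \mathbb{I}(s=g)-1$ and then transfer strict monotonicity of the value along optimal trajectories into strict monotonicity of the embedding distance. First I would compute $V^*$ in closed form. Under deterministic dynamics, let $k(s)$ be the number of steps the optimal policy takes to reach $g$ from $s$; along that path the agent collects reward $-1$ at each of the $k(s)$ non-goal states and $0$ once $g$ is reached, so summing the discounted returns gives $V^*(s;g) = -\frac{1-\gamma^{k(s)}}{1-\gamma}$ for $\gamma \in (0,1)$. This expression is strictly decreasing in $k(s)$, so states farther from the goal (larger optimal temporal distance) have strictly smaller value, and by Bellman's principle of optimality the continuation of an optimal path is itself optimal, giving $k(s_{t+1}) = k(s_t)-1$ along $\pi^*$.

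The route I would actually present avoids the closed form and works directly from the Bellman optimality equation, which is cleaner and makes the use of the assumptions transparent. For a consecutive pair $(s_t,s_{t+1}) \sim \pi^*$ with $s_t \neq g$, determinism lets me write the backup along the realized transition as $V^*(s_t;g) = \tilde{\delta}_g(s_t) + \gamma V^*(s_{t+1};g) = -1 + \gamma V^*(s_{t+1};g)$, hence $V^*(s_{t+1};g) = \big(V^*(s_t;g)+1\big)/\gamma$. The target inequality $V^*(s_{t+1};g) > V^*(s_t;g)$ then reduces, after clearing the positive denominator $\gamma$ and the positive factor $1-\gamma$, to the single scalar bound $V^*(s_t;g) > -\frac{1}{1-\gamma}$. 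I would establish this bound by noting that $-\frac{1}{1-\gamma}$ is exactly the discounted return of a policy that never reaches $g$, whereas the optimal goal-reaching value is strictly larger since $g$ is reachable in finitely many steps; equivalently it follows immediately from the closed form above, since $\gamma^{k(s_t)} > 0$ forces $\frac{1-\gamma^{k(s_t)}}{1-\gamma} < \frac{1}{1-\gamma}$.

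Finally I would substitute the hypothesis $V^*(s;g) = -\norm{\phi(s)-\phi(g)}_2$ into $V^*(s_{t+1};g) > V^*(s_t;g)$ and negate both sides to obtain $\norm{\phi(s_t)-\phi(g)}_2 > \norm{\phi(s_{t+1})-\phi(g)}_2$, which is the claim. The main thing to handle carefully is not a deep obstacle but the scope of the assumptions: strictness requires $s_t \neq g$ (so that the incurred reward is genuinely $-1$ and a nontrivial transition occurs), and determinism is essential so that the Bellman backup uses the single realized successor $s_{t+1}$ rather than an expectation over next states. I would state both conditions explicitly at the outset, and note that they are precisely the hypotheses carried over from Proposition~\ref{proposition:dual-value-problem}.
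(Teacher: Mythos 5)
Your proof is correct and takes essentially the same route as the paper's: both rest on the one-step Bellman backup $V^*(s_t;g) = -1 + \gamma V^*(s_{t+1};g)$ along the optimal path (valid under deterministic dynamics and the sparse reward $\tilde{\delta}_g$), and both conclude strict monotonicity of $V^*$ along that path --- the paper by writing $V^*$ as a finite geometric sum $-\sum_{k=0}^{K}\gamma^k$ whose magnitude shrinks as the goal nears, you by the equivalent scalar bound $V^*(s_t;g) > -\tfrac{1}{1-\gamma}$, which is just the statement that this sum lies strictly above the never-reaching value. Your explicit flagging of $s_t \neq g$ and of finite-step reachability of $g$ makes precise two conditions that the paper's proof uses implicitly.
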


\begin{proof}
First, we note that 
\begin{equation}
\label{eq:V-Q-identity}
V^*(s;g) = \max_a Q^*(s,a;g)
\end{equation}
A proof can be found in Section 1.1.3 of~\citet{agarwal2019reinforcement}. Then, due to the Bellman optimality equation, we have that 
\begin{equation}
\label{eq:bellman-optimality-Q}
    Q^*(s,a;g) = r(s,g) + \gamma \mathbb{E}_{s' \sim T(s,a)} \max_{a'} Q^*(s',a';g)
\end{equation}
Given that the dynamics is deterministic and~\eqref{eq:V-Q-identity}, we have that 
\begin{equation}
    Q^*(s,a;g) = r(s,g) + \gamma V^*(s';g)
\end{equation}
Now, for $(s_t,a_t,s_{t+1}) \sim \pi^*$, this further simplifies to
\begin{equation}
\label{eq:bellman-optimality-V}
    V^*(s_t;g) = r(s_t,g) + \gamma V^*(s_{t+1};g)
\end{equation}
Note that since $V^*$ is also the optimal value function, given that $r(s_t,g) = \mathbb{I}(s_t = g) - 1$, $V^*(s_t;g)$ is the \textit{negative} discounted distance of the shortest path between $s_t$ ans $g$. In particular, since $V^*(g;g) = 0$ by construction, we have that $V^*(s_t;g) = - \sum_{k=0}^K \gamma^k$ (this also clearly satisfies~\eqref{eq:bellman-optimality-V}), where the shortest path (i.e., the path $\pi^*$ takes) between $s_t$ and $g$ are $K$ steps long. Now, giving that we assume $V^*(s_t;g)$ can be expressed as $-\norm{\phi(s_t)-\phi(g)}_2$ for some $\phi$, it immediately follows that 
\begin{equation}
    \norm{\phi(s_t)-\phi(g)}_2 > \norm{\phi(s_{t+1})-\phi(g)}_2, \quad \forall (s_t,s_{t+1}) \sim \pi^*
\end{equation}
\end{proof}
The implication of this result is that at least along the trajectories generated by the optimal policy, the representation will have monotonically decreasing and well-behaved embedding distances to the goal. Now, since in practice, VIP is trained on goal-directed (human video) trajectories, which are near-optimal for goal-reaching, we expect this smoothness result to be informative about VIP's embedding practical behavior and help formalize out intuition about the mechanism of implicit time contrastive learning. As confirmed by our qualitative study in Section~\ref{sec:qualitative-analysis}, We highlight that VIP's embedding is indeed much smoother than other baselines along test trajectories on both Ego4D and on our real-robot dataset. This smoothness along optimal paths makes it easier for the downstream control optimizer to discover these paths, conferring VIP representation effective zero-shot reward-specification capability that is not attained by any other comparison.

\section{VIP Training Details} 
\label{appendix:vip-details}
\subsection{Dataset Processing and Sampling}
We use the exact same pre-processed Ego4D dataset as in R3M, in which long raw videos are first processed into shorter videos consisting of \rebuttal{10-150 frames} each. In total, there are approximately 72000 clips and 4.3 million frames in the dataset. Within a sampled batch, we first sample a set of videos, and then sample a sub-trajectory from each video (Step 3 in Algorithm~\ref{algo:vip}). In this formulation, each sub-trajectory is treated as a video segment from the algorithm's perspective; this can viewed as a variant of trajectory data augmentation. As in R3M, we apply random crop at a video level within a batch, so all frames from the same video sub-trajectory are cropped the same way. Then, each raw observation is resized and center-cropped to have shape $224\times 224 \times 3$ before passed into the visual encoder. Finally, as in standard contrastive learning and R3M, for each sampled sub-trajectory $\{o^i_{t}, ..., o^i_{k}, o^i_{k+1},..., o^i_{T}\}$, we also sample additional $3$ negative samples $(\tilde{o}_j, \tilde{o}_{j+1})$ from separate video sequences to be included in the log-sum-exp term in $\mathcal{L}(\phi)$.

\subsection{VIP Hyperparameters}
Hyperparameters used can be found in Table \ref{table:vip-hyperparameters}.
\begin{table}[ht]
\centering
\caption{VIP Architecture \& Hyperparameters.}\label{table:vip-hyperparameters}
\begin{tabular}{cll}
\toprule
& Name & Value \\
  \midrule
Architecture    
                                      & Visual Backbone     & ResNet50~\citep{he2016deep}        \\
                                      & FC Layer Output Dim & 1024 \\ 
\midrule
Hyperparameters  & Optimizer & Adam~\citep{kingma2014adam} \\
              & Learning rate & 0.0001 \\ 
              & $L_1$ weight penalty & 0.001 \\
              & $L_1$ weight penalty & 0.001 \\
              & Mini-batch size      & 32 \\
              & Discount factor $\gamma$     & 0.98 \\
\bottomrule
\end{tabular}
\end{table} %

\subsection{VIP Pytorch Pseudocode}
\label{appendix:vip-pseudocode}
In this section, we present a pseudocode of VIP written in PyTorch~\citep{paszke2019pytorch}, Algorithm~\ref{algo:vip-code}. As shown, the main training loop can be as short as 10 lines of code. 
\begin{algorithm}
\caption{VIP PyTorch Pseudocode}
\label{algo:vip-code}
\definecolor{codeblue}{rgb}{0.28125,0.46875,0.8125}
\lstset{
    basicstyle=\fontsize{9pt}{9pt}\ttfamily\bfseries,
    commentstyle=\fontsize{9pt}{9pt}\color{codeblue},
    keywordstyle=
}
\begin{lstlisting}[language=python]
# D: offline dataset
# phi: vision architecture

# training loop
for (o_0, o_t1,o_t2, g) in D:
    phi_g = phi(o_g)
    V_0 = - torch.linalg.norm(phi(o_0), phi_g)
    V_t1 = - torch.linalg.norm(phi(o_t1), phi_g)
    V_t2 = - torch.linalg.norm(phi(o_t2), phi_g)
    VIP_loss = (1-gamma)*-V_0.mean() + torch.logsumexp(V_t1+1-gamma*V_t2)
    optimizer.zero_grad()
    VIP_loss.backward()
    optimizer.step()
\end{lstlisting}
\end{algorithm}

\section{Simulation Experiment Details.}
\subsection{FrankaKitchen Task Descriptions}
\label{appendix:frankakitchen} 
In this section, we describe the FrankaKitchen suite for our simulation experiments. We use 12 tasks from the \texttt{v0.1} version\footnote{\href{https://github.com/vikashplus/mj\_envs/tree/v0.1real/mj\_envs/envs/relay\_kitchen}{https://github.com/vikashplus/mj\_envs/tree/v0.1real/mj\_envs/envs/relay\_kitchen}} of the environment.

We use the environment default initial state as the initial state and frame for all tasks in the Hard setting. In the Easy setting, we use the 20th frame of a demonstration trajectory and its corresponding environment state as the initial frame and state. The goal frame for both settings is chosen to be the last frame of the same demonstration trajectory. The initial frames and goal frame for all 12 tasks and 3 camera views are illustrated in Figure~\ref{figure:frankakitchen-all-1}-\ref{figure:frankakitchen-all-2}. In the Easy setting, the horizon for all tasks is 50 steps; in the Hard setting, the horizon is 100 steps. Note that using the 20th frame as the initial state is a crude way for initializing the robot, and for some tasks, this initialization makes the task substantially easier, whereas for others, the task is still considerably difficult. Furthermore, some tasks become naturally more difficult depending on camera viewpoints. For these reasons, it is worth noting that our experiment's emphasis is on the \textit{aggregate} behavior of pre-trained representations, instead of trying to solve any particular task as well as possible.

\begin{figure}
\centering
  \begin{subfigure}[\texttt{ldoor\_close (left)}]{
  \includegraphics[width=.1\linewidth]{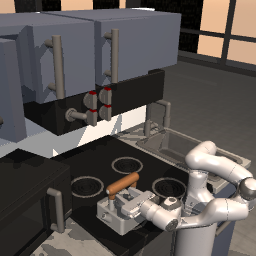}\hfill
  \includegraphics[width=.1\linewidth]{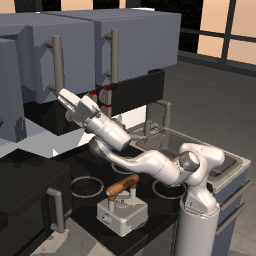}\hfill
  \includegraphics[width=.1\linewidth]{figures/frankakitchen/kitchen_ldoor_close-v3_left_cam_49.png}}
  \end{subfigure}
  \hfill
  \begin{subfigure}[\texttt{ldoor\_close (center)}]{
  \includegraphics[width=.1\linewidth]{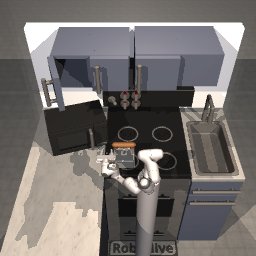}\hfill
  \includegraphics[width=.1\linewidth]{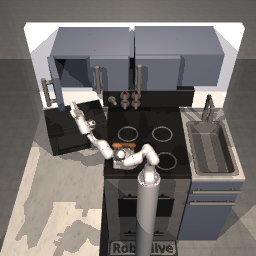}\hfill
  \includegraphics[width=.1\linewidth]{figures/frankakitchen/kitchen_ldoor_close-v3_default_49.png}}
  \end{subfigure}
  \hfill 
  \begin{subfigure}[\texttt{ldoor\_close (right)}]{
  \includegraphics[width=.1\linewidth]{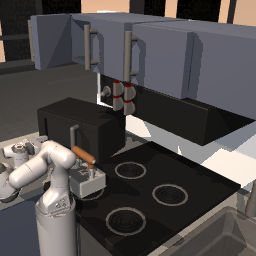}\hfill
  \includegraphics[width=.1\linewidth]{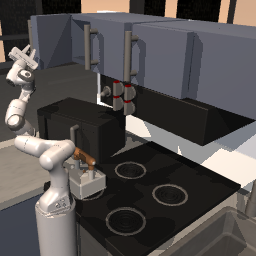}\hfill
  \includegraphics[width=.1\linewidth]{figures/frankakitchen/kitchen_ldoor_close-v3_right_cam_49.png}}
  \end{subfigure}
  \hfill 
\centering
\hfill 
  \begin{subfigure}[\texttt{ldoor\_open (left)}]{
  \includegraphics[width=.1\linewidth]{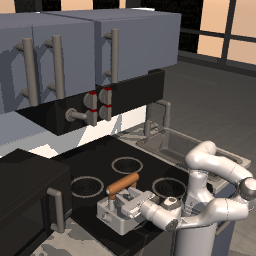}\hfill
  \includegraphics[width=.1\linewidth]{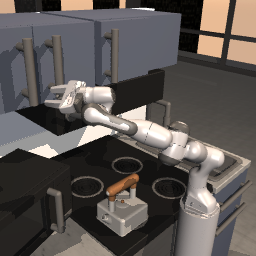}\hfill
  \includegraphics[width=.1\linewidth]{figures/frankakitchen/kitchen_ldoor_open-v3_left_cam_49.png}}
  \end{subfigure}
  \hfill
  \begin{subfigure}[\texttt{ldoor\_open (center)}]{
  \includegraphics[width=.1\linewidth]{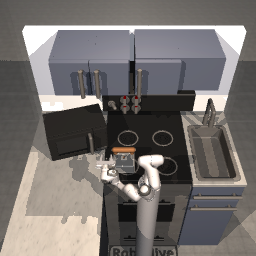}\hfill
  \includegraphics[width=.1\linewidth]{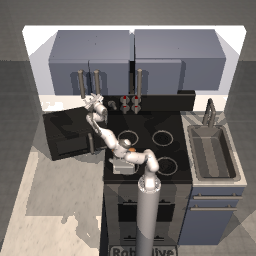}\hfill
  \includegraphics[width=.1\linewidth]{figures/frankakitchen/kitchen_ldoor_open-v3_default_49.png}}
  \end{subfigure}
  \hfill 
  \begin{subfigure}[\texttt{ldoor\_open (right)}]{
  \includegraphics[width=.1\linewidth]{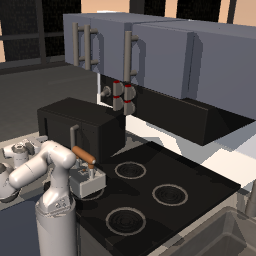}\hfill
  \includegraphics[width=.1\linewidth]{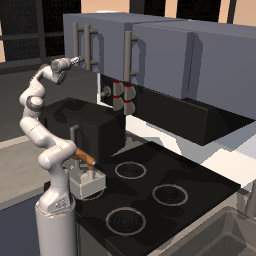}\hfill
  \includegraphics[width=.1\linewidth]{figures/frankakitchen/kitchen_ldoor_open-v3_right_cam_49.png}}
  \end{subfigure}
  \hfill 
       \centering
 \hfill 
  \begin{subfigure}[\texttt{rdoor\_close (left)}]{
  \includegraphics[width=.1\linewidth]{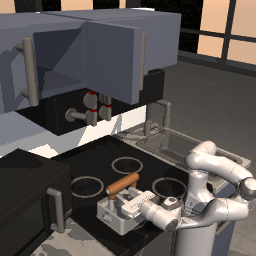}\hfill
  \includegraphics[width=.1\linewidth]{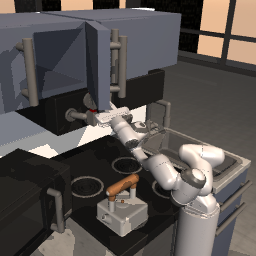}\hfill
  \includegraphics[width=.1\linewidth]{figures/frankakitchen/kitchen_rdoor_close-v3_left_cam_49.png}}
  \end{subfigure}
  \hfill 
  \begin{subfigure}[\texttt{rdoor\_close (center)}]{
  \includegraphics[width=.1\linewidth]{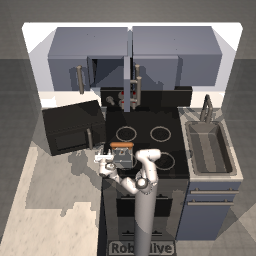}\hfill
  \includegraphics[width=.1\linewidth]{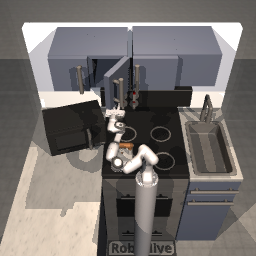}\hfill
  \includegraphics[width=.1\linewidth]{figures/frankakitchen/kitchen_rdoor_close-v3_default_49.png}}
  \end{subfigure}
  \hfill 
  \begin{subfigure}[\texttt{rdoor\_close (right)}]{
  \includegraphics[width=.1\linewidth]{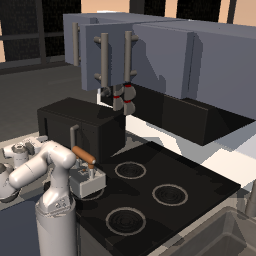}\hfill
  \includegraphics[width=.1\linewidth]{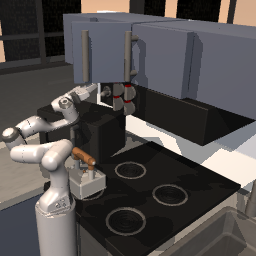}\hfill
  \includegraphics[width=.1\linewidth]{figures/frankakitchen/kitchen_rdoor_close-v3_right_cam_49.png}}
  \end{subfigure}
  \hfill 
     \centering
 \hfill 
  \begin{subfigure}[\texttt{rdoor\_open (left)}]{
  \includegraphics[width=.1\linewidth]{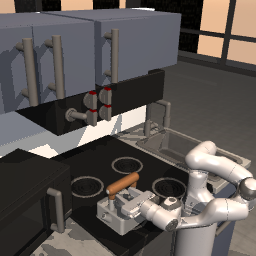}\hfill
  \includegraphics[width=.1\linewidth]{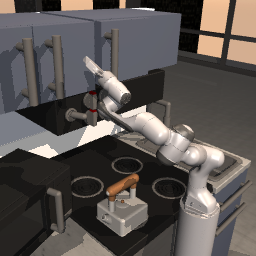}\hfill
  \includegraphics[width=.1\linewidth]{figures/frankakitchen/kitchen_rdoor_open-v3_left_cam_49.png}}
  \end{subfigure}
  \hfill 
  \begin{subfigure}[\texttt{rdoor\_open (center)}]{
  \includegraphics[width=.1\linewidth]{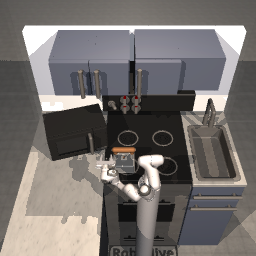}\hfill
  \includegraphics[width=.1\linewidth]{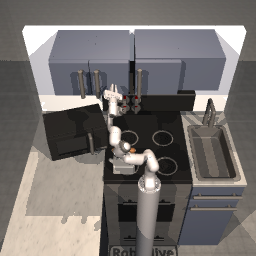}\hfill
  \includegraphics[width=.1\linewidth]{figures/frankakitchen/kitchen_rdoor_open-v3_default_49.png}}
  \end{subfigure}
  \hfill 
  \begin{subfigure}[\texttt{rdoor\_open (right)}]{
  \includegraphics[width=.1\linewidth]{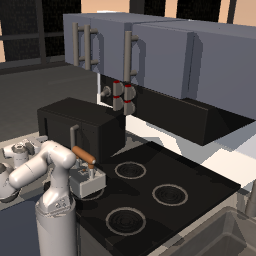}\hfill
  \includegraphics[width=.1\linewidth]{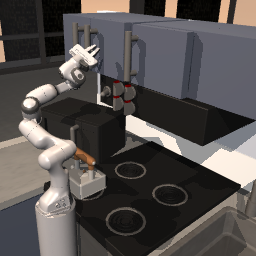}\hfill
  \includegraphics[width=.1\linewidth]{figures/frankakitchen/kitchen_rdoor_open-v3_right_cam_49.png}}
  \end{subfigure}
  \hfill 
   \centering
 \hfill 
  \begin{subfigure}[\texttt{sdoor\_close (left)}]{
  \includegraphics[width=.1\linewidth]{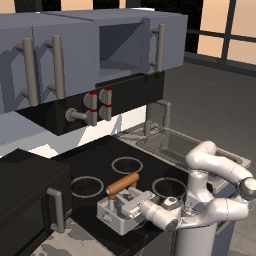}\hfill
  \includegraphics[width=.1\linewidth]{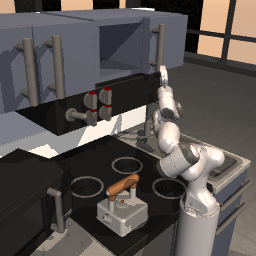}\hfill
  \includegraphics[width=.1\linewidth]{figures/frankakitchen/kitchen_sdoor_close-v3_left_cam_49.png}}
  \end{subfigure}
  \hfill 
  \begin{subfigure}[\texttt{sdoor\_close (center)}]{
  \includegraphics[width=.1\linewidth]{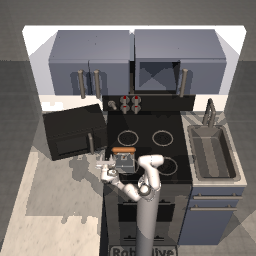}\hfill
  \includegraphics[width=.1\linewidth]{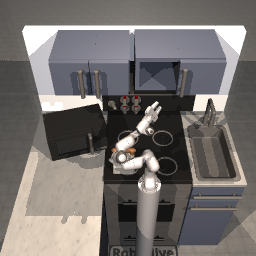}\hfill
  \includegraphics[width=.1\linewidth]{figures/frankakitchen/kitchen_sdoor_close-v3_default_49.png}}
  \end{subfigure}
  \hfill 
  \begin{subfigure}[\texttt{sdoor\_close (right)}]{
  \includegraphics[width=.1\linewidth]{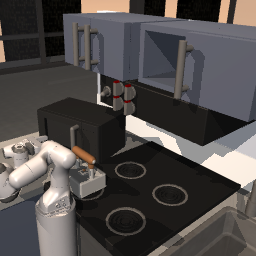}\hfill
  \includegraphics[width=.1\linewidth]{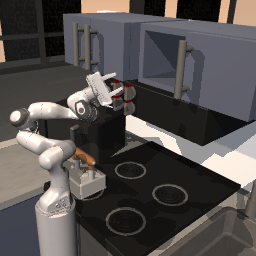}\hfill
  \includegraphics[width=.1\linewidth]{figures/frankakitchen/kitchen_sdoor_close-v3_right_cam_49.png}}
  \end{subfigure}
  \hfill 
   \centering
 \hfill 
  \begin{subfigure}[\texttt{sdoor\_open (left)}]{
  \includegraphics[width=.1\linewidth]{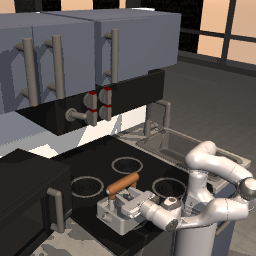}\hfill
  \includegraphics[width=.1\linewidth]{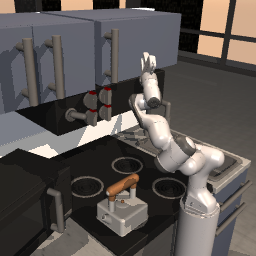}\hfill
  \includegraphics[width=.1\linewidth]{figures/frankakitchen/kitchen_sdoor_open-v3_left_cam_49.png}}
  \end{subfigure}
  \hfill 
  \begin{subfigure}[\texttt{sdoor\_open (center)}]{
  \includegraphics[width=.1\linewidth]{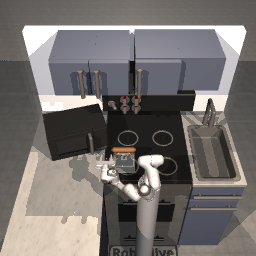}\hfill
  \includegraphics[width=.1\linewidth]{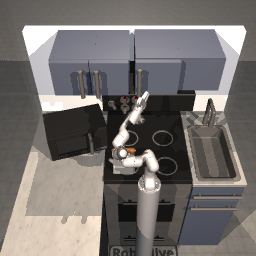}\hfill
  \includegraphics[width=.1\linewidth]{figures/frankakitchen/kitchen_sdoor_open-v3_default_49.png}}
  \end{subfigure}
  \hfill 
  \begin{subfigure}[\texttt{sdoor\_open (right)}]{
  \includegraphics[width=.1\linewidth]{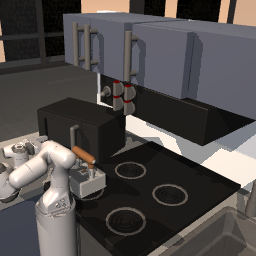}\hfill
  \includegraphics[width=.1\linewidth]{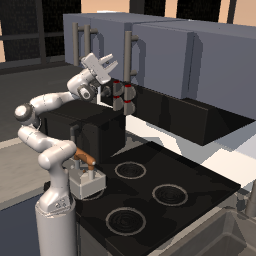}\hfill
  \includegraphics[width=.1\linewidth]{figures/frankakitchen/kitchen_sdoor_open-v3_right_cam_49.png}}
  \end{subfigure}
  \hfill 
\caption{Initial frame (Easy), initial frame (Hard), and goal frame for all 12 tasks and 3 camera views in our FrankaKitchen suite.} 
\label{figure:frankakitchen-all-1}
\end{figure}

\begin{figure}
    \centering
  \begin{subfigure}[\texttt{micro\_close (left)}]{
  \includegraphics[width=.1\linewidth]{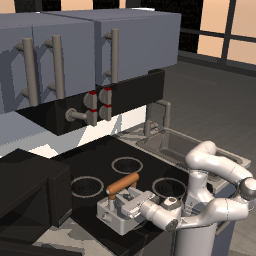}\hfill
  \includegraphics[width=.1\linewidth]{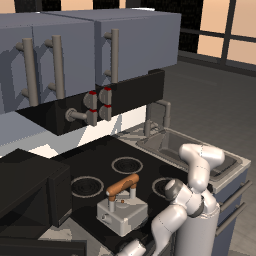}\hfill
  \includegraphics[width=.1\linewidth]{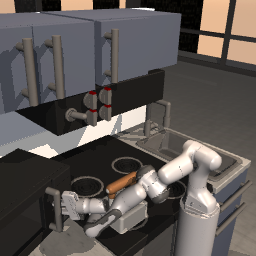}}
  \end{subfigure}
  \hfill 
  \begin{subfigure}[\texttt{micro\_close (center)}]{
  \includegraphics[width=.1\linewidth]{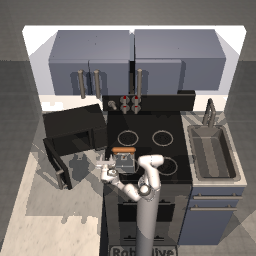}\hfill
  \includegraphics[width=.1\linewidth]{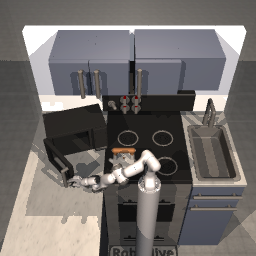}\hfill
  \includegraphics[width=.1\linewidth]{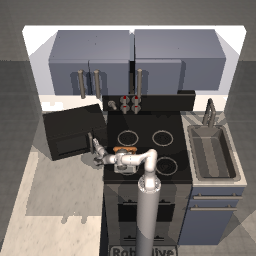}}
  \end{subfigure}
  \hfill 
  \begin{subfigure}[\texttt{micro\_close (right)}]{
  \includegraphics[width=.1\linewidth]{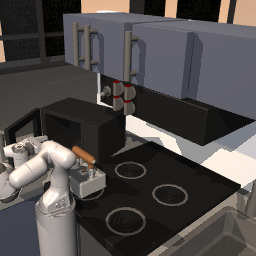}\hfill
  \includegraphics[width=.1\linewidth]{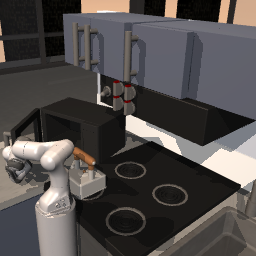}\hfill
  \includegraphics[width=.1\linewidth]{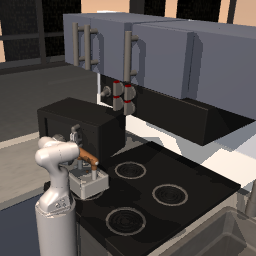}}
  \end{subfigure}
  \hfill 
     \centering
 \hfill 
  \begin{subfigure}[\texttt{micro\_open (left)}]{
  \includegraphics[width=.1\linewidth]{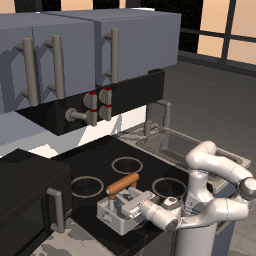}\hfill
  \includegraphics[width=.1\linewidth]{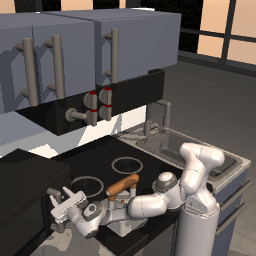}\hfill
  \includegraphics[width=.1\linewidth]{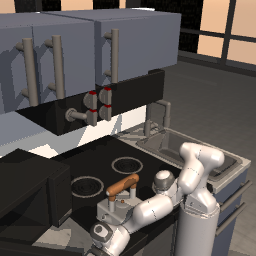}}
  \end{subfigure}
  \hfill 
  \begin{subfigure}[\texttt{micro\_open (center)}]{
  \includegraphics[width=.1\linewidth]{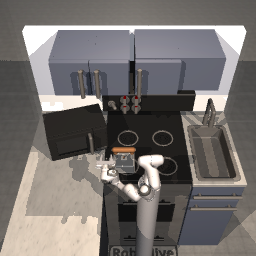}\hfill
  \includegraphics[width=.1\linewidth]{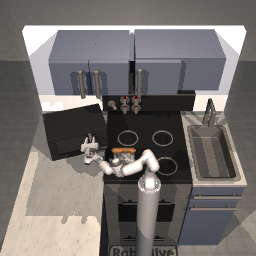}\hfill
  \includegraphics[width=.1\linewidth]{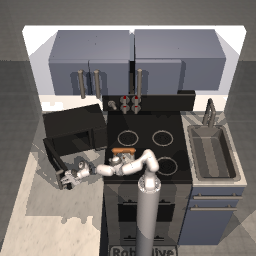}}
  \end{subfigure}
  \hfill 
  \begin{subfigure}[\texttt{micro\_open (right)}]{
  \includegraphics[width=.1\linewidth]{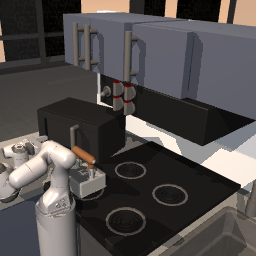}\hfill
  \includegraphics[width=.1\linewidth]{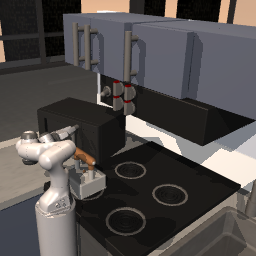}\hfill
  \includegraphics[width=.1\linewidth]{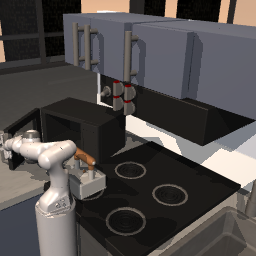}}
  \end{subfigure}
  \hfill 
     \centering
 \hfill 
  \begin{subfigure}[\texttt{knob1\_on (left)}]{
  \includegraphics[width=.1\linewidth]{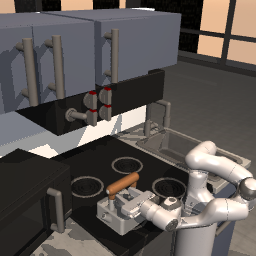}\hfill
  \includegraphics[width=.1\linewidth]{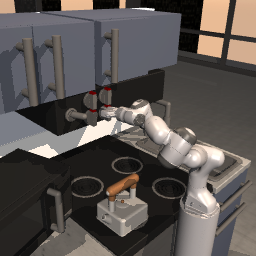}\hfill
  \includegraphics[width=.1\linewidth]{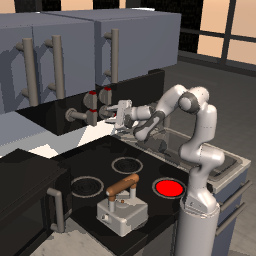}}
  \end{subfigure}
  \hfill 
  \begin{subfigure}[\texttt{knob1\_on (center)}]{
  \includegraphics[width=.1\linewidth]{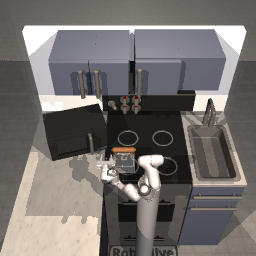}\hfill
  \includegraphics[width=.1\linewidth]{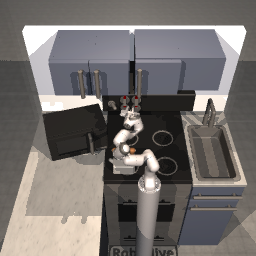}\hfill
  \includegraphics[width=.1\linewidth]{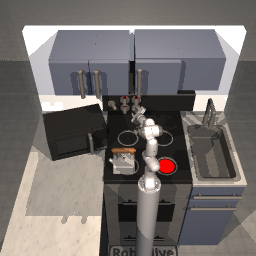}}
  \end{subfigure}
  \hfill 
  \begin{subfigure}[\texttt{knob1\_on (right)}]{
  \includegraphics[width=.1\linewidth]{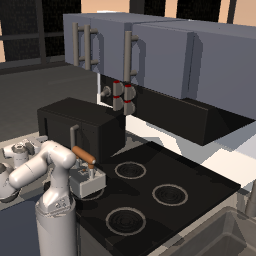}\hfill
  \includegraphics[width=.1\linewidth]{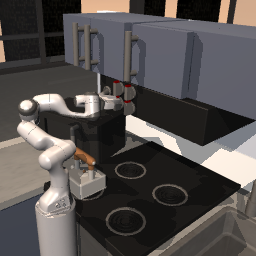}\hfill
  \includegraphics[width=.1\linewidth]{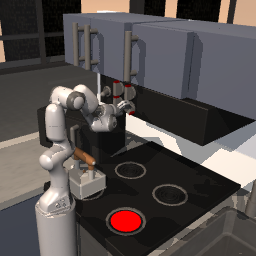}}
  \end{subfigure}
  \hfill 
     \centering
 \hfill 
  \begin{subfigure}[\texttt{knob1\_off (left)}]{
  \includegraphics[width=.1\linewidth]{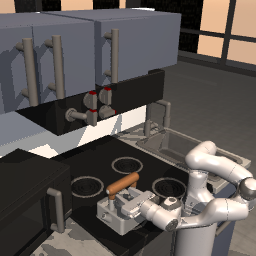}\hfill
  \includegraphics[width=.1\linewidth]{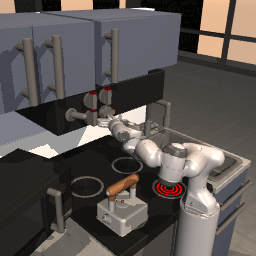}\hfill
  \includegraphics[width=.1\linewidth]{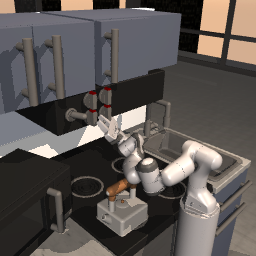}}
  \end{subfigure}
  \hfill 
  \begin{subfigure}[\texttt{knob1\_off (center)}]{
  \includegraphics[width=.1\linewidth]{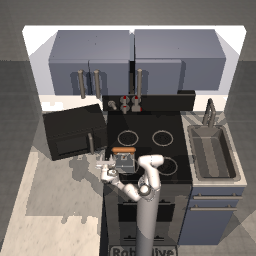}\hfill
  \includegraphics[width=.1\linewidth]{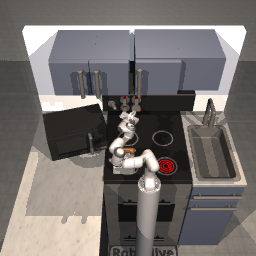}\hfill
  \includegraphics[width=.1\linewidth]{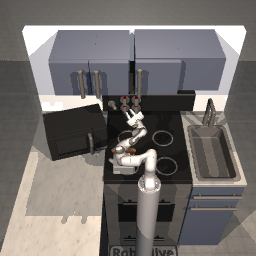}}
  \end{subfigure}
  \hfill 
  \begin{subfigure}[\texttt{knob1\_off (right)}]{
  \includegraphics[width=.1\linewidth]{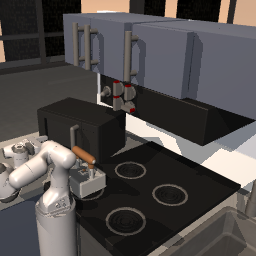}\hfill
  \includegraphics[width=.1\linewidth]{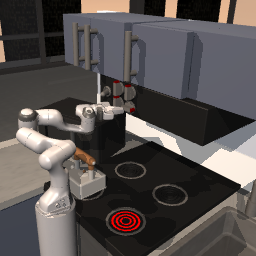}\hfill
  \includegraphics[width=.1\linewidth]{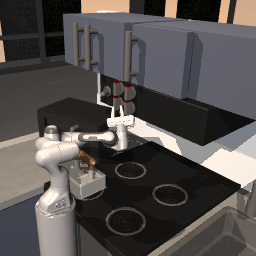}}
  \end{subfigure}
  \hfill 
     \centering
 \hfill 
  \begin{subfigure}[\texttt{light\_on (left)}]{
  \includegraphics[width=.1\linewidth]{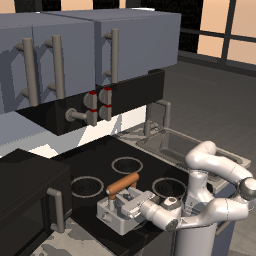}\hfill
  \includegraphics[width=.1\linewidth]{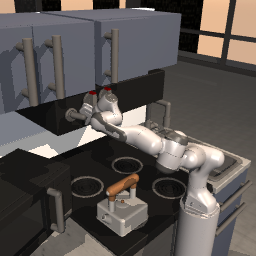}\hfill
  \includegraphics[width=.1\linewidth]{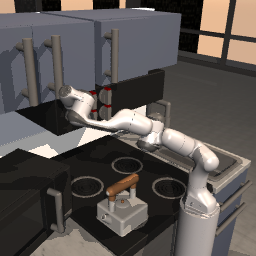}}
  \end{subfigure}
  \hfill 
  \begin{subfigure}[\texttt{light\_on (center)}]{
  \includegraphics[width=.1\linewidth]{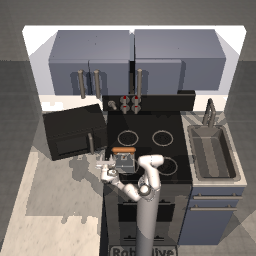}\hfill
  \includegraphics[width=.1\linewidth]{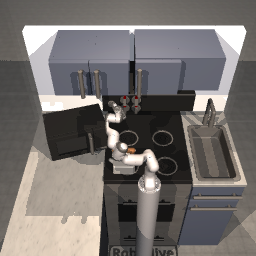}\hfill
  \includegraphics[width=.1\linewidth]{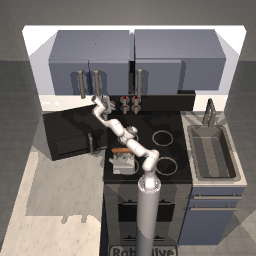}}
  \end{subfigure}
  \hfill 
  \begin{subfigure}[\texttt{light\_on (right)}]{
  \includegraphics[width=.1\linewidth]{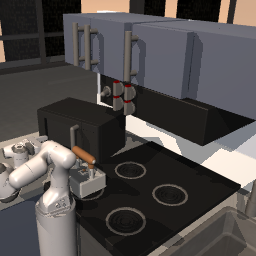}\hfill
  \includegraphics[width=.1\linewidth]{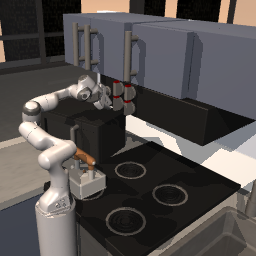}\hfill
  \includegraphics[width=.1\linewidth]{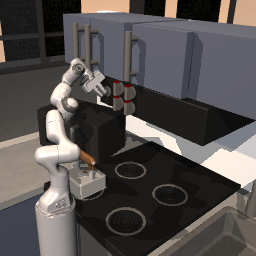}}
  \end{subfigure}
  \hfill 
     \centering
 \hfill 
  \begin{subfigure}[\texttt{light\_off (left)}]{
  \includegraphics[width=.1\linewidth]{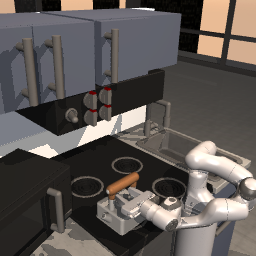}\hfill
  \includegraphics[width=.1\linewidth]{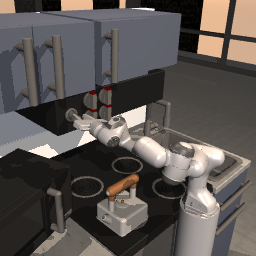}\hfill
  \includegraphics[width=.1\linewidth]{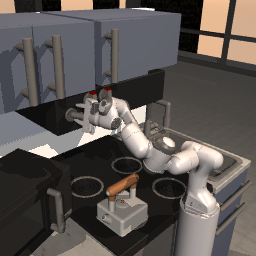}}
  \end{subfigure}
  \hfill 
  \begin{subfigure}[\texttt{light\_off (center)}]{
  \includegraphics[width=.1\linewidth]{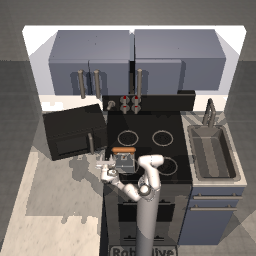}\hfill
  \includegraphics[width=.1\linewidth]{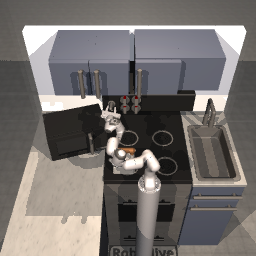}\hfill
  \includegraphics[width=.1\linewidth]{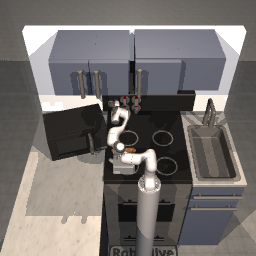}}
  \end{subfigure}
  \hfill 
  \begin{subfigure}[\texttt{light\_off (right)}]{
  \includegraphics[width=.1\linewidth]{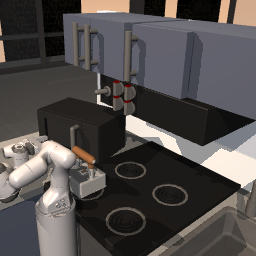}\hfill
  \includegraphics[width=.1\linewidth]{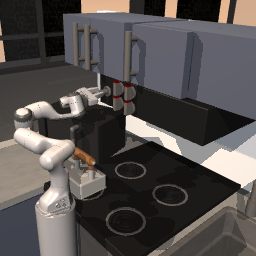}\hfill
  \includegraphics[width=.1\linewidth]{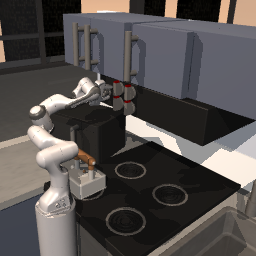}}
  \end{subfigure}
  \hfill 
  \caption{Initial frame (Easy), initial frame (Hard), and goal frame for all 12 tasks and 3 camera views in our FrankaKitchen suite.} 
\label{figure:frankakitchen-all-2}
\end{figure}
\subsection{In-Domain Representation Probing}
\label{appendix:toy-experiment} 
In this section, we describe the experiment we performed to generate the in-domain VIP vs. TCN comparison in Figure~\ref{figure:toy-experiment}. We fit VIP and TCN representations using 100 demonstrations from the FrankaKitchen \texttt{sdoor\_open} task (center view). For TCN, we use R3M's implementation of the TCN loss without any modification; this also allows our findings in Figure~\ref{figure:toy-experiment} to extend to the main experiment section.  The visual architecture is ResNet34, and the output dimension is $2$, which enables us to directly visualize the learned embedding. Different from the out-of-domain version of VIP, we also do not perform weight penalty, trajectory-level random cropping data augmentation, or additional negative sampling. Besides these choices, we use the same hyperparameters as in Table~\ref{table:vip-hyperparameters} and train for 2000 batches.

\subsection{Trajectory Optimization}
\label{appendix:traj-opt}
We use a publicly available implementation of MPPI\footnote{\href{https://github.com/aravindr93/trajopt/blob/master/trajopt/algos/mppi.py}{https://github.com/aravindr93/trajopt/blob/master/trajopt/algos/mppi.py}}, and make no modification to the algorithm or the default hyperparameters. In particular, the planning horizon is $12$ and 32 sequences of actions are proposed per action step. Because the embedding reward (\eqref{eq:embedding-reward}) is the goal-embedding distance difference, the score (i.e., sum of per-transition reward) of a proposed sequence of actions is equivalent to the negative embedding distance (i.e., $S_\phi(\phi(o_T);\phi(g))$) at the last observation. 

\subsubsection{Robot and Object Pose Error Analysis}
In this section, we visualize the per-step robot and object pose $L_2$ error with respect to the goal-image poses. We report the non-cumulative curves (on the success rate as well) for more informative analysis. 
\begin{figure*}[t!]
\includegraphics[width=\textwidth]{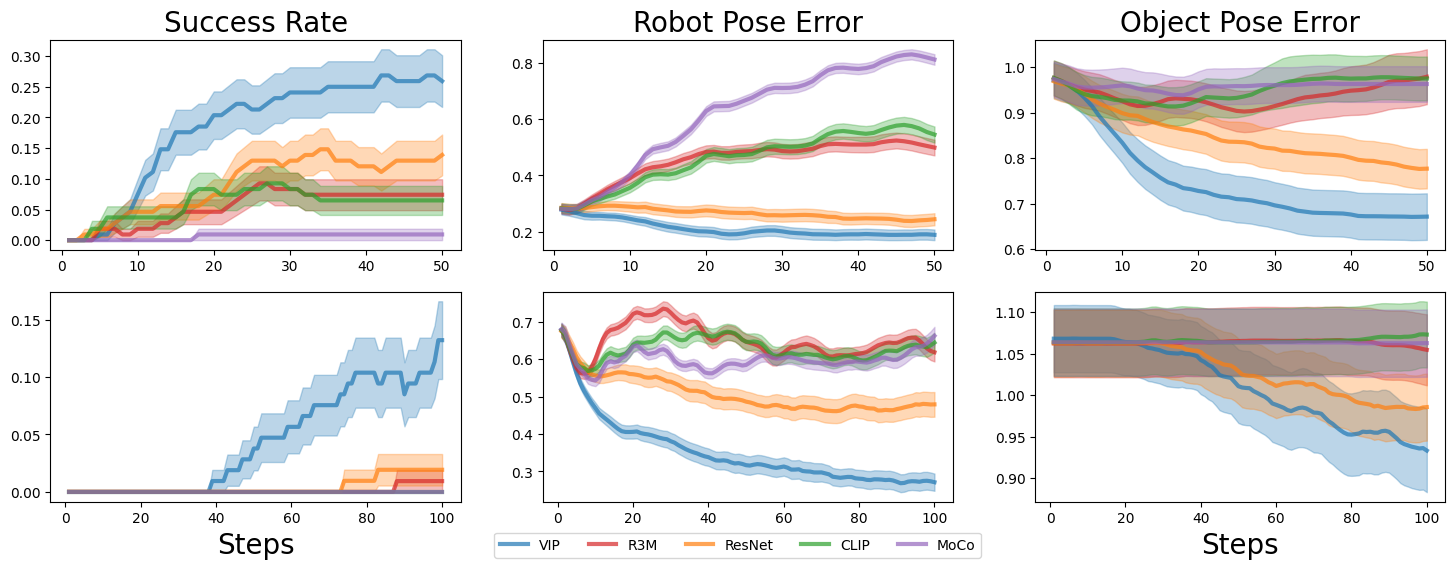}
\vspace{-0.5cm}
\caption{Trajectory optimization results with pose errors.}
\vspace{-0.5cm}
\label{figure:traj-opt-results}
\end{figure*}

\subsection{Reinforcement Learning}
\label{appendix:online-rl}
We use a publicly available implementation of NPG\footnote{\href{https://github.com/aravindr93/mjrl/blob/master/mjrl/algos/npg\_cg.py}{https://github.com/aravindr93/mjrl/blob/master/mjrl/algos/npg\_cg.py}}, and make no modification to the algorithm or the default hyperparameters. In the Easy (resp. Hard) setting, we train the policy until 500000 (resp. 1M) real environment steps are taken. For evaluation, 
we report the cumulative maximum success rate on 50 test rollouts from each task configuration (50*108=5400 total rollouts) every 10000 step. 

\section{Real-World Robot Experiment Details}
\label{appendix:real-world-robot-experiments}

\subsection{Task Descriptions}
The robot learning environment is illustrated in Figure~\ref{figure:real-robot-environment}; a RealSense camera is mounted on the right edge of the table, and we only use the RGB image stream without depth information for data collection and policy learning.

We collect offline data $D_{\mathrm{task}}$ for each task via kinesthetic playback, and the object initial placement is randomized for each trajectory. On the simplest \texttt{CloseDrawer} task, we combine 10 expert demonstrations with 20 sub-optimal failure trajectories to increase learning difficulty. For the other three tasks, we collect 20 expert demonstrations, which we found are difficult enough for learning good policies. Each demonstration is 50-step long collected at 25Hz. The initial state for the robot is fixed for each demonstration and test rollout, but the object initial position is randomized. The task success is determined based on a visual criterion that we manually check for each test rollout. The full task breakdown is described in Table~\ref{table:real-world-task-descriptions}. 

Each task is specified via a set of goal images that are chosen to be the last frame of all demonstrations for the task. Hence, the goal embedding used to compute the embedding reward (\eqref{eq:embedding-reward}( for each task is the average over the embeddings of all goal frames.

\begin{table}
\footnotesize
  \caption{Real-world robotics tasks descriptions.}
\centering
\resizebox{\textwidth}{!}{
  \begin{tabular}{|l|c|c|c|}
  \toprule
  {Environment} & Object Type & Dataset & Success Criterion\\
  \midrule  %
  \texttt{CloseDrawer} & Articulated Object & 10 demos + 20 failures & the drawer is closed enough that the spring loads. \\ 
  \texttt{PushBottle} & Transparent Object & 20 demonstrations & the bottle is parallel to the goal line set by the icecream cone. \\
  \texttt{PlaceMelon} & Soft Object & 20 demonstrations & the watermelon toy is fully placed in the plate.  \\
  \texttt{FoldTowel} & Deformable Object & 20 demonstrations & the bottom half of the towel is cleanly covered by the top half. \\
  \bottomrule
 \end{tabular}}
  \label{table:real-world-task-descriptions}
\end{table}

\begin{figure}
\centering 
\includegraphics[width=0.5\columnwidth]{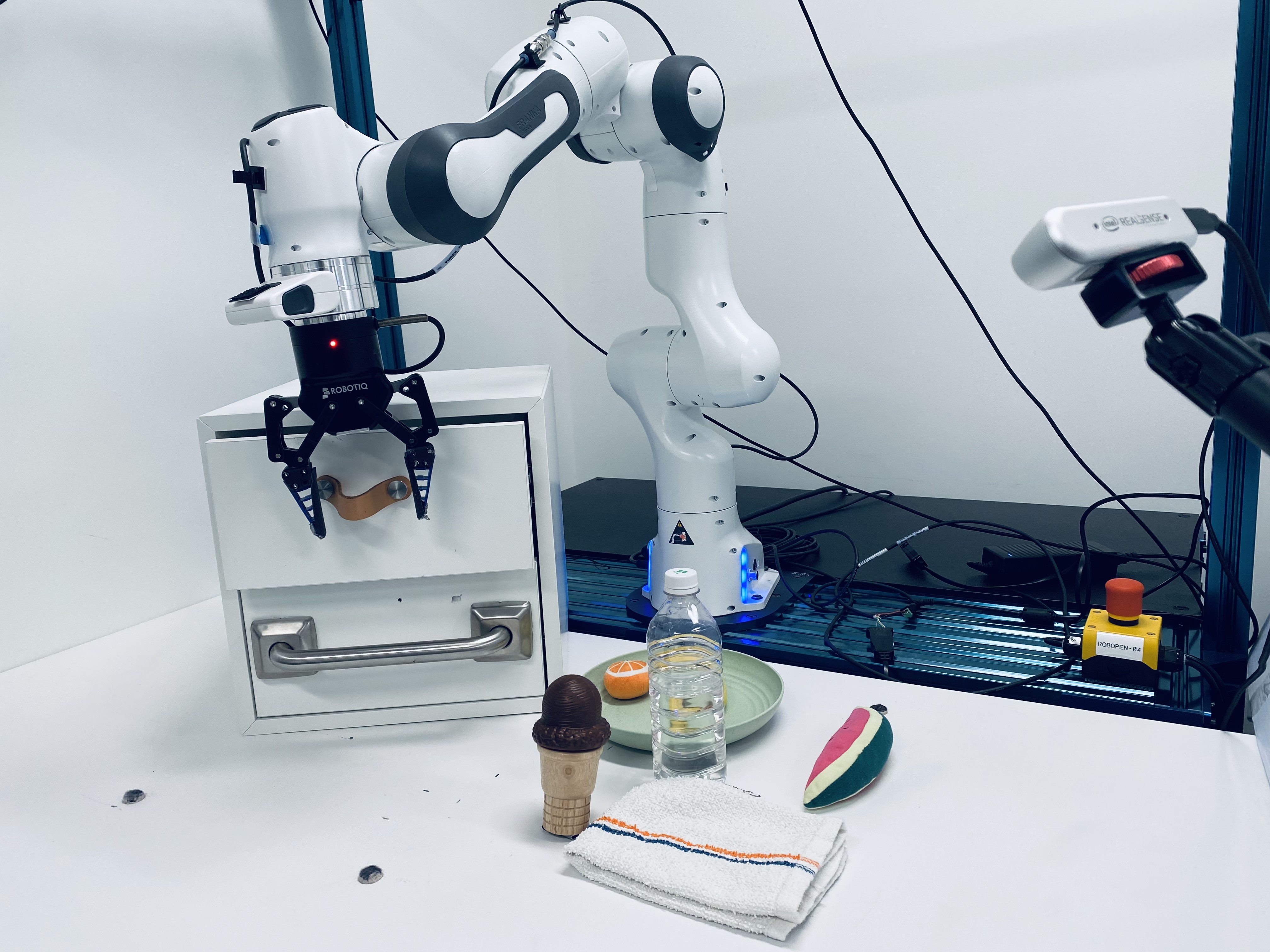}
\caption{Real-robot setup.} 
\label{figure:real-robot-environment}
\end{figure}

The tasks (in their initial positions) using a separate high-resolution phone camera are visualized in Figure~\ref{figure:real-robot-tasks}. Sample demonstrations in the robot camera view are visualized in Figure~\ref{figure:real-robot-demonstrations}.

\begin{figure}
\centering 
\subfigure[\texttt{CloseDrawer}]{ \includegraphics[width=0.24\columnwidth]{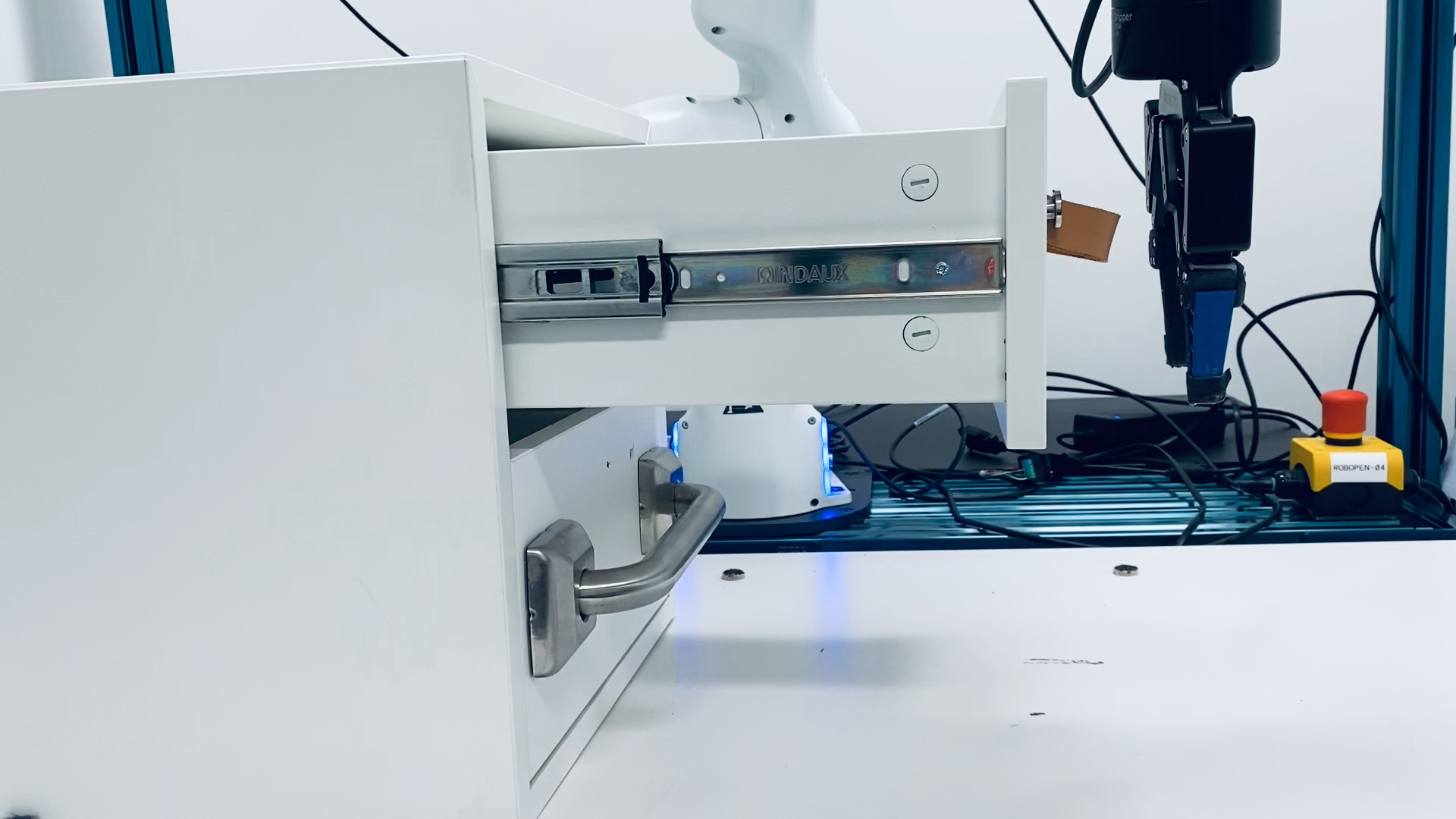}}
\subfigure[\texttt{PushBottle}]{\includegraphics[width=0.24\columnwidth]{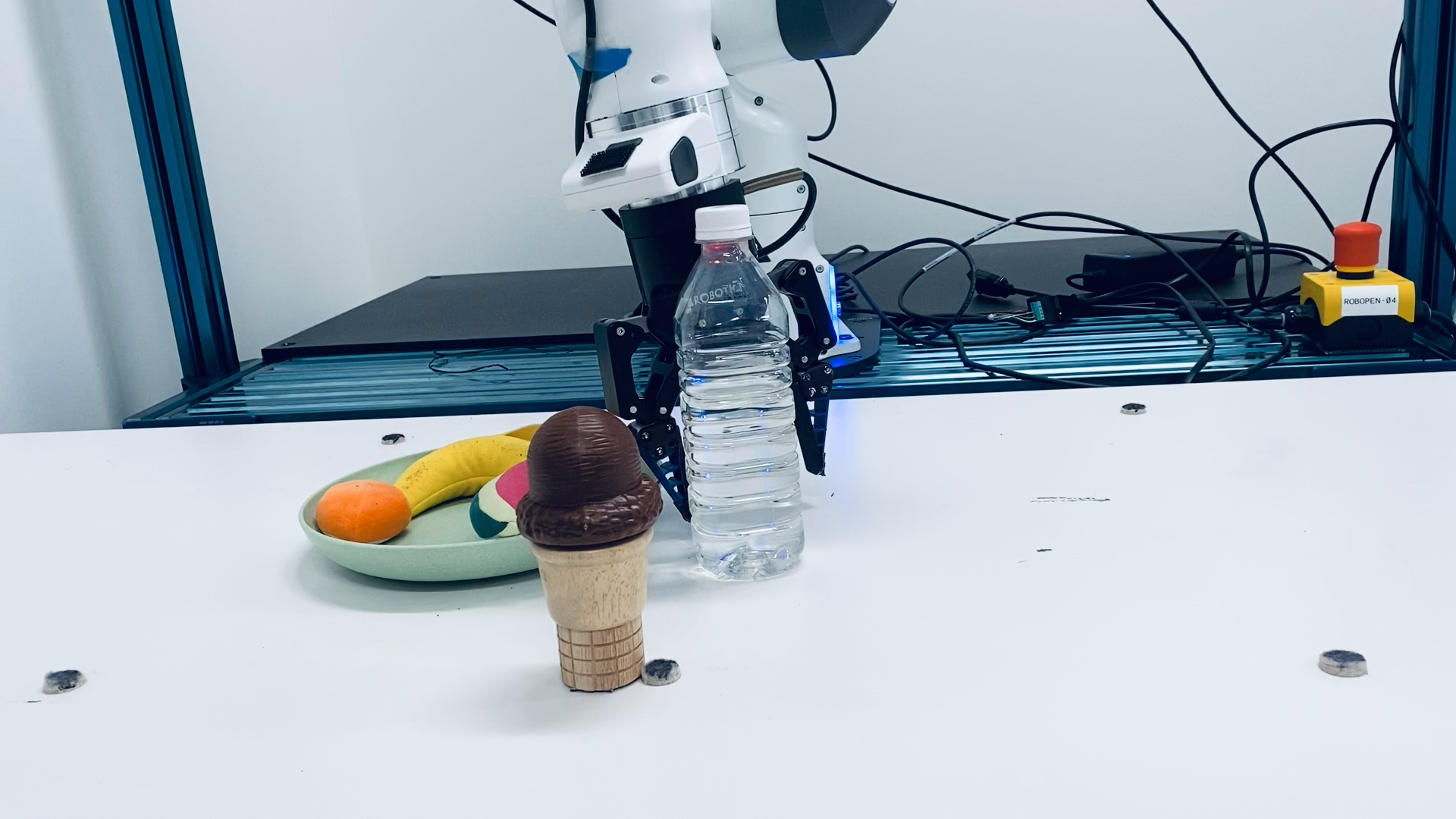}}
\subfigure[\texttt{PickPlaceMelon}]{\includegraphics[width=0.24\columnwidth]{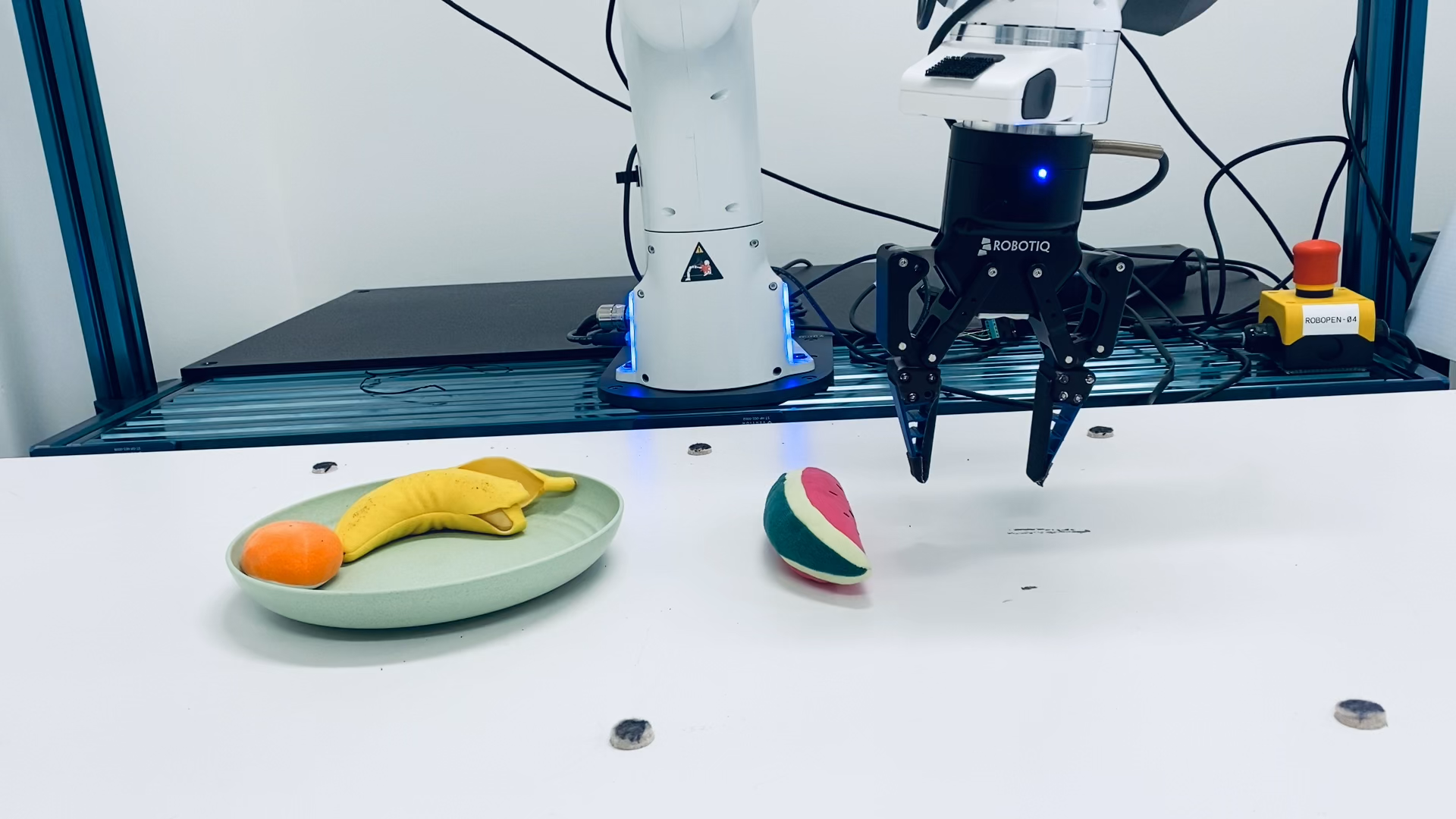}}
\subfigure[\texttt{FoldTowel}]{\includegraphics[width=0.24\columnwidth]{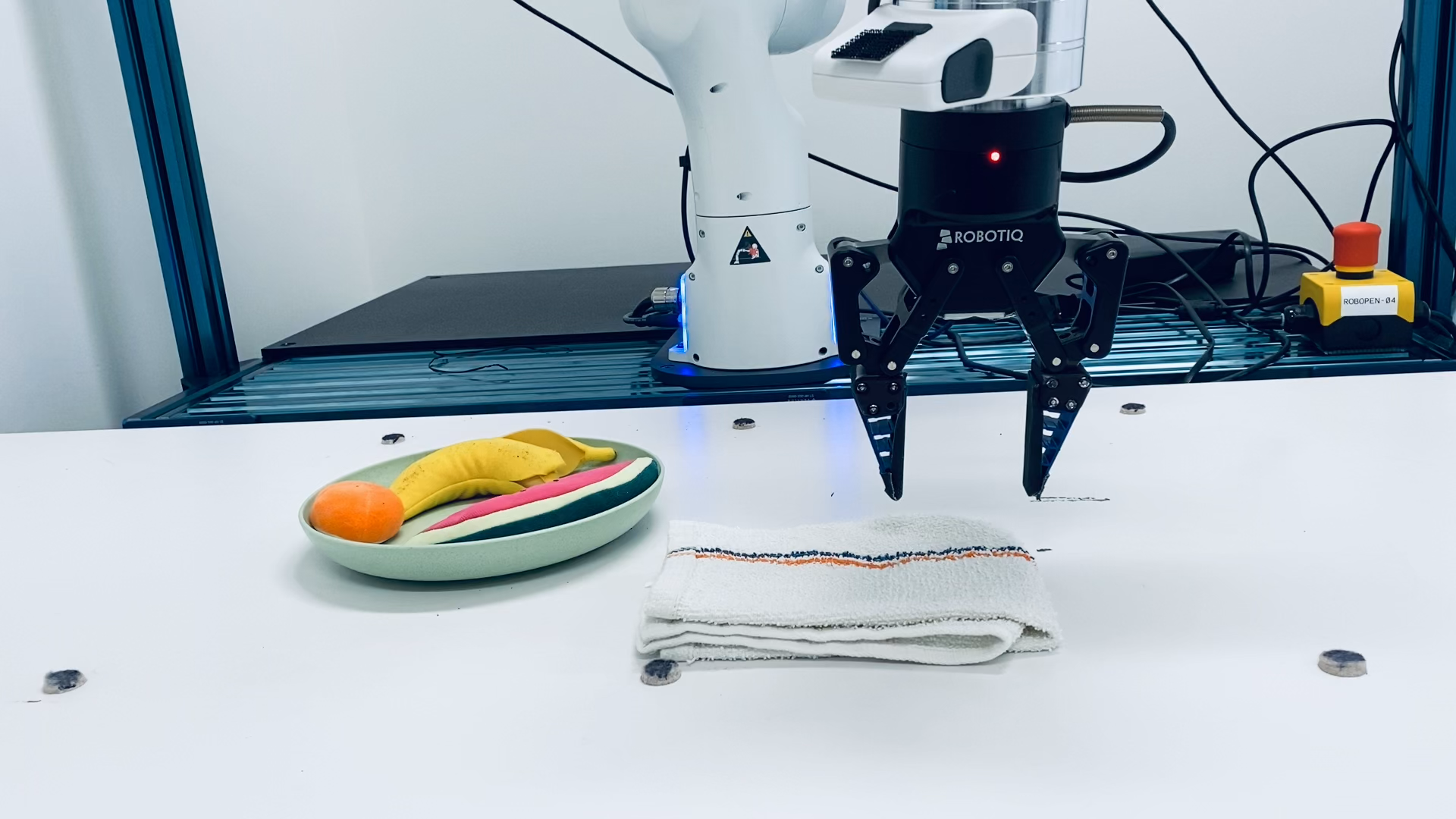}}
\caption{Side-view of real-robot tasks using a high-resolution smartphone camera.}
\label{figure:real-robot-tasks}
\end{figure}

\begin{figure}
\centering
  \begin{subfigure}[\texttt{CloseDrawer}]{
  \includegraphics[width=.16\linewidth]{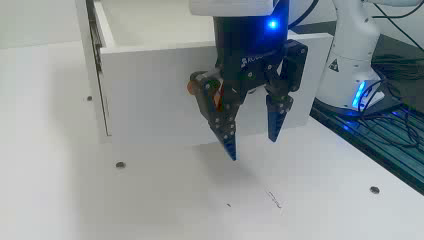}\hfill
  \includegraphics[width=.16\linewidth]{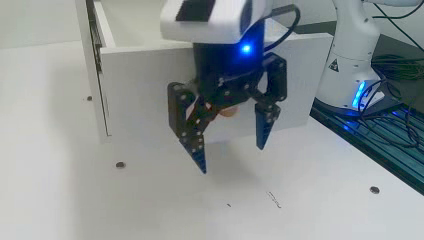}\hfill
  \includegraphics[width=.16\linewidth]{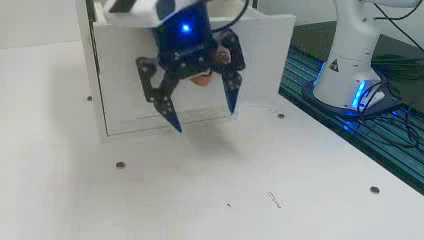}\hfill
    \includegraphics[width=.16\linewidth]{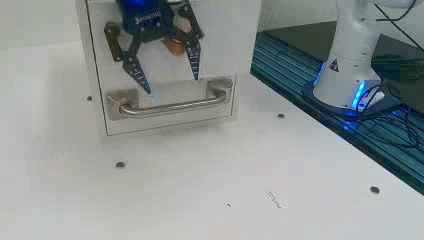}\hfill
  \includegraphics[width=.16\linewidth]{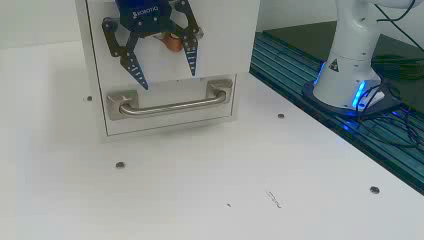}\hfill
  \includegraphics[width=.16\linewidth]{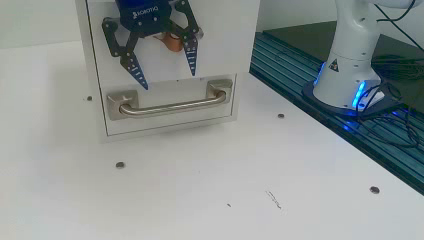}\hfill
  }
  \end{subfigure}
  \hfill 
  \begin{subfigure}[\texttt{PushBottle}]{
  \includegraphics[width=.16\linewidth]{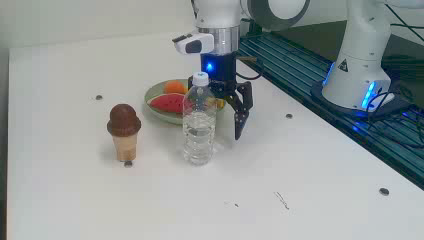}\hfill
  \includegraphics[width=.16\linewidth]{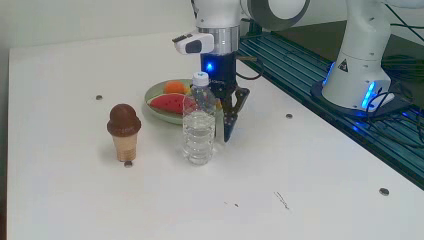}\hfill
  \includegraphics[width=.16\linewidth]{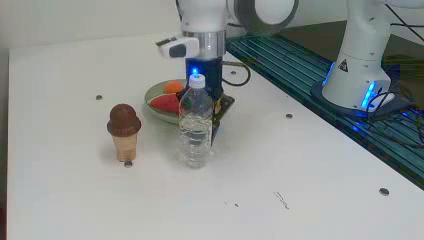}\hfill
    \includegraphics[width=.16\linewidth]{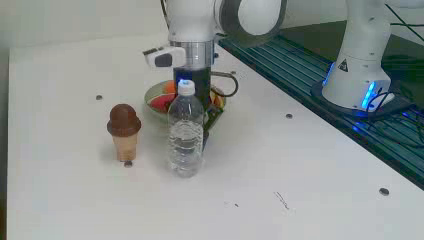}\hfill
  \includegraphics[width=.16\linewidth]{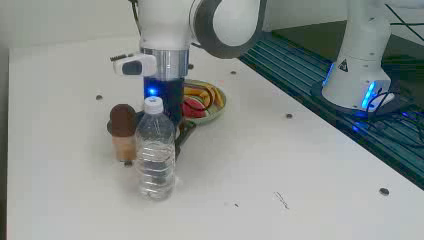}\hfill
  \includegraphics[width=.16\linewidth]{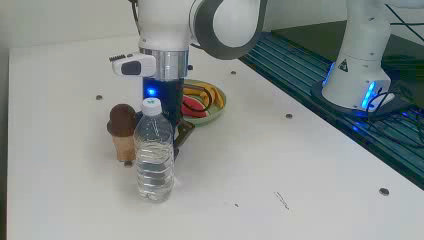}\hfill
  }
  \end{subfigure}
  \hfill 
  \begin{subfigure}[\texttt{PickPlaceMelon}]{
  \includegraphics[width=.16\linewidth]{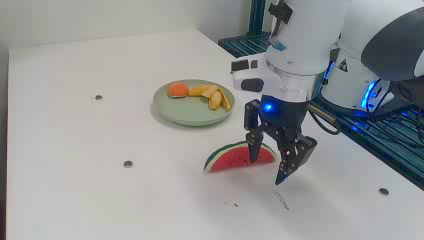}\hfill
  \includegraphics[width=.16\linewidth]{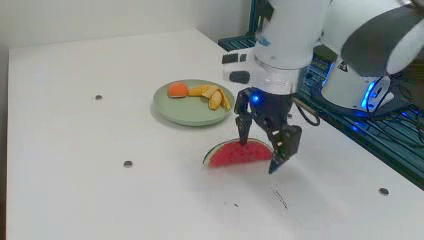}\hfill
  \includegraphics[width=.16\linewidth]{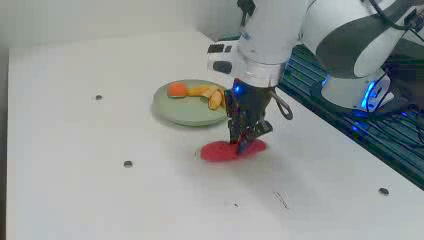}\hfill
    \includegraphics[width=.16\linewidth]{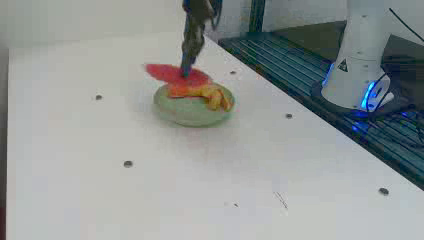}\hfill
  \includegraphics[width=.16\linewidth]{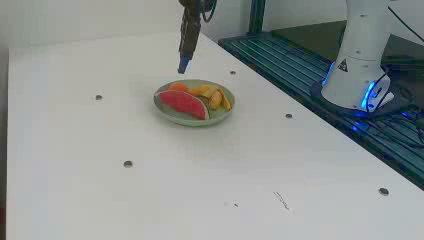}\hfill
  \includegraphics[width=.16\linewidth]{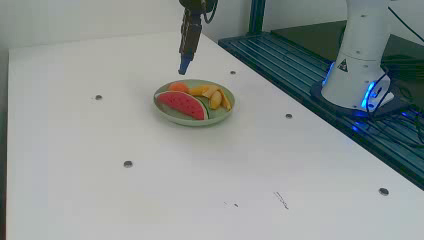}\hfill
  }
  \end{subfigure}
  \hfill 
    \begin{subfigure}[\texttt{FoldTowel}]{
  \includegraphics[width=.16\linewidth]{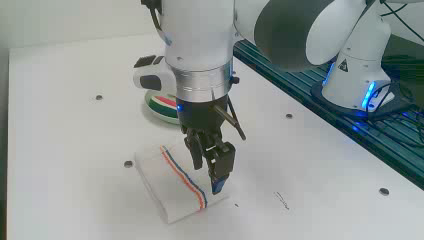}\hfill
  \includegraphics[width=.16\linewidth]{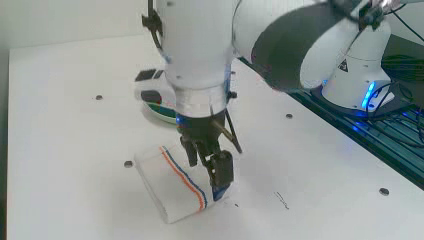}\hfill
  \includegraphics[width=.16\linewidth]{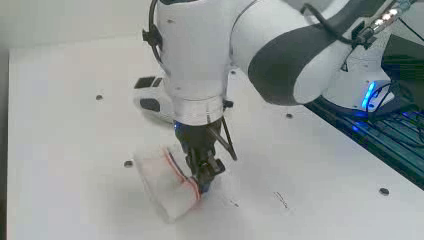}\hfill
    \includegraphics[width=.16\linewidth]{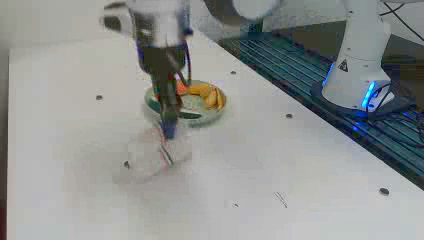}\hfill
  \includegraphics[width=.16\linewidth]{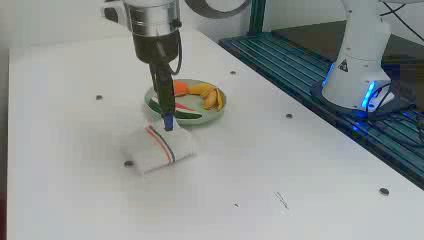}\hfill
  \includegraphics[width=.16\linewidth]{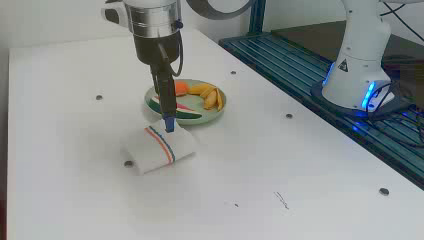}\hfill
  }
  \end{subfigure}
  \caption{Real-robot task demonstrations (every 10th frame) in robot camera view. The first and last frames in each row are representative of initial and final goal observaions for the respective task.} 
\label{figure:real-robot-demonstrations}
\end{figure}
\subsection{Training and Evaluation Details}
The policy network is implemented as a 2-layer MLP with hidden sizes $[256,256]$. As in R3M's real-world robot experiment setup, the policy takes in concatenated visual embedding of current observation and robot's proprioceptive state and outputs robot action. The policy is trained with a learning rate of 0.001, and a batch size of 32 for 20000 steps.

For RWR's temperature scale, we use $\tau=0.1$ for all tasks, except \texttt{CloseDrawer} where we find $\tau=1$ more effective for both VIP and R3M. 

For policy evaluation, we use 10 test rollouts with objects randomly initialized to reflect the object distribution in the expert demonstrations. The rollout horizon is 100 steps.

\subsection{Additional Analysis \& Context}
\para{Offline RL vs. imitation learning for real-world robot learning.} Offline RL, though known as the data-driven paradigm of RL~\citep{levine2020offline}, is not necessarily data \textit{efficient}~\citep{agarwal2021persim}, requiring hundreds of thousands of samples even in low-dimensional simulated tasks, and requires a dense reward to operate most effectively~\citep{mandlekar2021matters, yu2022leverage}. Furthermore, offline RL algorithms are significantly more difficult to implement and tune compared to BC~\citep{kumar2021workflow, zhang2021towards}. As such, the dominant paradigm of real-world robot learning is still learning from demonstrations~\citep{jang2022bc, mandlekar2018roboturk, ebert2021bridge}. With the advent of VIP-RWR, offline RL may finally be a practical approach for real-world robot learning at scale. 

\para{Performance of R3M-BC.} Our R3M-BC, though able to solve some of the simpler tasks, appears to perform relatively worse than the original R3M-BC in~\citet{nair2022r3m} on their real-world tasks. To account for this discrepancy, we note that our real-world experiment uses different software-hardware stacks and tasks from the original R3M real-world experiments, so the results are not directly comparable. For instance, camera placement, an important variable for real-world robot learning, is chosen differently in our experiment and that of R3M; in R3M, a different camera angle is selected for each task, whereas in our setup, the same camera view is used for all tasks. Furthermore, we emphasize that our focus is not the absolute performance of R3M-BC, but rather the relative improvement R3M-RWR provides on top of R3M-BC.

\subsection{Qualitative Analysis}
\label{appendix:real-robot-qualitative}
In this section, we study several interesting policy behaviors VIP-RWR acquire. Policy videos are included in our supplementary video.

\para{Robust key action execution.} VIP-RWR is able to execute key actions more robustly than the baselines; this suggests that its reward information helps it identify necessary actions. For example, as shown in Figure~\ref{figure:critical-state}, on the \texttt{PickPlaceMelon} task, failed VIP-RWR rollouts at least have the gripper grasp onto the watermelon, whereas for other baselines, the failed rollouts do not have the watermelon between the gripper and often incorrectly push the watermelon to touch the plate's outer edge, preventing pick-and-place behavior from being executed.

\begin{figure}
\centering 
\subfigure[VIP-RWR]{ \includegraphics[width=0.23\columnwidth]{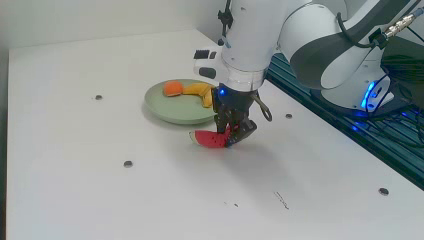}} 
\subfigure[VIP-BC]{ \includegraphics[width=0.23\columnwidth]{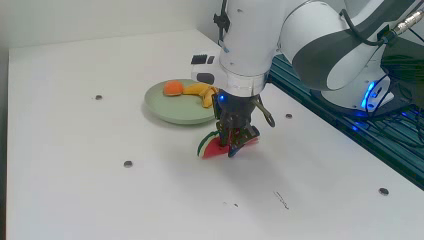}}
\subfigure[REM-RWR]{ \includegraphics[width=0.23\columnwidth]{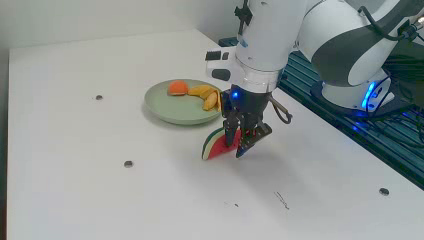}}
\subfigure[REM-RWR]{ \includegraphics[width=0.23\columnwidth]{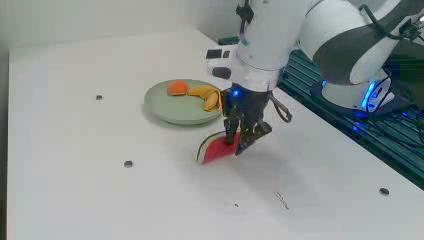}}
\caption{Comparison of failure trajectories on \texttt{PickPlaceMelon}. VIP-RWR is still able to reach the critical state of gripping watermelon, whereas baselines fail.}
\label{figure:critical-state}
\end{figure}

\para{Task re-attempt.} We observe that VIP-RWR often learns more robust policies that are able to perform recovery actions when the task is not solved on the first attempt. For instance, in both \texttt{CloseDrawer} and \texttt{FoldTowel}, there are trials where VIP-RWR fails to close the drawer all the way or pick up the towel edge right away; in either case, VIP-RWR is able to re-attempt and solves the task (see our supplementary video). This is a known advantage of offline RL over BC~\citep{kumar2022should, levine2020offline}; however, we only observe this behavior in VIP-RWR and not R3M-RWR, indicating that this advantage of offline RL is only realized when the reward information is sufficiently informative.

\section{Additional Results}
\label{appendix:additional-results}

\subsection{Comparison to MAE and MoCo trained on Ego4D}
\label{appendix:mppi-additional-comparison}
\cameraready{In this section, we compare to two additional representations trained on the same pre-training dataset of Ego4D. We compare them to VIP and R3M in the trajectory optimization setting and evaluate their performance as the optimization budget increases in the spirit of Figure~\ref{figure:traj-opt-sweep}. The result are shown in Figure~\ref{figure:mae-moco}. Consistent with the findings in the main text, 
the pre-training dataset is not the source of VIP's empirical gains; all prior state-of-art pre-training methods struggle as zero-shot reward functions. Notably, MAE uses a vision transformer~\citep{dosovitskiy2020image} architecture backbone and exhibits an improving trend in performance similar to VIP; however, MAE's absolute performance is still far inferior to VIP.} 

\begin{figure*}[t!]
\centering 
\includegraphics[width=0.5\textwidth]{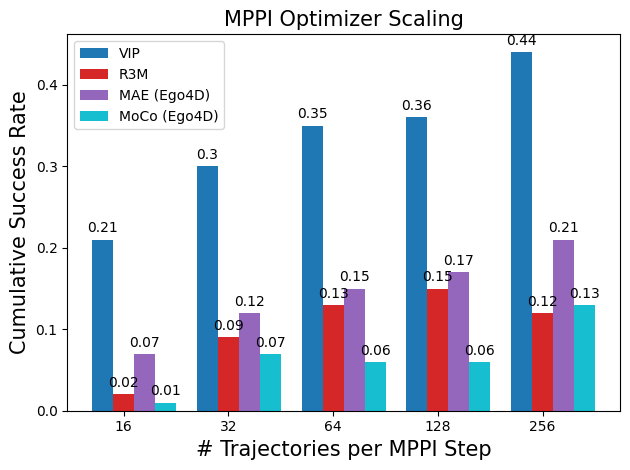}
\caption{VIP vs. Alternative Pre-Training Algorithms on Ego4D.}
\label{figure:mae-moco}
\end{figure*}

\subsection{Value-Based Pre-Training Ablation: Least-Square Temporal-Difference}
\label{appendix:lstd}
While VIP is the first value-based pre-training approach and significantly outperforms all existing methods, we show that this effectiveness is also unique to VIP and not to training a value function. To this end, we show that a simpler value-based baseline does not perform as well. In particular, we consider Least-Square Temporal-Difference policy \textit{evaluation} (\textbf{LSTD})~\citep{bradtke1996linear, sutton2018reinforcement} to assess the importance of the choice of value-training objective:
\begin{equation}
\label{eq:lstd}
    \min_\phi \BE_{(o,o',g)\sim D} \left[\left(\tilde{\delta}_g(o) + \gamma V(\phi(o');\phi(g)) - V(\phi(s),\phi(g))\right)^2\right],
\end{equation}
in which we also parameterize $V$ as the negative $L_2$ embedding distance as in VIP. Given that human videos are reasonably goal-directed, the value of the human behavioral policy computed via LSTD should be a decent choice of reward; however, LSTD does not capture the long-range dependency of initial to goal frames (first term in \eqref{eq:dual-value-problem}), nor can it obtain a value function that outperforms that of the behavioral policy. We train LSTD using the exact same setup as in VIP, differing in only the training objective, and compare it against VIP in our trajectory optimization settings. 

\begin{figure*}[t!]
\includegraphics[width=\textwidth]{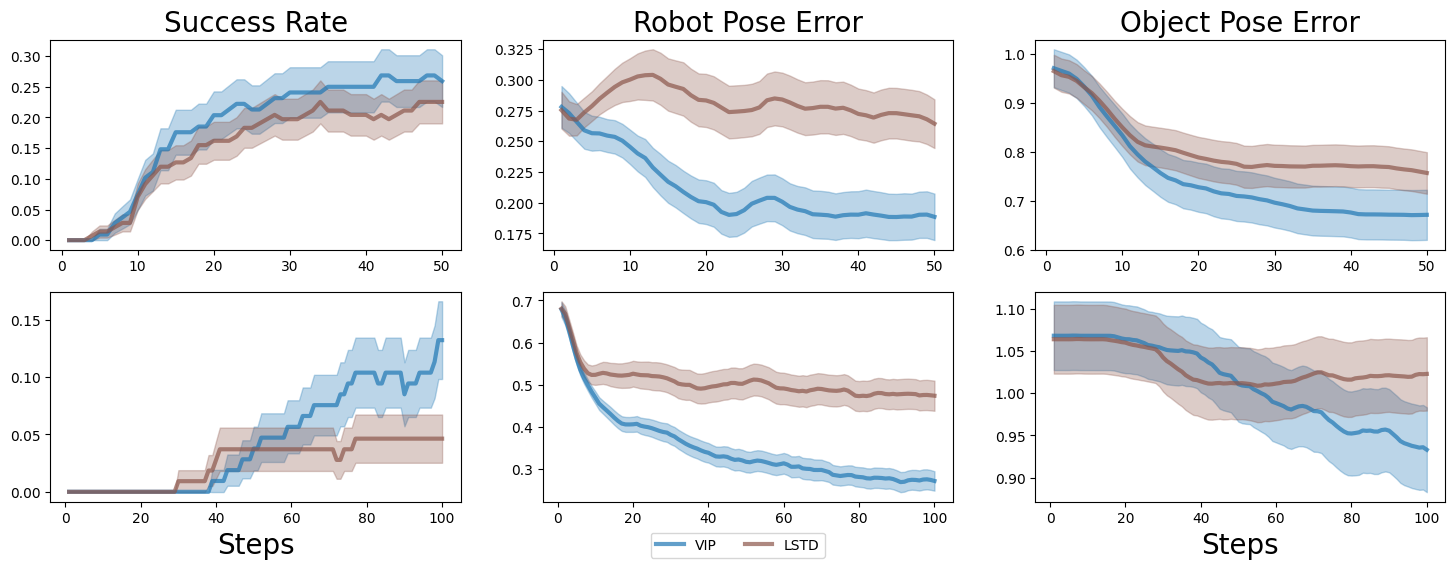}
\caption{VIP vs. LSTD Trajectory Optimization Comparison.}
\label{figure:traj-opt-results-lstd}
\end{figure*}

As shown in Fig.~\ref{figure:traj-opt-results-lstd}, interestingly, LSTD already works better than all prior baselines in the Easy setting, indicating that value-based pre-training is indeed favorable for reward-specification. However, its inability to capture long range temporal dependency as in VIP (the first term in VIP's objective) makes it far less effective on the Hard setting, which require extended smoothness in the reward landscape to solve given the distance between the initial observation and the goal. These results show that VIP's superior reward specification comes precisely from its ability to capture both long-range temporal dependencies and local temporal smoothness, two innate properties of its dual value objective and the associated implicit time contrastive learning interpretation. To corroborate these findings, we have also included LSTD in our qualitative reward curve and histogram analysis in App.~\ref{appendix:additional-reward-curves},~\ref{appendix:histogram-robot}, and~\ref{appendix:histogram-ego4d} and finds that VIP generates much smoother embedding than LSTD. 

\subsection{Visual Imitation Learning}
\label{appendix:visual-imitation-learning}
One alternative hypothesis to VIP's smoother embedding for its superior reward-specification capability is that it learns a better visual representation, which then naturally enables a better visual reward function. To investigate this hypothesis, we compare representations' capability as a pure visual encoder in a visual imitation learning setup. We follow the training and evaluation protocol of~\citep{nair2022r3m} and consider 12 tasks combined from FrankaKitchen, MetaWorld~\citep{yu2020meta}, and Adroit~\citep{rajeswaran2017learning}, 3 camera views for each task, and 3 demonstration dataset sizes, and report the aggregate average maximum success rate achieved during training. \textbf{R3M-Lang} is the publicly released R3M variant without supervised language training. The average success rates over all tasks are shown in Table~\ref{table:visual-imitation-learning}; the letter inside $()$ stands for the pre-training dataset with $E$ referring to Ego4D and $I$ Imagenet. 

\begin{table}
\footnotesize
  \caption{\small Visual Imitation Learning Results.}
\centering
\resizebox{\textwidth}{!}{
  \begin{tabular}{l|cccc|ccc}
  \toprule
  &  &   \emph{Self-Supervised}   &  &   &   & \emph{Supervised} &    \\
  \ {}  &   \ourmethod (E)  &   LSTD (E) & R3M-Lang (E)  &   MOCO (I)   &   R3M (E) & ResNet50 (I) & CLIP (Internet)  \\
  \midrule  %
  {Success Rate}  &   \textbf{53.6}   & 51.5 & 51.2 &  45.0    &   \textbf{55.9} & 41.8 & 44.3\\ 
  \bottomrule
 \end{tabular}}
  \label{table:visual-imitation-learning}
\end{table}

These results suggest that with current pre-training methods, the performance on visual imitation learning may largely be a function of the pre-training dataset, as all methods trained on Ego4D, even our simple baseline LSTD, performs comparably and are much better than the next best baseline not trained on Ego4D. Conversely, this result also suggests that despite not being designed for this purely supervised learning setting, value-based approaches constitute a strong baseline, and VIP is in fact currently the state-of-art for self-supervised methods. While these results highlight that VIP is effective even as a pure visual encoder, a necessary requirement for joint effectiveness for visual reward and representation, it fails to explain why VIP is far superior to R3M in reward-based policy learning. As such, we conclude that studying representations' capability as a pure visual encoder may not be sufficient for distinguishing representations that can additionally perform zero-shot reward-specification.

\subsection{Embedding and True Rewards Correlation}
\label{appendix:reward-correlation}
In this section, we create scatterplots of embedding reward vs. true reward on the trajectories MPPI have generated to assess whether the embedding reward is correlated with the ground-truth dense reward. More specifically, for each transition in the MPPI trajectories in Figure~\ref{figure:combined-results}, we plot its reward under the representation that was used to compute the reward for MPPI versus the true human-crafted reward computed using ground-truth state information. The dense reward in FrankaKitchen tasks is a weighted sum of (1) the negative object pose error, (2) the negative robot pose error, (3) bonus for robot approaching the object, and (4) bonus for object pose error being small. This dense reward is highly tuned and captures human intuition for how these tasks ought to be best solved. As such, high correlation indicates that the embedding is able to capture both intuitive robot-centric and object-centric task progress from visual observations. We only compare VIP and R3M here as a proxy for comparing our implicit time contrastive mechanism to the standard time contrastive learning. 

The scatterplots over all tasks and camera views (Easy setting) are shown in Figure~\ref{figure:vip-r3m-reward-correlation-part1},\ref{figure:vip-r3m-reward-correlation-part2}, and~\ref{figure:vip-r3m-reward-correlation-part3}. VIP rewards exhibit much greater correlation with the ground-truth reward on its trajectories that do accomplish task, indicating that when VIP does solve a task, it is solving the task in a way that matches \textit{human} intuition. This is made possible via large-scale value pre-training on diverse human videos, which enables VIP to extract a human notion of task-progress that transfers to robot tasks and domains. These results also suggest that VIP has the potential of \textit{replacing} manual reward engineering, providing a data-driven solution to the grand challenge of reward engineering for manipulation tasks. However, VIP is not yet perfect in its current form. Both methods exhibit local minima where high embedding distances in fact map to lower true rewards; however, this phenomenon is much severe for R3M. On 8 out of 12 tasks, VIP at least has one camera view in which its rewards are highly correlated with the ground-truth rewards on its MPPI trajectories.

\begin{figure}
\centering 
\subfigure[\texttt{ldoor\_close (left)}]{\includegraphics[width=0.3\linewidth]{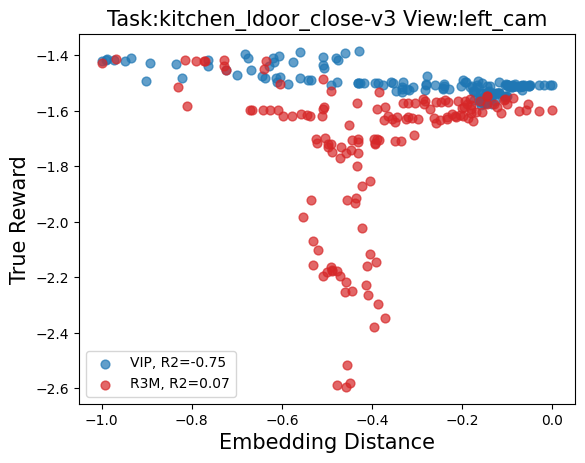}}
\hfill 
\subfigure[\texttt{ldoor\_close}]{\includegraphics[width=0.3\linewidth]{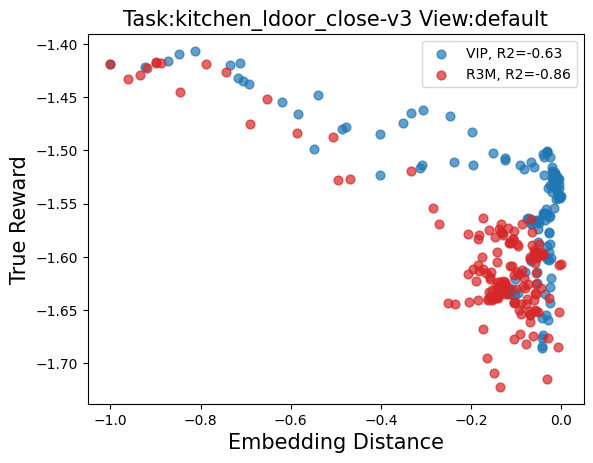}}
\hfill 
\subfigure[\texttt{ldoor\_close (right)}]{\includegraphics[width=0.3\linewidth]{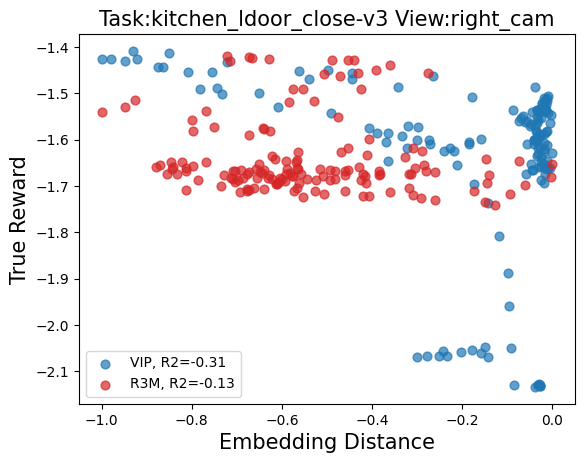}} \hfill 
\subfigure[\texttt{ldoor\_open (left)}]{\includegraphics[width=0.3\linewidth]{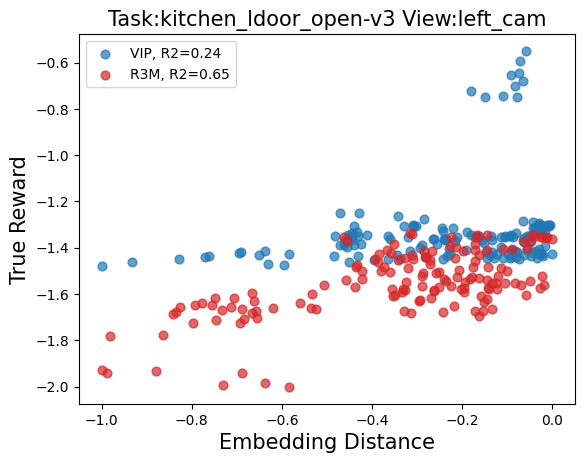}}
\hfill 
\subfigure[\texttt{ldoor\_open}]{\includegraphics[width=0.3\linewidth]{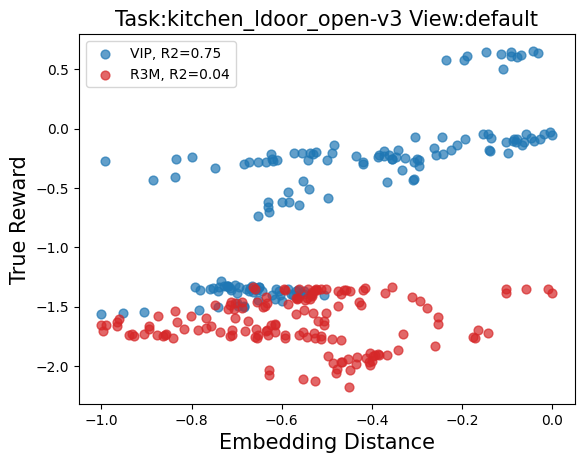}}
\hfill 
\subfigure[\texttt{ldoor\_open (right)}]{\includegraphics[width=0.3\linewidth]{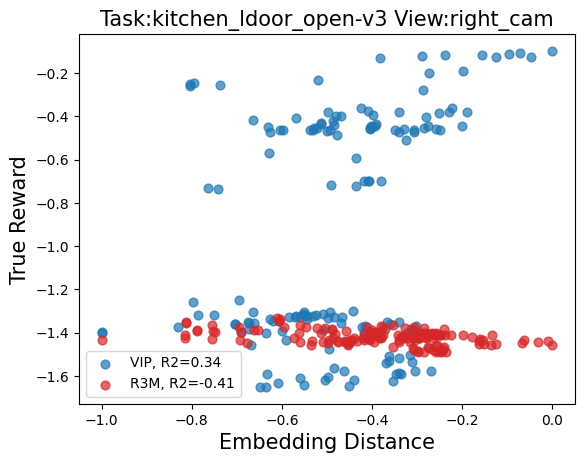}} \hfill 
\subfigure[\texttt{sdoor\_close (left)}]{\includegraphics[width=0.3\linewidth]{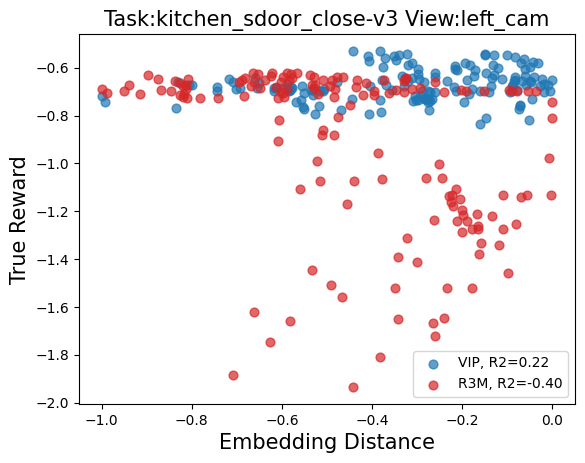}}
\hfill 
\subfigure[\texttt{sdoor\_close}]{\includegraphics[width=0.3\linewidth]{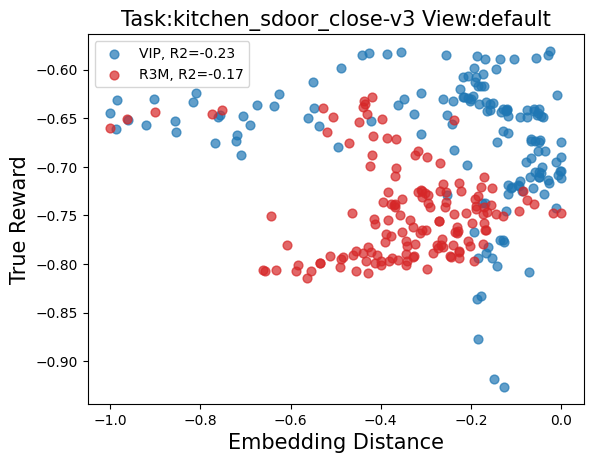}}
\hfill 
\subfigure[\texttt{sdoor\_close (right)}]{\includegraphics[width=0.3\linewidth]{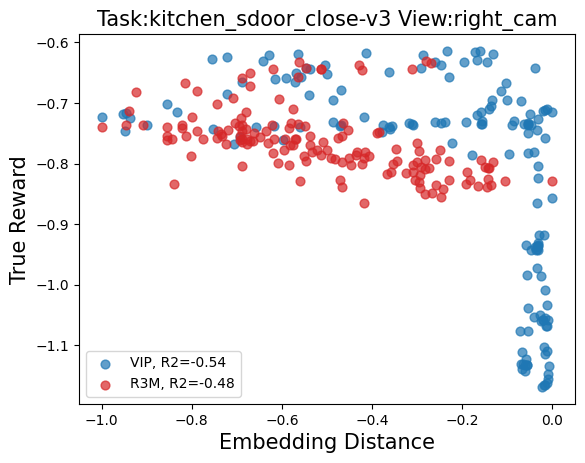}} \hfill 
\subfigure[\texttt{sdoor\_open (left)}]{\includegraphics[width=0.3\linewidth]{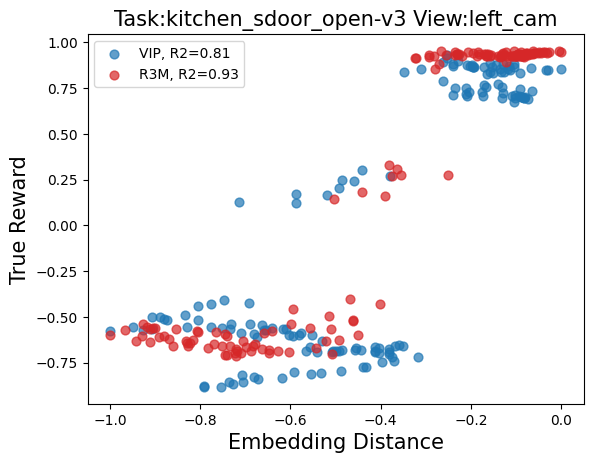}}
\hfill 
\subfigure[\texttt{sdoor\_open}]{\includegraphics[width=0.3\linewidth]{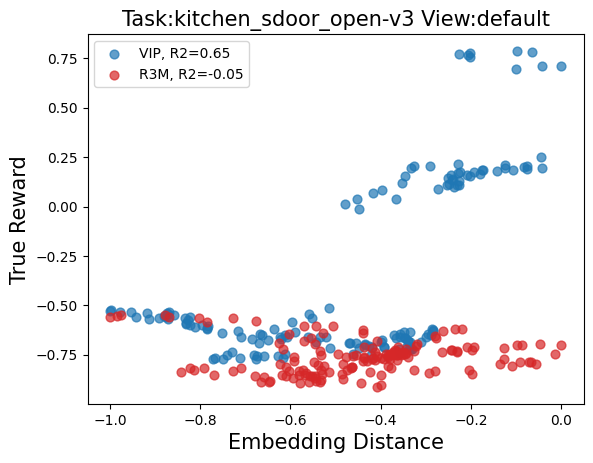}}
\hfill 
\subfigure[\texttt{sdoor\_open (right)}]{\includegraphics[width=0.3\linewidth]{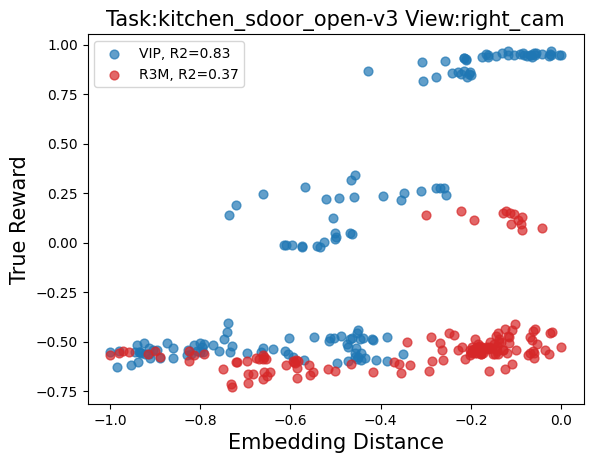}} \hfill 
\caption{Embedding reward vs. ground-truth human-engineered reward correlation (VIP vs. R3M) part 1.}
\label{figure:vip-r3m-reward-correlation-part1}
\end{figure}

\begin{figure}
    \centering
\subfigure[\texttt{rdoor\_close (left)}]{\includegraphics[width=0.3\linewidth]{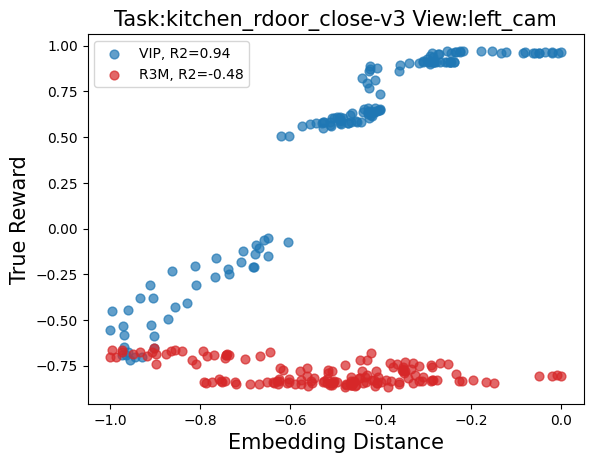}}
\hfill 
\subfigure[\texttt{rdoor\_close}]{\includegraphics[width=0.3\linewidth]{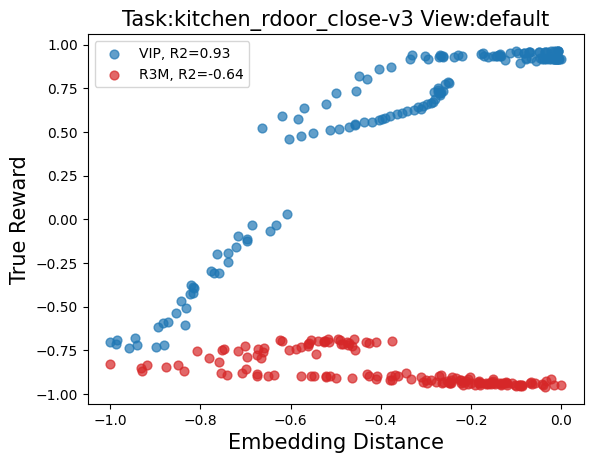}}
\hfill 
\subfigure[\texttt{rdoor\_close (right)}]{\includegraphics[width=0.3\linewidth]{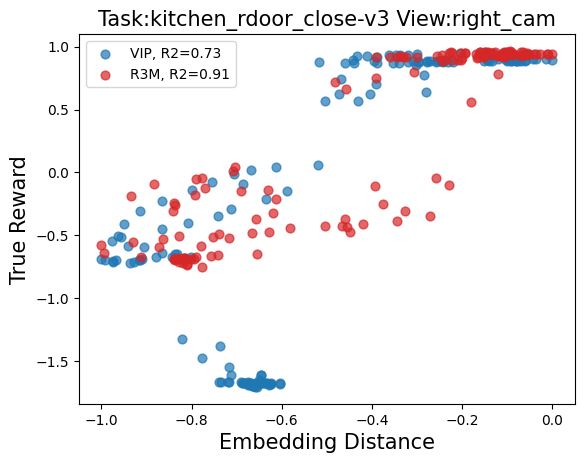}} \hfill 
\subfigure[\texttt{rdoor\_open (left)}]{\includegraphics[width=0.3\linewidth]{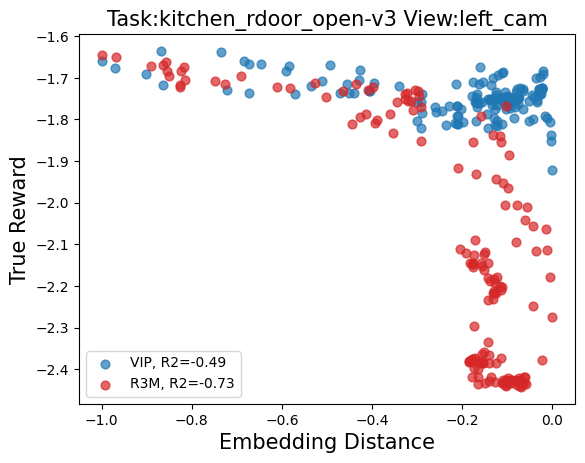}}
\hfill 
\subfigure[\texttt{rdoor\_open}]{\includegraphics[width=0.3\linewidth]{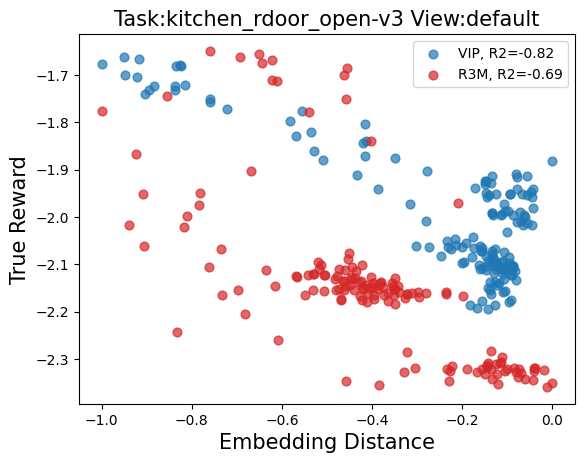}}
\hfill 
\subfigure[\texttt{rdoor\_open (right)}]{\includegraphics[width=0.3\linewidth]{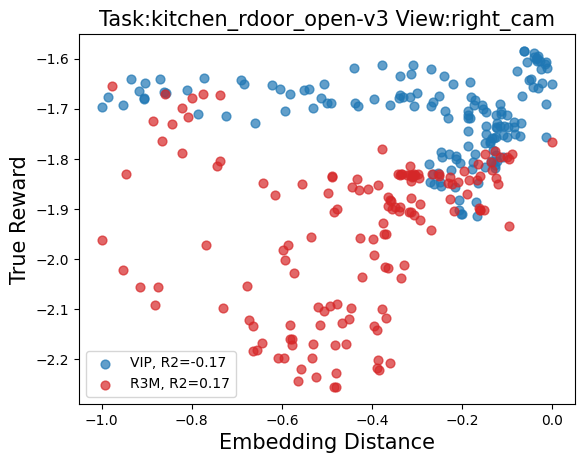}} \hfill 
\subfigure[\texttt{micro\_close (left)}]{\includegraphics[width=0.3\linewidth]{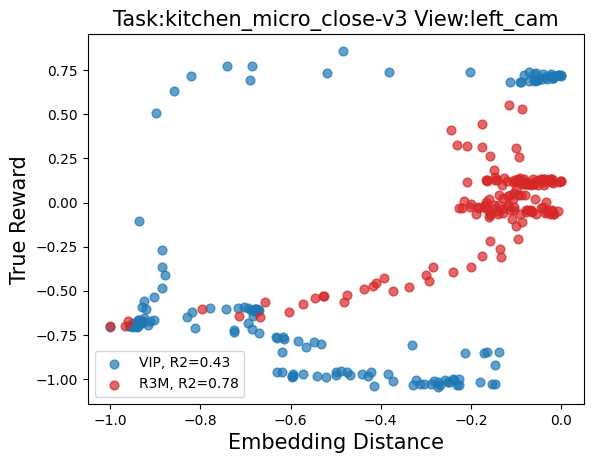}}
\hfill 
\subfigure[\texttt{micro\_close}]{\includegraphics[width=0.3\linewidth]{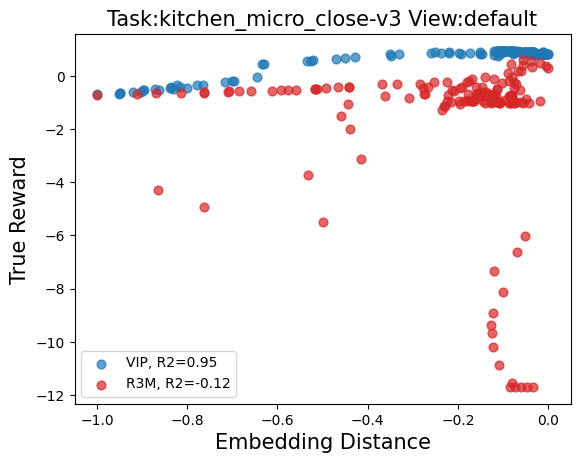}}
\hfill 
\subfigure[\texttt{micro\_close (right)}]{\includegraphics[width=0.3\linewidth]{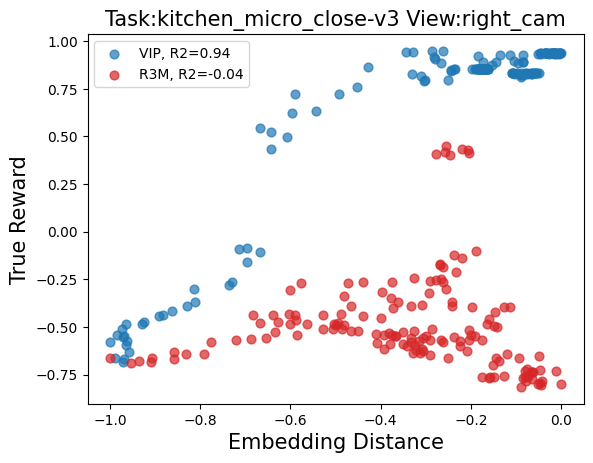}} \hfill 
\subfigure[\texttt{micro\_open (left)}]{\includegraphics[width=0.3\linewidth]{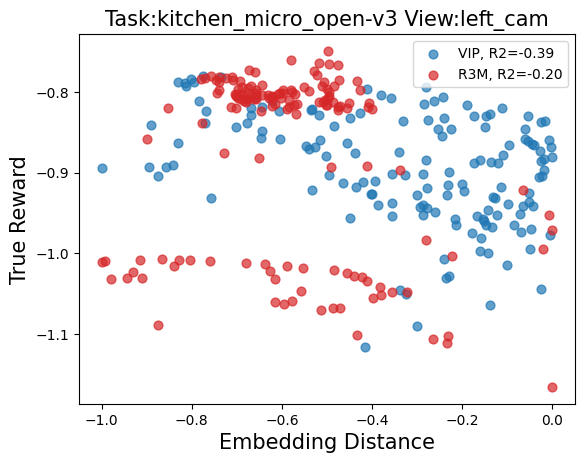}}
\hfill 
\subfigure[\texttt{micro\_open}]{\includegraphics[width=0.3\linewidth]{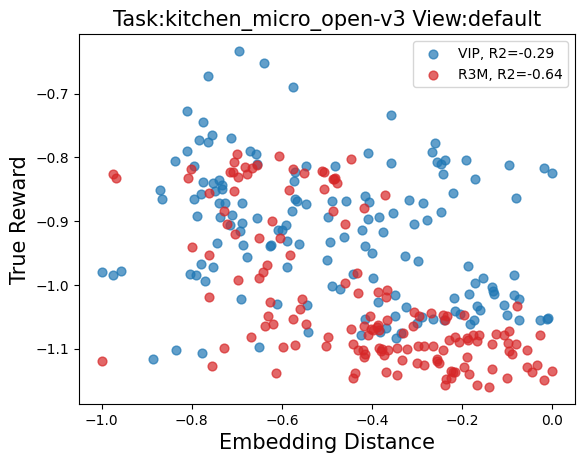}}
\hfill 
\subfigure[\texttt{micro\_open (right)}]{\includegraphics[width=0.3\linewidth]{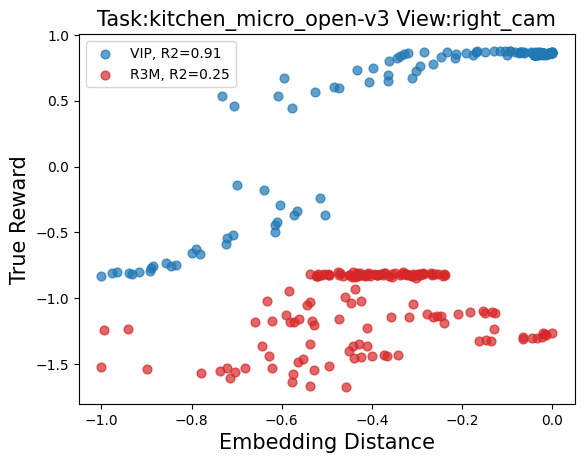}} \hfill 
\caption{Embedding reward vs. ground-truth human-engineered reward correlation (VIP vs. R3M) part 2.}
\label{figure:vip-r3m-reward-correlation-part2}
\end{figure}

\begin{figure}
\centering 
\subfigure[\texttt{knob1\_on (left)}]{\includegraphics[width=0.3\linewidth]{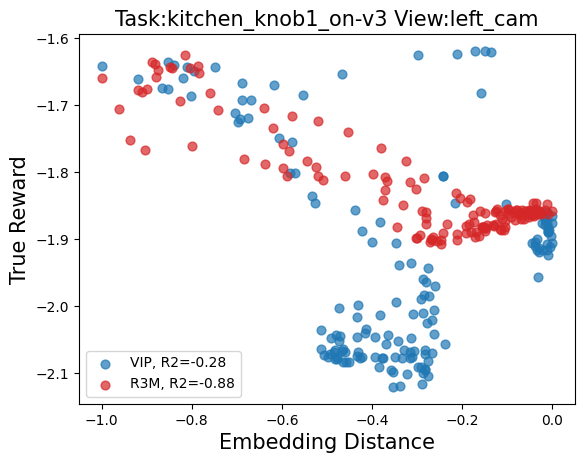}}
\hfill 
\subfigure[\texttt{knob1\_on}]{\includegraphics[width=0.3\linewidth]{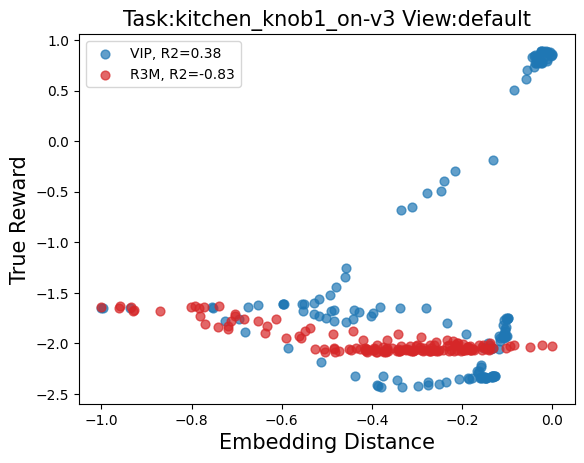}}
\hfill 
\subfigure[\texttt{knob1\_on (right)}]{\includegraphics[width=0.3\linewidth]{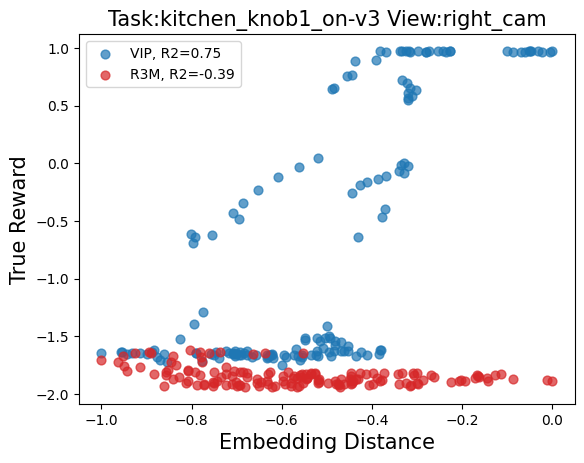}} \hfill 
\subfigure[\texttt{knob1\_on (left)}]{\includegraphics[width=0.3\linewidth]{figures/frankakitchen_scatterplot/traj_opt_scatter_kitchen_knob1_on-v3_left_cam.png}}
\hfill 
\subfigure[\texttt{knob1\_on}]{\includegraphics[width=0.3\linewidth]{figures/frankakitchen_scatterplot/traj_opt_scatter_kitchen_knob1_on-v3_default.png}}
\hfill 
\subfigure[\texttt{knob1\_on (right)}]{\includegraphics[width=0.3\linewidth]{figures/frankakitchen_scatterplot/traj_opt_scatter_kitchen_knob1_on-v3_right_cam.png}} \hfill 
\subfigure[\texttt{light\_on (left)}]{\includegraphics[width=0.3\linewidth]{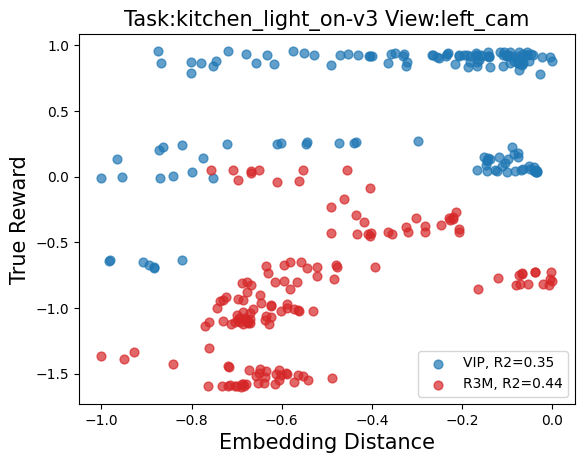}}
\hfill 
\subfigure[\texttt{light\_on}]{\includegraphics[width=0.3\linewidth]{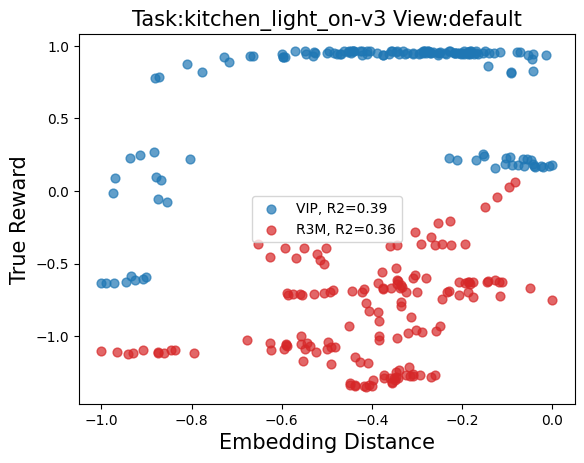}}
\hfill 
\subfigure[\texttt{light\_on (right)}]{\includegraphics[width=0.3\linewidth]{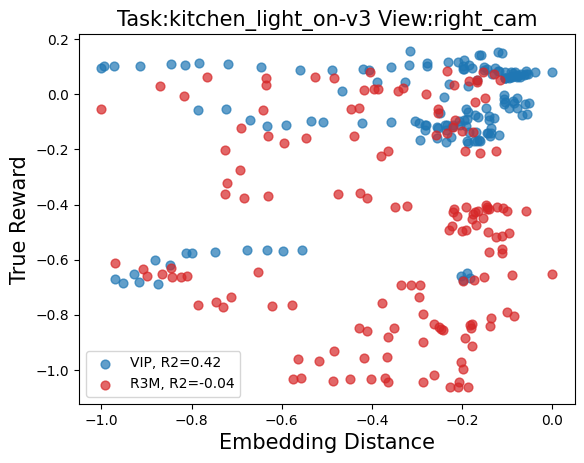}} \hfill 
\subfigure[\texttt{light\_off (left)}]{\includegraphics[width=0.3\linewidth]{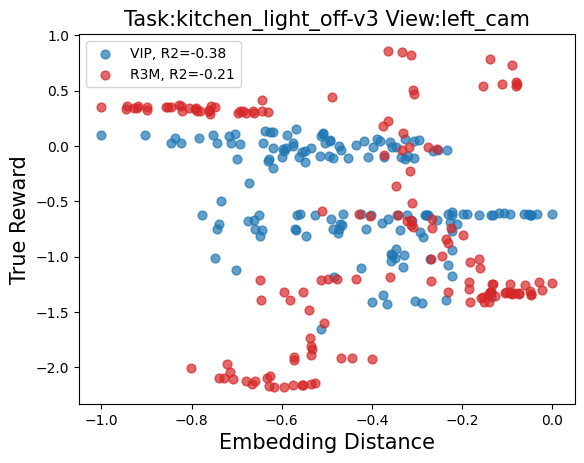}}
\hfill 
\subfigure[\texttt{light\_off}]{\includegraphics[width=0.3\linewidth]{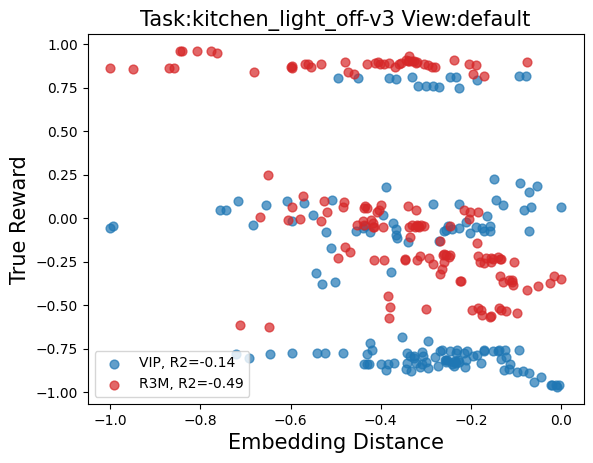}}\hfill 
\subfigure[\texttt{light\_off (right)}]{\includegraphics[width=0.3\linewidth]{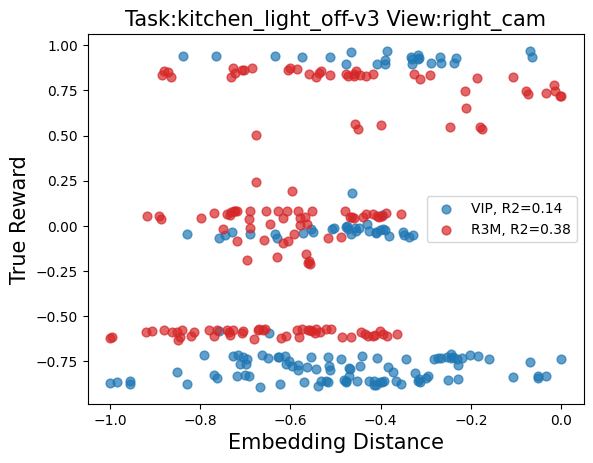}} \hfill 
\caption{Embedding reward vs. ground-truth human-engineered reward correlation (VIP vs. R3M) part 3.}
\label{figure:vip-r3m-reward-correlation-part3}
\end{figure}

\subsection{Embedding Distance Curves}
\label{appendix:additional-reward-curves}
In Figure~\ref{figure:additional-reward-curves}, we present additional embedding distance curves for all methods on Ego4D and our real-robot offline RL datasets. For Ego4D, we randomly sample 4 videos of 50-frame long (see Appendix~\ref{appendix:reward-bumps} for how these short snippets are sampled), and for our robot dataset, we compute the embedding distance curves for the 4 sample demonstrations in Figure~\ref{figure:real-robot-demonstrations}. As shown, on all tasks in the real-robot dataset, VIP is distinctively more smooth than any other representation. This pattern is less accentuated on Ego4D. This is because a randomly sampled 50-frame snippet from Ego4D may not coherently represent a task solved from beginning to completion, so an embedding distance curve is not inherently supposed to be smoothly declining. Nevertheless, VIP still exhibits more local smoothness in the embedding distance curves, and for the snippets that do solve a task (the first two videos), it stands out as the smoothest representation.

\begin{figure} 
\centering
    \begin{subfigure}[Ego4D]{
  \includegraphics[width=.24\linewidth]{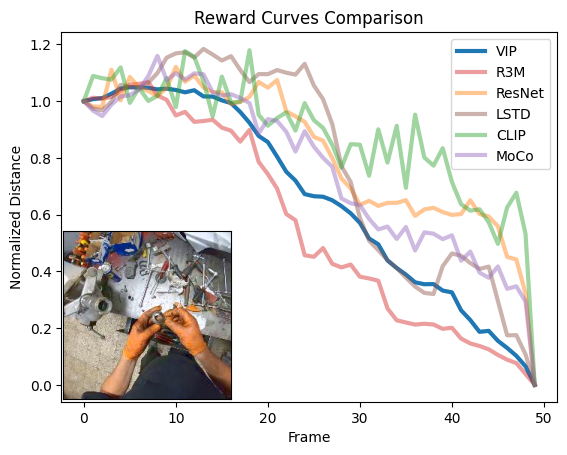}\hfill
  \includegraphics[width=.24\linewidth]{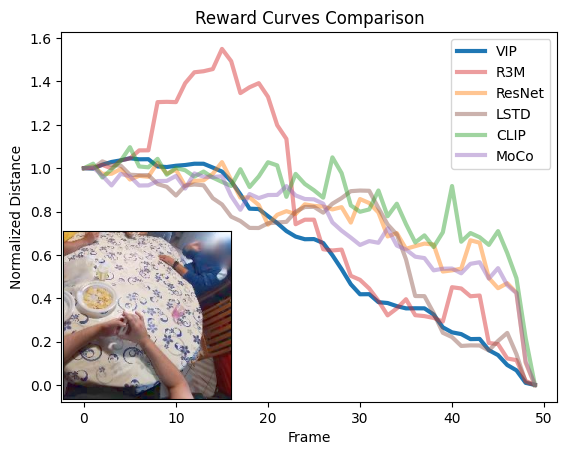}\hfill
  \includegraphics[width=.24\linewidth]{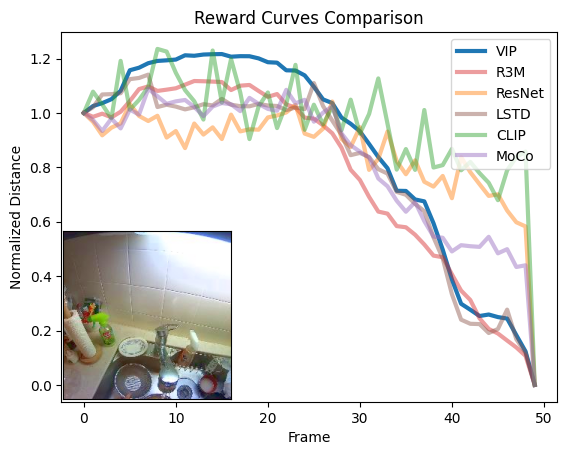}\hfill
    \includegraphics[width=.24\linewidth]{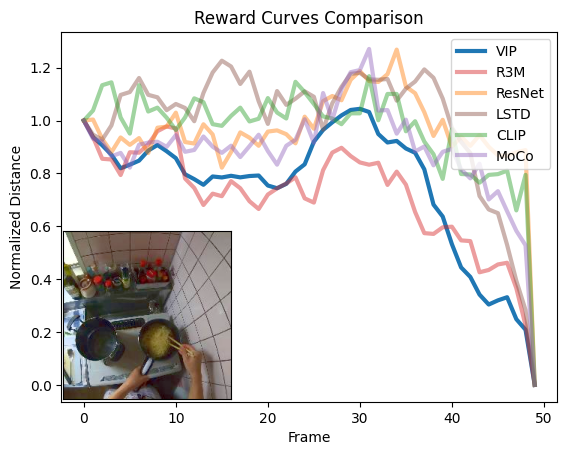}\hfill
  }
  \end{subfigure}
  \hfill 
    \begin{subfigure}[Real-robot dataset]{
  \includegraphics[width=.24\linewidth]{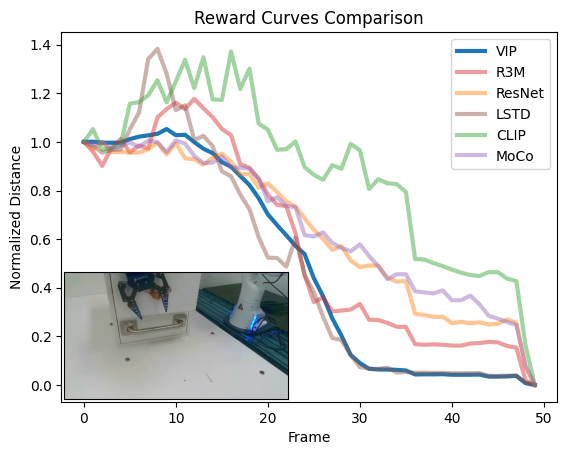}\hfill
  \includegraphics[width=.24\linewidth]{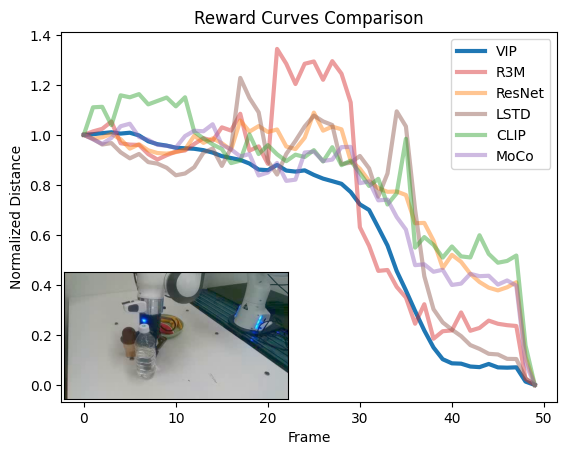}\hfill
  \includegraphics[width=.24\linewidth]{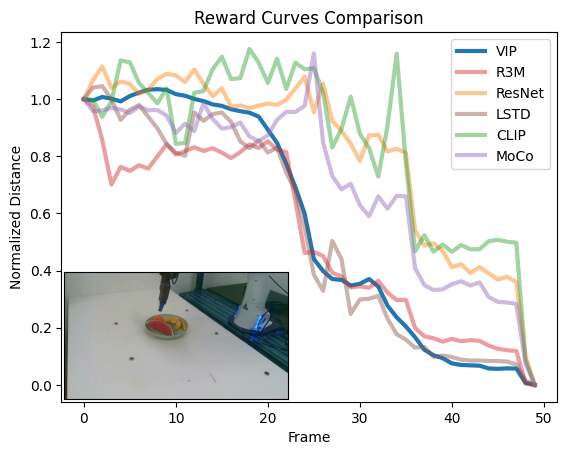}\hfill
    \includegraphics[width=.24\linewidth]{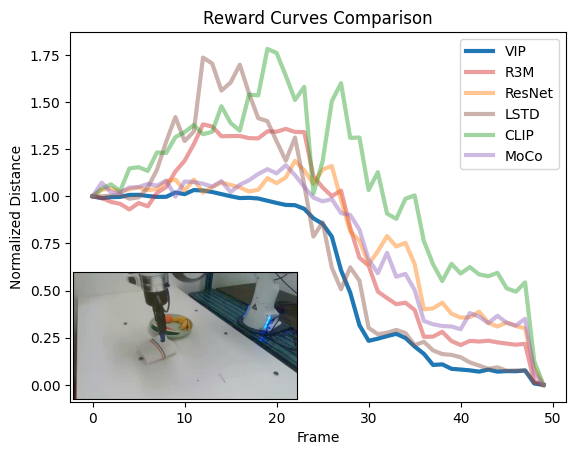}\hfill
  }
  \end{subfigure}
  \vspace{-0.3cm}
  \caption{Additional embedding distance curves on Ego4D and real-robot videos.} 
\label{figure:additional-reward-curves}
\end{figure}

\subsection{Embedding Distance Curve Bumps} 
\label{appendix:reward-bumps} 
In this section, we compute the fraction of negative embedding rewards (equivalently, positive slopes in embedding embedding distance curves) for each video sequence and average over all video sequences in a dataset. Each sequence in our robot dataset is of 50 frames, and we use each sequence without any further truncation. For Ego4D, video sequences are of variable length. For each long sequence of more than 50 frames, we use the first 50 frames. We do not include videos shorter than 50 frames, in order to make the average fraction for each representation comparable between the two distinct datasets. Note that for Ego4D, due to its in-the-wild nature, it is not guaranteed that a 50-frame segment represents one task being solved from beginning to completion, so there may be naturally bumps in the embedding distance curve computed with respect to the last frame, as earlier frames may not actually be progressing towards the last frame in a goal-directed manner.The full results are shown in Table~\ref{table:reward-bumps}. VIP has fewest bumps in Ego4D videos, and this notion of smoothness transfer to the robot dataset. Furthermore, since the robot videos are in fact visually simpler and each video is guaranteed to be solving one task, the bump rate is actually \textit{lower} despite the domain gap. While this observation generally also holds true for other representations, it notably does not hold for R3M, which is trained using standard time contrastive learning. 

\begin{table}
\footnotesize
\caption{\small Proportion of bumps in embedding distance curves.}
\centering
\resizebox{\textwidth}{!}{
  \begin{tabular}{l|ccccc}
  \toprule
  Dataset & \ourmethod (Ours)  &   R3M  &   ResNet50   &  MOCO & CLIP   \\
  \midrule  %
  Ego4D  &   \textbf{0.253} $\pm$ 0.117 & 0.309 $\pm$ 0.097 & 0.414 $\pm$ 0.052 & 0.398 $\pm$ 0.057 & 0.444 $\pm$ 0.047 \\ 
  In-House Robot Dataset & \textbf{0.243} $\pm$ 0.066  & 0.323 $\pm$ 0.076 & 0.366 $\pm$ 0.046 & 0.380 $\pm$ 0.052 & 0.438 $\pm$ 0.046 \\  
  \bottomrule
 \end{tabular}}
  \label{table:reward-bumps}
\end{table}

\subsection{Embedding Reward Histograms (Real-Robot Dataset)}
\label{appendix:histogram-robot} 
We present the reward histogram comparison against all baselines in Figure~\ref{figure:histogram-robot-all}. The trend of VIP having more small, positive rewards and fewer extreme rewards in either direction is consistent across all comparisons.

\begin{figure}
\centering 
\subfigure[VIP vs. R3M]{ \includegraphics[width=0.49\textwidth]{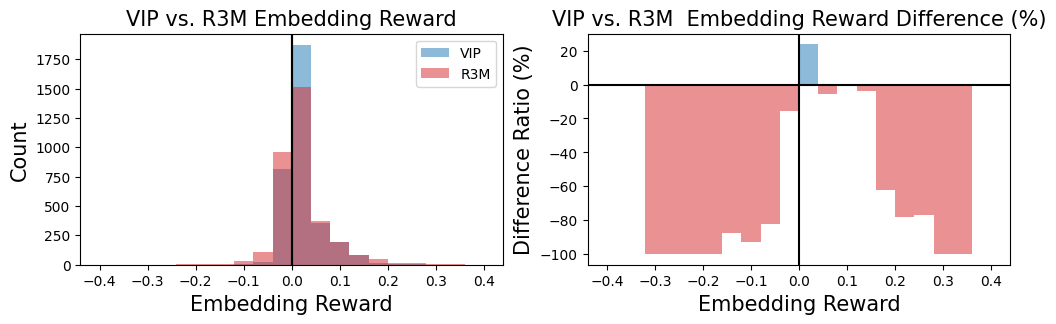}}
\subfigure[VIP vs. ResNet]{ \includegraphics[width=0.49\textwidth]{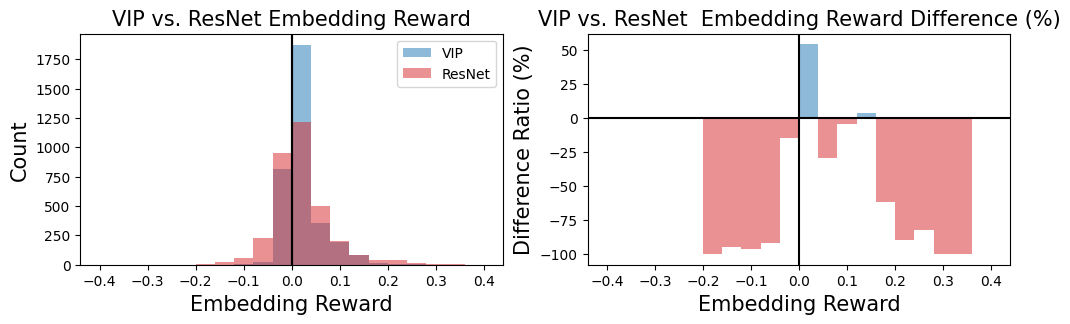}}
\subfigure[VIP vs. MoCo]{ \includegraphics[width=0.49\textwidth]{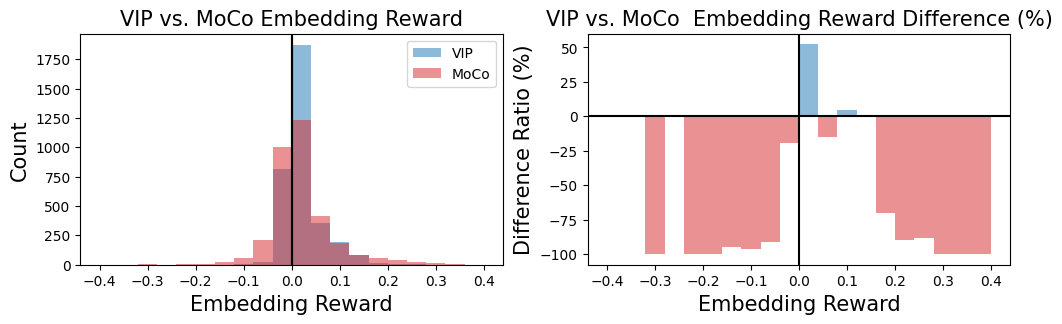}}
\subfigure[VIP vs. CLIP]{ \includegraphics[width=0.49\textwidth]{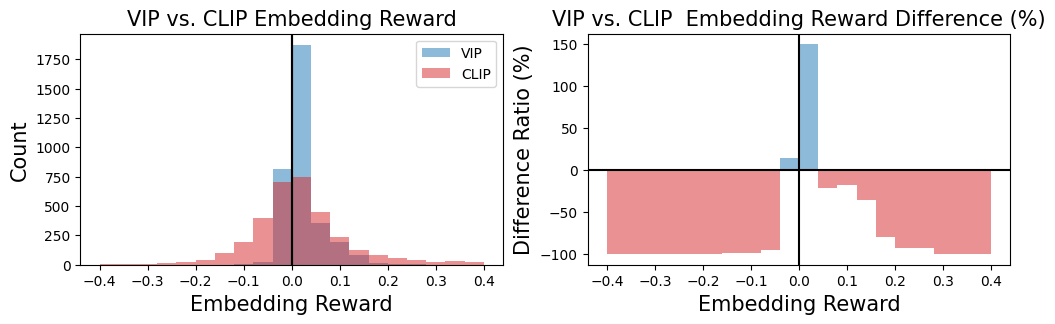}}
\centering 
\subfigure[VIP vs. LSTD]{ \includegraphics[width=0.49\textwidth]{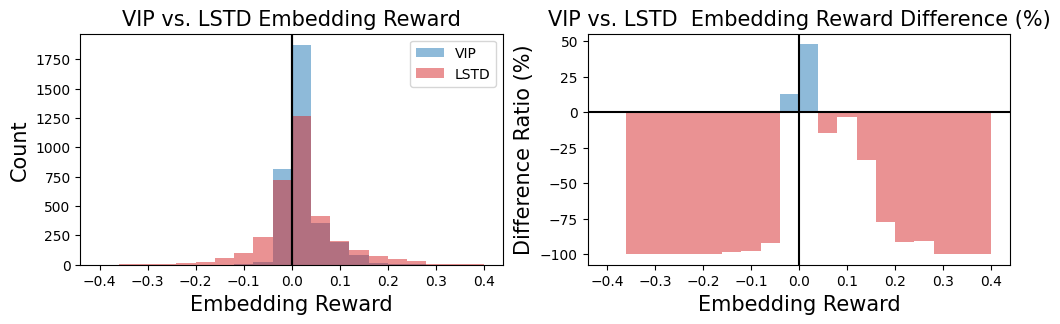}}
\caption{Embedding reward histogram comparison on real-robot dataset.}
\label{figure:histogram-robot-all}
\end{figure}

\subsection{Embedding Reward Histograms (Ego4D)}
\label{appendix:histogram-ego4d}
We present the reward histogram comparison against all baselines in Figure~\ref{figure:histogram-ego4d-all}.
The histograms are computed using the same set of 50-frame Ego4D video snippets as in Appendix~\ref{appendix:reward-bumps}. The y-axis is in log-scale due to the large total count of Ego4D frames. As discussed, Ego4D video segments are less regular than those in our real-robot dataset, and this irregularity contributes to all representations having significantly more negative rewards compared to their histograms on the real-robot dataset. Nevertheless, the relative difference ratio's pattern is consistent, showing VIP having far more rewards that lie in the first positive bin. Furthermore, VIP also has significantly fewer extreme negative rewards compared to all baselines. 

\begin{figure}
\centering 
\subfigure[VIP vs. R3M]{ \includegraphics[width=0.49\textwidth]{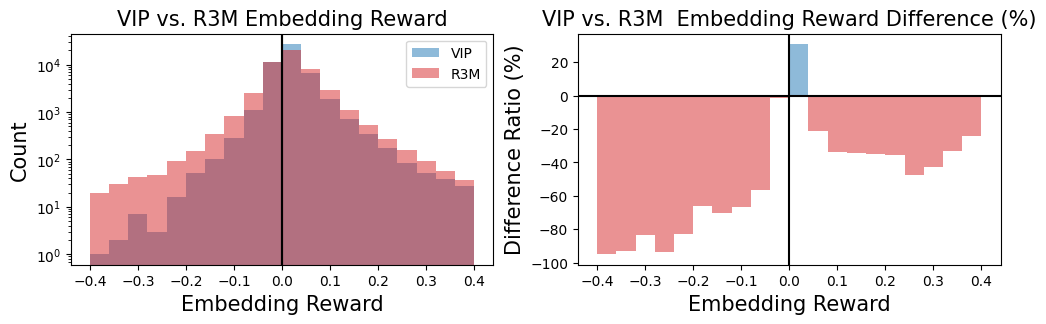}}
\subfigure[VIP vs. ResNet]{ \includegraphics[width=0.49\textwidth]{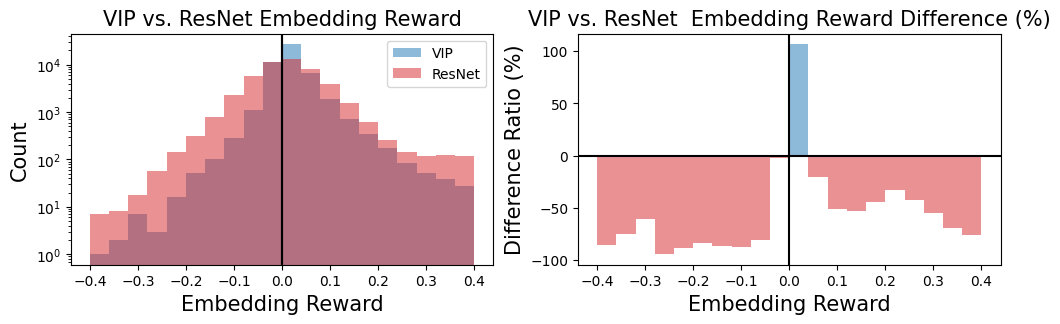}}
\subfigure[VIP vs. MoCo]{ \includegraphics[width=0.49\textwidth]{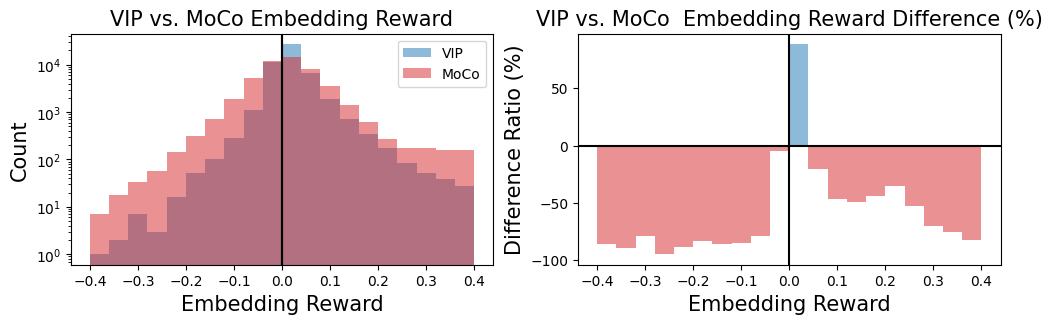}}
\subfigure[VIP vs. CLIP]{ \includegraphics[width=0.49\textwidth]{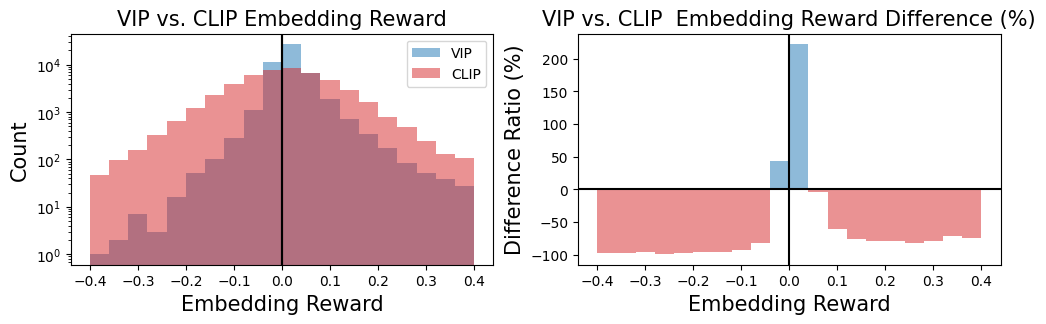}}
\centering 
\subfigure[VIP vs. LSTD]{ \includegraphics[width=0.49\textwidth]{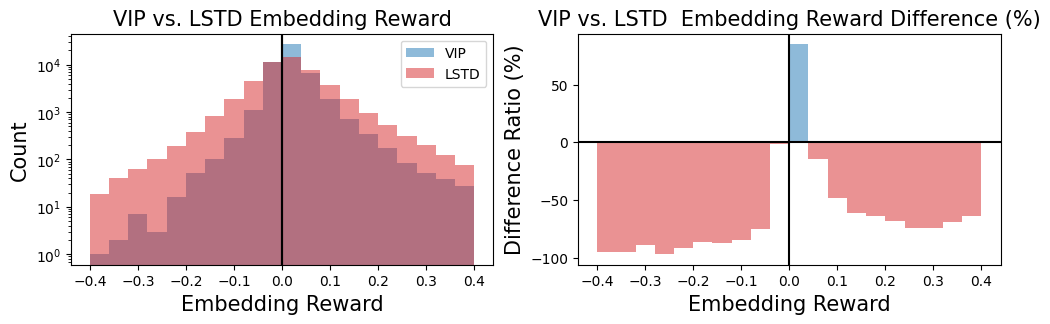}}
\caption{Embedding reward histogram comparison on Ego4D videos.}
\label{figure:histogram-ego4d-all}
\end{figure}

\section{Limitations and Future Work}
\label{appendix:limitation}
In this section, we describe limitations within the current VIP formulation and model and some potential future directions. 

VIP is currently limited to providing rewards for tasks that can be specified via a goal image. While this encompasses a wide range of robotics tasks, many tasks cannot be fully expressed via a static image, such as ones that require following intermediate instructions and steps. Likewise, though not a strict assumption, we have only tested VIP with visual goals from the same domain (robots are not necessarily in the goal image). Extending VIP to be compatible with even more flexible and extensive forms of goals is a fruitful direction for expanding VIP's capability. 

VIP current parameterizes the value function as a \textit{symmetric} embedding distance; this assumes that the environment is reversible (i.e., it is equally easy to get from $o$ to $g$ and from $g$ to $o$), which may not hold in practice. While we did not observe this to affect practical performance, we may improve performance by parameterizing $V(o;g)$ as some distance function that supports asymmetrical structures, such as quasimetrics. Extending VIP with recent method~\citep{wang2022learning} that can learn quasimetrics with finite data may be a fruitful future direction.

We have also used VIP only as a \textit{frozen} visual reward and representation module to test its broad generalization capability. Better \textit{absolute} task performance may be achieved by fine-tuning VIP on task-specific data. Exploring how to best fine-tune VIP is a promising direction for pushing VIP's limit.

Finally, we have focused on robot manipulation tasks in this work, but VIP's training objective can also be used for pre-training reward and representation for other goal-directed tasks, such as visual navigation~\citep{Savva_2019_ICCV}. Exploring how VIP can be used to solve these other embodied AI tasks is also a promising avenue of future work.

\end{document}